\documentclass{article}

% if you need to pass options to natbib, use, e.g.:
%     \PassOptionsToPackage{numbers, compress}{natbib}
% before loading neurips_2020

% ready for submission
% \usepackage{neurips_2020}

% to compile a preprint version, e.g., for submission to arXiv, add add the
% [preprint] option:
%     \usepackage[preprint]{neurips_2020}

% to compile a camera-ready version, add the [final] option, e.g.:
     \usepackage[final]{neurips_2020}

% to avoid loading the natbib package, add option nonatbib:
%   \usepackage[nonatbib]{neurips_2020}

\usepackage[utf8]{inputenc} % allow utf-8 input
\usepackage[T1]{fontenc}    % use 8-bit T1 fonts
\usepackage{url}            % simple URL typesetting
\usepackage{booktabs}       % professional-quality tables
\usepackage{amsfonts}       % blackboard math symbols
\usepackage{nicefrac}       % compact symbols for 1/2, etc.
\usepackage{microtype}      % microtypography
\usepackage{url}

% my packages

\usepackage{multirow} 
\usepackage[table,dvipsnames]{xcolor}
\usepackage{colortbl}

\usepackage{mathtools}
\usepackage{xcolor}
\newcommand*{\colorboxed}{}
\def\colorboxed#1#{%
	\colorboxedAux{#1}%
}
\newcommand*{\colorboxedAux}[3]{%
	% #1: optional argument for color model
	% #2: color specification
	% #3: formula
	\begingroup
	\colorlet{cb@saved}{.}%
	\color#1{#2}%
	\boxed{%
		\color{cb@saved}%
		#3%
	}%
	\endgroup
}

% For algorithms
\usepackage{amsmath,amssymb}
\usepackage{natbib}
\usepackage{algorithm}
\usepackage{algorithmic}
\usepackage{amsthm}

\usepackage{xcolor,colortbl}%%table
\definecolor{DarkBlue}{RGB}{22,54,93}%%table

\usepackage[colorlinks = true]{hyperref}        
\usepackage{cleveref}
\hypersetup{linkcolor =red, citecolor = blue}
\usepackage{subfigure}
\newtheorem{Theorem}{Theorem}
\newtheorem{proposition}{Proposition} 
\newtheorem{lemma}{Lemma} 
\newtheorem{Assumption}{Assumption} 
 
\newtheorem{Definition}{Definition}

\title{Convergence of Meta-Learning with Task-Specific Adaptation over Partial Parameters}

% The \author macro works with any number of authors. There are two commands
% used to separate the names and addresses of multiple authors: \And and \AND.
%
% Using \And between authors leaves it to LaTeX to determine where to break the
% lines. Using \AND forces a line break at that point. So, if LaTeX puts 3 of 4
% authors names on the first line, and the last on the second line, try using
% \AND instead of \And before the third author name.

\author{%
  Kaiyi Ji\\
%  \thanks{Use footnote for providing further information
%    about author (webpage, alternative address)---\emph{not} for acknowledging
%    funding agencies.} \\
  Department of ECE\\
  The Ohio State University\\
% Columbus, OH 43210 \\
  \texttt{ji.367@osu.edu} \\
  % examples of more authors
   \And
Jason D. Lee \\
  Department of EE\\
  Princeton University \\
%  Princeton \\
   \texttt{jasonlee@princeton.edu} \\
   \AND
  Yingbin Liang \\
  Department of ECE\\
  The Ohio State University\\
   \texttt{liang.889@osu.edu} \\
  \And
  H. Vincent Poor\\
    Department of EE\\
Princeton University \\
  % Address \\
  \texttt{poor@princeton.edu} \\
  % \And
  % Coauthor \\
  % Affiliation \\
  % Address \\
  % \texttt{email} \\
}

\begin{document}

\maketitle

\begin{abstract}
Although model-agnostic meta-learning (MAML) is a very successful algorithm in meta-learning practice, it can have high computational cost because it updates all model parameters over both the inner loop of task-specific adaptation and the outer-loop of meta initialization training. A more efficient algorithm ANIL (which refers to almost no inner loop) was proposed recently by Raghu et al. 2019, which adapts only a small subset of parameters in the inner loop and thus has substantially less computational cost than MAML as demonstrated by extensive experiments. However, the theoretical convergence of ANIL has not been studied yet. In this paper, we characterize the convergence rate and the computational complexity for ANIL under two representative inner-loop loss geometries, i.e., strongly-convexity and nonconvexity. Our results show that such a geometric property can significantly affect the overall convergence performance of ANIL. For example, ANIL achieves a faster convergence rate for a strongly-convex inner-loop loss as the number $N$ of inner-loop gradient descent steps increases, but a slower convergence rate for a nonconvex inner-loop loss as $N$ increases. Moreover, our complexity analysis provides a theoretical quantification on the improved efficiency of ANIL over MAML. The experiments on standard few-shot meta-learning benchmarks validate our theoretical findings. 
%Model-agnostic meta-learning (MAML) is an especially  successful meta-learning algorithm due to the simplicity and effectiveness, and has inspired many follow-up derivatives.  By empirically finding that some parameters of MAML are nearly unchanged during the task-specific adaptation, Raghu et al.~\cite{raghu2019rapid}  proposed a highly efficient generalization of MAML by adapting only a subset of model parameters at the inner loop without performance loss. However, the convergence of ANIL model still remains unknown. In this paper,  we provide the first-known convergence analysis for ANIL for two existing  inner-loop loss geometries, i.e., strongly-convexity and nonconvexity,  
%and further characterize how hyper-parameters such as the number $N$ of inner-loop gradient descent steps affect the convergence rate and complexity of ANIL under each geometry. In particular, our analysis uncovers an interesting phenomenon that ANIL achieves a faster convergence rate for a strongly-convex inner-loop loss as $N$ increases, but a slower convergence for a nonconvex inner-loop loss as $N$ increases. In addition, our complexity analysis provides a justification and quantization on the improved efficiency of ANIL over MAML.  
%The experiments on standard few-shot meta-learning benchmarks validate our theoretical findings. 
\end{abstract}

\section{Introduction} 
%\vspace{-2mm}

As a powerful learning paradigm, meta-learning~\citep{bengio1991learning,thrun2012learning} has recently received significant attention, especially with the incorporation of training deep neural networks~\citep{finn2017model,vinyals2016matching}. Differently from the conventional learning approaches, meta-learning aims to effectively leverage the datasets and prior knowledge of a task ensemble in order to rapidly learn new tasks often with a small amount of data such as in few-shot learning. A broad collection of meta-learning algorithms have been developed so far, which range from metric-based~\citep{koch2015siamese,snell2017prototypical}, model-based~\citep{munkhdalai2017meta,vinyals2016matching}, to optimization-based algorithms~\citep{finn2017model,nichol2018reptile}. The focus of this paper is on the optimization-based approach, which is often easy to be integrated with optimization formulations of many machine learning problems.

One highly successful optimization-based meta-learning approach is the model-agnostic meta-learning (MAML) algorithm~\citep{finn2017model}, which has been applied to many application domains including classification~\citep{rajeswaran2019meta}, reinforcement learning~\citep{finn2017model}, imitation learning~\citep{finn2017one}, etc.  
%achieved great success due to its simplicity and effectiveness. 
At a high level, the MAML algorithm consists of two optimization stages: the inner loop of task-specific adaptation and the outer (meta) loop of initialization training. Since the outer loop often adopts a gradient-based algorithm, which takes the gradient over the inner-loop algorithm (i.e., the inner-loop optimization path), even the simple inner loop of gradient descent updating can result in the Hessian update in the outer loop, which causes significant computational and memory cost. Particularly in deep learning, if all neural network parameters are updated in the inner loop, then the cost for the outer loop is extremely high. Thus, designing simplified MAML, especially the inner loop, is highly motivated. ANIL (which stands for {\em almost no inner loop}) proposed in~\cite{raghu2019rapid} has recently arisen as such an appealing approach. In particular,
% it is observed that most inner-layer neural network parameters do not change much during the MAML training of the inner loop, and thus 
\cite{raghu2019rapid} proposed to update only a small subset (often only the last layer) of parameters in the inner loop. Extensive experiments  in \cite{raghu2019rapid} demonstrate that ANIL achieves a significant speedup over MAML without sacrificing the performance.

Despite extensive empirical results, there has been no theoretical study of ANIL yet, which motivates this work. In particular, we would like to answer several new questions arising in ANIL (but not in the original MAML). While the outer-loop loss function of ANIL is still nonconvex as MAML, the inner-loop loss can be either  {\em strongly convex} or {\em nonconvex} in practice. The strong convexity  occurs naturally if only the last layer of neural networks is updated in the inner loop, whereas the nonconvexity often occurs if more than one layer of neural networks are updated in the inner loop. Thus, our theory will explore how such different geometries affect the convergence rate, computational complexity, as well as the hyper-parameter selections.  We will also theoretically quantify how much computational advantage ANIL achieves over MAML by training only partial parameters in the inner loop. 

%theoretical understanding of ANIL algorithm still remains mysterious, and furthermore, it has not been well understood how the {\em inner-loop loss geometry} and {\em the number $N$ of inner-loop steps} affect the total convergence and complexity performance of ANIL algorithm. %(as well as other MAML-type algorithms). 
%. In addition, the inner-loop loss function of ANIL generally has two types of geometries, i.e., strongly-convex and nonconvex. 
%existing analysis~\cite{} on MAML-type algorithms 
%In this paper, we aim to answer the above question by providing a novel and comprehensive characterization of ANIL algorithm from an optimization perspective. Our main contributions are summarized as below. 
%comprehensive convergence and complexity analysis for ANIL algorithm. Our main contributions are as below. 
%, as well as a full characterization of the impact of various loss geometries and hyper-parameters on the  performance of ANIL from an optimization perspective. 
%In specific, 
%
%In specific, the inner-loop loss function of ANIL generally has two types of geometries, i.e., strongly-convex and nonconvex. 
%Although achieving the promising performance 

%\vspace{-2mm}
\subsection{Summary of Contributions}\label{sec:contribution}

%%In the practical training of ANIL, 
%In practice, the inner-loop loss function of ANIL typically falls into two types of geometries, i.e.,  strongly-convexity (if $w$ includes only the last linear layer of neural network parameters) and nonconvexity (if $w$ includes multiple layers of neural networks parameters). 
%%(if $w$ includes only the last linear layer of neural network parameters) or take generally nonconvex loss function (if $w$ includes multiple layers of neural networks parameters).
In this paper, we characterize the convergence rate and the computational complexity for ANIL with $N$-step inner-loop gradient descent, under nonconvex outer-loop loss geometry, and under two representative inner-loop loss geometries, i.e., strongly-convexity and nonconvexity. Our analysis also provides theoretical guidelines for choosing the hyper-parameters such as the stepsize and the number $N$ of inner-loop steps under each geometry. We summarize our specific results as follows.

\begin{list}{$\bullet$}{\topsep=0.2ex \leftmargin=0.26in \rightmargin=0.in \itemsep =0.05in}

\item {\bf Convergence rate}: ANIL converges sublinearly with the convergence error decaying sublinearly with the number of sampled tasks due to nonconvexity of the meta objective function. The convergence rate is further significantly affected by the geometry of the inner loop. Specifically, ANIL converges exponentially fast with $N$ initially and then saturates under the strongly-convex inner loop, and constantly converges slower as $N$ increases under the nonconvex inner loop.

\item {\bf Computational complexity}: ANIL attains an $\epsilon$-accurate stationary point with the gradient and second-order evaluations at the order of $\mathcal{O}(\epsilon^{-2})$ due to nonconvexity of the meta objective function. The computational cost is also significantly affected by the geometry of the inner loop. Specifically, under the strongly-convex inner loop, its complexity first decreases and then increases with $N$, which suggests a moderate value of $N$ and a constant stepsize in practice for a fast training. But under the nonconvex inner loop, ANIL has higher computational cost as $N$ increases, which suggests a small $N$ and a stepsize at the level of $1/N$ for desirable training.

%\item For strongly convex inner-loop loss, ANIL converges exponentially fast with $N$ initially, and then saturates. Its computational complexity first decreases and then increases with $N$. Our results suggest a moderate value of $N$ as well as a constant stepsize in practice for a fast training.

%\item For nonconvex inner-loop loss, ANIL converges slower and has increased computational cost as $N$ increases.
%% if the stepsize is chosen inversely proportional to $N$. 
%Our results suggest a small $N$ and a stepsize inversely proportional to $N$ for desirable training. 

%\item We then analyze the convergence of ANIL for the case when the inner-loop loss function is nonconvex. As shown in~\Cref{tas11},  increasing $N$ leads to a slower convergence rate of ANIL w.r.t. both parameters $w$ and $\phi$, as well as a higher computational complexity.  Thus, for this case, our results suggest a relatively small $N$ for ANIL to achieve a more faster convergence and a lower computational cost. 
%the convergence rates w.r.t. $w$ and $\phi$ both linearly {\em decrease} as $N$ increases. In addition, 

\item Our experiments validate that ANIL exhibits aforementioned {\em very different} convergence behaviors under the two inner-loop geometries.

%As comparison, ANIL exhibits {\em very different} convergence behaviors under the above two geometries. For example, increasing $N$ yields faster convergence for the strongly-convex inner-loop loss, but slower convergence for the nonconvex inner-loop loss. 
% The learning rate of the inner loop can be chosen to be much larger under strongly convex geometry than under nonconvex geometry. 

%Our analysis provides the first-known characterization on the comparison between strongly-convexity and nonconvexity in MAML-type optimization. 
%Our results provide an interesting comparison between the strongly-convex and nonconvex geometries of the inner-loop loss function in MAML-type optimization. We show that ANIL exhibits {\em different} convergence behaviors under these two geometries. For example,  increasing $N$ leads to a faster convergence for the strongly-convex inner-loop loss, but a slower convergence for the nonconvex inner-loop loss. Our analysis provides a guideline on how hyper-parameters should be chosen for a fast ANIL training in different geometries. 

%\item We validate our theoretical results on two few-shot learning benchmarks, FC100 and miniImageNet.  Empirical results demonstrate the different convergence behaviors of ANIL we characterize in different geometries of the inner-loop loss function.   
%the different effects of the number $N$ of inner-loop gradient descent steps 
\end{list}
From the technical standpoint, we develop new techniques to capture the properties for ANIL, which does not follow from the existing theory for MAML~\citep{fallah2019convergence,ji2020multi}. First, our analysis explores how different geometries of the inner-loop loss (i.e., strongly-convexity and nonconvexity) affect the convergence of ANIL. Such comparison does not exist in MAML. Second, ANIL contains parameters that are updated only in the outer loop, which exhibit {\em special} meta-gradient properties not captured in MAML.
% the strongly-convex and nonconvex geometries of the inner-loop loss , and how $N$ affects the overall convergence under each geometry. As a comparison, 
%the analysis for MAML 

%From technical standpoint, our analysis for ANIL essentially differs from existing theories~\cite{fallah2019convergence,ji2020multi}  for MAML. First, our analysis explores how different inner-loop loss geometries (i.e., strongly-convexity and nonconvexity) affects the convergence of ANIL. However, existing analysis for MAML does not have such a comparison and characterization because the loss of MAML is typically nonconvex due to the involvement of all model parameters. Second, 
%%unlike MAML that needs to adapt all model parameters at the inner loop, ANIL only adapts 
%the reuse parameters $\phi$ of ANIL are not updated during the inner-loop adaptation, and hence yield an essentially {\em different} meta gradient from MAML whose all parameters need to be adapted at the inner loop. Our specific contributions are summarized as follows. 
%% the strongly-convex and nonconvex geometries of the inner-loop loss , and how $N$ affects the overall convergence under each geometry. As a comparison, 
%%the analysis for MAML 

%\vspace{-0.2cm}
\subsection{Related Works}
%\vspace{-0.1cm}
{\bf MAML-type meta-learning approaches.} 
As a pioneering meta-initialization approach, MAML~\citep{finn2017model} aims to find a good  initialization point such that a few gradient descent steps starting from this point achieves fast adaptation. 
MAML has inspired various variant algorithms 
%widely used in various domains such as classification and reinforcement learning due to its simplicity and efficiency~\cite{li2017meta,ravi2016optimization,finn2017model}. 
%Along this line, meta-initialization and meta-regularization are two popular classes. 
% As a pioneer along this line, MAML~\cite{finn2017model} aims to find an initialization such that gradient descent from it achieves fast adaptation. 
% Many follow-up 
~\citep{finn2017meta,finn2019online,finn2018probabilistic,jerfel2018online,mi2019meta,raghu2019rapid,rajeswaran2019meta,zhou2019efficient}. %, whi have extended MAML from different perspectives. 
 For example, FOMAML~\citep{finn2017model} and Reptile~\citep{nichol2018reptile}  are two first-order MAML-type algorithms which avoid second-order derivatives. \cite{finn2019online} provided an extension of MAML to the online setting.
  Based on the implicit differentiation technique,~\cite{rajeswaran2019meta} proposed a MAML variant named iMAML by formulating the inner loop as a regularized empirical risk minimization problem.
%  with conjugate gradient (CG). 
  More recently,~\cite{raghu2019rapid} modifies MAML to ANIL by adapting a small subset of model parameters during the inner loop in order to reduce the computational and memory cost. This paper provides the theoretical  guarantee for ANIL as a complement to its empirical study in~\cite{raghu2019rapid}.
%due to its significantly improved efficiency over MAML and its generality of  including MAML as a special case. 
% \citealt{rothfuss2018promp,foerster2018dice,liu2019taming} proposed novel Hessian estimators for RL-MAML~\cite{finn2017model}.
 % in gradient-based meta-RL. %to address the incorrect meta-gradient estimates in the original implementation of RL-MAML~\cite{finn2017model}. 

% Alternatively to meta-initialization algorithms such as MAML, meta-regularization approaches aim to learn a good bias for a regularized empirical risk  
% , and achieve good convergence and statistical guarantee
%minimization problem for intra-task learning~\cite{alquier2016regret, denevi2018learning,denevi2018incremental,denevi2019learning,rajeswaran2019meta,balcan2019provable,zhou2019efficient}. 
%For example,  \citealt{rajeswaran2019meta} proposed efficient iMAML using a conjugate gradient (CG) based solver. 
% \citealt{balcan2019provable} formalized a connection between meta-initialization and meta-regularization from an online learning perspective.  
%\citealt{zhou2019efficient} proposed an efficient meta-learning approach based on a  minibatch proximal update. 
\vspace{0.1cm}
{\noindent \bf Other optimization-based meta-learning approaches.} Apart from MAML-type meta-initialization algorithms, another well-established framework in few-shot meta learning~\citep{bertinetto2018meta,lee2019meta,ravi2016optimization,snell2017prototypical,zhou2018deep} aims to learn good parameters as a common embedding model for all tasks. 
%, which is regarded as a mapping from input data to a feature space. 
Building on the embedded features, task-specific parameters are then searched as a minimizer of the inner-loop loss function~\citep{bertinetto2018meta,lee2019meta}. Compared to ANIL, such a framework does not train the task-specific parameters as initialization, whereas ANIL trains a good initialization for the task-specific parameters.
%, and does not require the inner-loop loss to have a minimizer so that it can be applied to more general nonconvex settings. 

%Compared to ANIL, such a framework has meta-initialization parameters, and relies on the assumption that the minimizer of the inner-loop loss exists (e.g., under the strongly-convex geometry). As a comparison, ANIL does not have such an assumption, and can be applied to a more general nonconvex setting and more applications including reinforcement learning. 

%$w_i$ is the task-specific parameters needed to be fully trained by each individual task. 
% For ease of presentation, we call this model as meta-learning with shared embedding (MetaSE).

\vspace{0.1cm}
 \noindent{\bf Theory for MAML-type approaches.} 
% Although various MAML-type algorithms have shown 
% promising performance in a wide range of applications, their statistical and convergence behaviors are not well studied except a few attempts.  In particaular, 
 There have been only a few studies on the statistical and convergence performance of MAML-type algorithms. 
 \cite{finn2017meta} proved a universal approximation property of MAML under mild conditions. 
%  \cite{finn2019online}  
% provided a follow-the-meta-leader extension of MAML for online learning, and 
% analyzed online MAML for a strongly convex objective function under a bounded-gradient assumption. 
\cite{rajeswaran2019meta} analyzed the convergence of iMAML algorithm based on implicit meta gradients. %for a strongly-convex 
 \cite{fallah2019convergence} analyzed the convergence of one-step MAML for a nonconvex objective, and~\cite{ji2020multi} analyzed the convergence of multi-step MAML in the nonconvex setting.  
 As a comparison, we analyze the ANIL algorithm provided in~\cite{raghu2019rapid}, which has different properties from MAML due to adapting only partial parameters in the inner loop.
% a more general and efficient ANIL algorithm under both strongly-convex and nonconvex inner-loss geometries, and further characterize the impact of the inner-loop geometries and hyper-parameters on the overall convergence. 
% In addition, existing analysis does not characterize how the inner-loop geometries and hyper-parameters affect the total convergence of MAML-type algorithms. In contrast, our analysis captures the different impacts of the number $N$ of inner-loop gradient descent steps affects on the convergence of ANIL under the nonconvex and strongly-convex cases, respectively. 
% However, these convergence analysis only focuses on 
% Our study here provides a new convergence analysis for {\em multi-step} MAML in the nonconvex setting for both the resampling  and  finite-sum cases.
 
% {\bf Notations.} For a function {\small $L(w,\phi)$} and a realization {\small $( w^\prime, \phi^\prime)$}, we define {\small $\nabla_{w} L (w^\prime,\phi^\prime) =\frac{\partial L (w,\phi)}{\partial w}\big |_{(w^\prime, \phi^\prime)} $, $\nabla^2_{w} L( w^prime,\widetilde \phi)=\frac{\partial^2 L(w,\phi)}{\partial w^2}\big |_{(\widetilde w, \widetilde \phi)}$} and {\small $\nabla_\phi\nabla_w L(\widetilde w, \widetilde \phi) = \frac{\partial}{\partial \phi}\big(\frac{\partial L(w,\phi)}{\partial w}\big)\big |_{(\widetilde w, \widetilde \phi)}$}. The same notations hold for $\phi$.
 
 \vspace{0.1cm}
  {\noindent\bf Notations.} For a function {\small $L(w,\phi)$} and a realization {\small $( w^\prime, \phi^\prime)$}, we define {\small $\nabla_{w} L (w^\prime,\phi^\prime) =\frac{\partial L (w,\phi)}{\partial w}\big |_{(w^\prime, \phi^\prime)} $, $\nabla^2_{w} L( w^\prime, \phi^\prime)=\frac{\partial^2 L(w,\phi)}{\partial w^2}\big |_{(w^\prime, \phi^\prime)}$,$\nabla_\phi\nabla_w L(w^\prime, \phi^\prime) = \frac{\partial^2 L(w,\phi)}{\partial \phi \partial w}\big |_{( w^\prime, \phi^\prime)}$}. The same notations hold for $\phi$.

%{\color{red} analysis novelty: 1. first analysis on strongly convex with multiple steps 2. first analysis on comparison between strongly-convex and nonconvex}
%\vspace{-0.1cm}
\section{Problem Formulation and Algorithms}\label{problem}

Let $\mathcal{T}=(\mathcal{T}_i, i\in \mathcal{I})$ be a set of tasks available for meta-learning, where tasks are sampled for use by a distribution of $p_\mathcal{T}$. Each task $\mathcal{T}_i$ contains a training sample set $\mathcal{S}_i$ and a test set $\mathcal{D}_i$. Suppose that meta-learning divides all model parameters into mutually-exclusive sets $(w,\phi)$ as described below.
\begin{list}{$\bullet$}{\topsep=0.8ex \leftmargin=0.2in \rightmargin=0.in \itemsep =0.05in}
\item $w$ includes task-specific parameters, and meta-learning trains a good initialization of $w$. 

\item $\phi$ includes common parameters shared by all tasks, and meta-learning trains $\phi$ for direct reuse. 
\end{list}
For example, in training neural networks, $w$ often represents the parameters of some partial layers,  and $\phi$ represents the parameters of the remaining inner layers.  The goal of meta-learning here is to jointly learn $w$ as a good initialization parameter and $\phi$ as a reuse parameter, such that $(w_N,\phi)$ performs well on a sampled individual task $\mathcal{T}$, where $w_N$ is the $N$-step gradient descent update of $w$. 
%(over samples $\mathcal{S}_i$) starting from this , we can find parameters $w_N$ such that $( w_N,\phi)$ performs well on the test set $\mathcal{D}_i$. 
To this end, ANIL solves the following optimization problem with the objective function given by
\begin{align}\label{eq:obj} 
\hspace{-0.3cm}\text{(Meta objective function):}\quad & 
\min_{ w,\phi} L^{meta} ( w, \phi) := \mathbb{E}_{i\sim p_{\mathcal{T}}} L_{\mathcal{D}_i}( w^i_N( w,\phi), \phi), 
%& \text{where} \; L_{\mathcal{D}_i}(w^i_N, \phi):=\sum_{\xi \in \mathcal{D}_i } \ell (w^i_N,\phi; \xi)
\end{align}
where the loss function $L_{\mathcal{D}_i}(w^i_N, \phi):=\sum_{\xi \in \mathcal{D}_i } \ell (w^i_N,\phi; \xi)$ takes the finite-sum form over the test dataset $\mathcal{D}_i$, and the parameter $w^i_N$  for task $i$ is obtained via an inner-loop $N$-step gradient descent update of $w^i_0=w$ (aiming to minimize the task $i$'s loss function $L_{\mathcal{S}_i}(w,\phi)$ over $w$) as given by%For each task $i$, the parameters $w^i_N( w)$ is obtained via an inner-loop gradient descent update of $w^i_0=w$ for $N$-steps as   w_{m}^i - \alpha \frac{\partial L_{\mathcal{S}_i} (w,\phi)}{\partial w}\Big |_{w=w_{m}^i} = 
\begin{align}\label{inner:gd}
\text{(Inner-loop gradient descent):} \quad w_{m+1}^i =  w_{m}^i - \alpha \nabla_{w} L_{\mathcal{S}_i} (w_{m}^i,\phi),\, m=0,1,...,N-1. 
\end{align}
%where the loss $L_{\mathcal{S}_i}(\cdot,\cdot)$ is evaluated on the training set $\mathcal{S}_i$ and contains a regularization term $\mathcal{R}(w)$. 
Here, $w^i_N(w,\phi)$ explicitly indicates the dependence of $w^i_N$ on $\phi$ and the initialization $w$ via the iterative updates in~\cref{inner:gd}. To draw connection, the problem here reduces to the  MAML~\citep{finn2017model} framework if $w$ includes all training parameters and $\phi$ is empty, i.e., no parameters are reused directly.
\subsection{ANIL Algorithm}\label{Algorithm}
%\vspace{-0.2cm}
ANIL~\citep{raghu2019rapid} (as described in \Cref{alg:anil}) solves the problem in \cref{eq:obj} via two nested optimization loops, i.e., inner loop for task-specific adaptation and outer loop for updating meta-initialization and reuse parameters. At the $k$-th outer loop, ANIL samples a batch $\mathcal{B}_k$ of identical and independently distributed (i.i.d.) tasks based on $p_{\mathcal{T}}$. Then, each task in $\mathcal{B}_k$ runs an inner loop of $N$ steps of gradient descent with a stepsize $\alpha$ as in lines $5$-$7$ in~\Cref{alg:anil}, where $w_{k,0}^i =  w_k$ for all tasks $\mathcal{T}_i\in\mathcal{B}_{k}$.
%\begin{align}\label{alg:updates}
%w_{k,m+1}^i = w_{k,m}^i - \alpha \nabla_w L_{\mathcal{S}_i}(w_{k,m}^i,\phi_k),\quad \text{for } \; m = 0,1,...,N-1 
%\end{align}
%where the initialization parameters $w_{k,0}^i =  w_k$ for all tasks in $\mathcal{B}_{k}$. 
%Note that the last output $w^i_{k,N}$ of the above inner gradient updates 

After obtaining the inner-loop output $w^i_{k,N}$ for all tasks, ANIL computes two partial gradients $\frac{\partial L_{\mathcal{D}_i}(w^i_{k,N},\,\phi_k)}{\partial { w_k}}$ and $\frac{\partial L_{\mathcal{D}_i}(w^i_{k,N},\,\phi_k)}{\partial {\phi_k}}$ 
%with respect to $w_k$ and $\phi_k$ 
respectively by back-propagation, and updates $w_k$ and $\phi_k$ by stochastic gradient descent as in line $10$ in~\Cref{alg:anil}.
%the meta-initialization parameter $w_k$ and reuse parameter $\phi_k$ by stochastic gradient descent as follows.
%\begin{align}\label{alg:sgd}
%w_{k+1}=  w_{k} - \frac{\beta_w}{B}\sum_{i\in\mathcal{B}_k} \frac{\partial L_{\mathcal{D}_i}( w^i_{k,N},\phi_k)}{\partial {w_k}},\quad \phi_{k+1}= \phi_{k} - \frac{\beta_\phi}{B}\sum_{i\in\mathcal{B}_k} \frac{\partial L_{\mathcal{D}_i}( w^i_{k,N},\phi_k)}{\partial {\phi_k}}.
%\end{align}
%where the averaged gradients $\frac{1}{B}\sum_{i\in\mathcal{B}_k} \frac{\partial L_{\mathcal{D}_i}( w^i_{k,N},\phi_k)}{\partial {w_k}}$ and $\frac{1}{B}\sum_{i\in\mathcal{B}_k} \frac{\partial L_{\mathcal{D}_i}( w^i_{k,N},\phi_k)}{\partial {\phi_k}}$ over tasks in $\mathcal{B}_k$ serve as estimates of true meta gradients $\frac{\partial L^{meta} ( w, \phi) }{\partial w_k}$ and  $\frac{\partial L^{meta} ( w, \phi) }{\partial \phi_k}$, respectively. 
Note that $\phi_k$ and $w_k$ are treated to be mutually-independent during the differentiation process. Due to the nested dependence of $w_{k,N}^i$ on $\phi_k$ and $w_k$, the two partial gradients involve complicated second-order derivatives. Their explicit forms are provided in the following proposition. 
\begin{proposition}\label{le:gd_form}
%[Forms of meta gradients]
%Consider the loss  $L_{\mathcal{D}_i}( w^i_N, \phi)$ in~\cref{obj}, where $w_N^i$ is obtained by the iterative gradient descent steps in~\cref{inner:gd}. Then, 
The partial meta gradients take the following explicit form: 
\begin{small} 
\begin{align*}
{1)} \quad \frac{\partial L_{\mathcal{D}_i}( w^i_{k,N}, \phi_k)}{\partial w_k} =& \prod_{m=0}^{N-1}(I - \alpha \nabla_w^2L_{\mathcal{S}_i}(w_{k,m}^i,\phi_k)) \nabla_{w} L_{\mathcal{D}_i} (w_{k,N}^i,\phi_k). \nonumber
\\ {2)}\quad \frac{\partial L_{\mathcal{D}_i}( w^i_{k,N}, \phi_k)}{\partial \phi_k} =& -\alpha \sum_{m=0}^{N-1}\nabla_\phi\nabla_w L_{\mathcal{S}_i}(w_{k,m}^i,\phi_k) \prod_{j=m+1}^{N-1}(I-\alpha\nabla_w^2L_{\mathcal{S}_i}(w_{k,j}^i,\phi_k))\nabla_w L_{\mathcal{D}_i}(w_{k,N}^i,\phi_k) \nonumber
\\&+\nabla_\phi L_{\mathcal{D}_i}(w_{k,N}^i,\phi_k).
\end{align*}
\end{small}
%\hspace{-0.12cm}where we let {\small $\prod_{N}^{N-1}(\cdot) =1$} for notational convenience. 
\end{proposition}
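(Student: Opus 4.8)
The plan is to obtain both partial gradients by unrolling the inner-loop recursion in~\cref{inner:gd} and applying the chain rule, keeping in mind the convention stated in the text that $w_k$ and $\phi_k$ are treated as mutually independent during differentiation; in particular $w_{k,0}^i=w_k$ carries no dependence on $\phi_k$. Throughout I would drop the outer-loop index $k$ and the task index $i$ and write $w_m$ for $w_{k,m}^i$, and abbreviate $H_m:=\nabla_w^2 L_{\mathcal{S}_i}(w_m,\phi)$ and $C_m:=\nabla_\phi\nabla_w L_{\mathcal{S}_i}(w_m,\phi)$. I would also fix the convention that $\frac{\partial a}{\partial b}$ denotes the Jacobian-type matrix arranged so that the chain rule composes products on the right, i.e. $\frac{\partial f(g(b))}{\partial b}=\frac{\partial g}{\partial b}\,\frac{\partial f}{\partial g}$; with symmetric Hessians the transpose issue then disappears from the final formulas.

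For part 1), since $\phi$ is held fixed the loss $L_{\mathcal{D}_i}(w_N,\phi)$ depends on $w$ only through $w_N$, so $\frac{\partial L_{\mathcal{D}_i}(w_N,\phi)}{\partial w}=\frac{\partial w_N}{\partial w}\,\nabla_w L_{\mathcal{D}_i}(w_N,\phi)$. Differentiating $w_{m+1}=w_m-\alpha\nabla_w L_{\mathcal{S}_i}(w_m,\phi)$ with respect to $w$ gives the recursion $\frac{\partial w_{m+1}}{\partial w}=\frac{\partial w_m}{\partial w}(I-\alpha H_m)$ with $\frac{\partial w_0}{\partial w}=I$, and an induction on $m$ yields $\frac{\partial w_N}{\partial w}=\prod_{m=0}^{N-1}(I-\alpha H_m)$, ordered with $m=0$ on the left; substituting gives the claimed expression.

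For part 2), $L_{\mathcal{D}_i}(w_N,\phi)$ now depends on $\phi$ both explicitly and through $w_N$, so $\frac{\partial L_{\mathcal{D}_i}(w_N,\phi)}{\partial\phi}=\frac{\partial w_N}{\partial\phi}\,\nabla_w L_{\mathcal{D}_i}(w_N,\phi)+\nabla_\phi L_{\mathcal{D}_i}(w_N,\phi)$. Differentiating the inner recursion with respect to $\phi$ now picks up both the propagated term and the explicit term, giving the affine recursion $\frac{\partial w_{m+1}}{\partial\phi}=\frac{\partial w_m}{\partial\phi}(I-\alpha H_m)-\alpha C_m$ with initial condition $\frac{\partial w_0}{\partial\phi}=0$ (because $w_0=w$ is treated as independent of $\phi$). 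Unrolling this linear recursion gives $\frac{\partial w_N}{\partial\phi}=-\alpha\sum_{m=0}^{N-1}C_m\prod_{j=m+1}^{N-1}(I-\alpha H_j)$, and plugging into the chain-rule identity produces the stated formula.

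The computation is essentially a careful exercise in matrix calculus, and the only real obstacle is bookkeeping: tracking the non-commuting matrix products and getting their left-to-right orderings in the index right, handling the Jacobian-transpose convention consistently, and not forgetting that $w$ and $\phi$ are differentiated as independent variables so that $w_0$ contributes no $\phi$-derivative. A clean way to organize the write-up is to first prove, by a single induction on $m$, the two identities $\frac{\partial w_m}{\partial w}=\prod_{t=0}^{m-1}(I-\alpha H_t)$ and $\frac{\partial w_m}{\partial\phi}=-\alpha\sum_{t=0}^{m-1}C_t\prod_{j=t+1}^{m-1}(I-\alpha H_j)$ for the $m$-th inner iterate, and then specialize to $m=N$ before applying the outer chain rule for both $w$ and $\phi$.
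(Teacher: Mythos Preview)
Your proposal is correct and follows essentially the same approach as the paper: apply the chain rule to split off $\frac{\partial w_N}{\partial w}$ (resp.\ $\frac{\partial w_N}{\partial\phi}$), derive the one-step recursion for these Jacobians by differentiating the inner-loop update, and then telescope/unroll using the initial conditions $\frac{\partial w_0}{\partial w}=I$ and $\frac{\partial w_0}{\partial\phi}=0$. The only cosmetic difference is that you package the two Jacobian identities into a single induction on $m$, whereas the paper writes each telescoping argument separately.
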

%The high-order derivatives $\nabla_w^2 L_{\mathcal{S}_i}(\cdot,\cdot)$ and $\nabla_\phi\nabla_w L_{\mathcal{S}_i}(\cdot,\cdot)$ in the above meta gradients complicate the convergence analysis of the ANIL algorithm.
%, as will be seen later. It is worthwhile mentioning that the analysis for ANIL can be easily extended to the original MAML algorithm by regarding $w$ as all parameters. 
%In the following proposition, we provide a more explicit form of 

%Using the following proposition, we provide a simplified form of these two partial gradients. 
%Note that $w_{k,N}^i$ in these two partial gradients has a nested dependence on the meta-initialization parameters $\phi_k$ and $w_k$ due to the iterative updates in~\cref{alg:updates} 
 \begin{algorithm}[t]
	\caption{ANIL Algorithm} 
%	\small
	\label{alg:anil}
	\begin{algorithmic}[1]
		\STATE {\bfseries Input:} Distribution over tasks $p_\mathcal{T}$, inner stepsize $\alpha$, outer stepsize $\beta_w,\beta_\phi$, initialization $ w_0, \phi_0$ 
		\WHILE{not converged}
		\STATE{Sample a mini-batch of i.i.d. tasks $\mathcal{B}_k= \{\mathcal{T}_i\}_{i=1}^B$  based on  the distribution $p_\mathcal{T}$}
		\FOR{each task $\mathcal{T}_i$ in $\mathcal{B}_k$}
		\FOR{$m=0,1,...,N-1$}
		\STATE{Update $w_{k,m+1}^i = w_{k,m}^i - \alpha \nabla_w L_{\mathcal{S}_i}( w_{k,m}^i,\phi_k)$}
		\ENDFOR
		\STATE{Compute gradients $\frac{\partial L_{\mathcal{D}_i}(w^i_{k,N},\phi_k)}{\partial { w_k}},\frac{\partial L_{\mathcal{D}_i}(w^i_{k,N},\phi_k)}{\partial {\phi_k}}$ by back-propagation}
%		\STATE{Compute partial gradient $\frac{\partial L_{\mathcal{D}_i}(\widetilde w^i_{k,N}(\widetilde w_k,\phi_k),\phi_k)}{\partial_{\phi_k}}$ by back-propagation}
		\ENDFOR
                 \STATE{Update parameters $ w_k$ and $\phi_k$ by mini-batch SGD:
%                 \begin{small}
                 \begin{align*}
                 w_{k+1}=  w_{k} - \frac{\beta_w}{B}\sum_{i\in\mathcal{B}_k} \frac{\partial L_{\mathcal{D}_i}( w^i_{k,N},\phi_k)}{\partial {w_k}}, \quad \phi_{k+1}= \phi_{k} - \frac{\beta_\phi}{B}\sum_{i\in\mathcal{B}_k} \frac{\partial L_{\mathcal{D}_i}( w^i_{k,N},\phi_k)}{\partial {\phi_k}}
                 \end{align*}
%                 \end{small}
                 \vspace{-0.2cm}
                 }
%                  \STATE{Update parameters $\phi_k$ by mini-batch stochastic gradient descent:
%                 \\  \hspace{3.55cm}$\phi_{k+1}= \phi_{k} - \frac{\beta_\phi}{B}\sum_{i\in\mathcal{B}_k} \frac{\partial L_{\mathcal{D}_i}( w^i_{k,N},\phi_k)}{\partial {\phi_k}}$}
                                  \STATE{Update $k \leftarrow k+1$}
		\ENDWHILE
	\end{algorithmic}
	\end{algorithm}
%\vspace{-0.3cm}
\subsection{Technical Assumptions and Definitions}\label{se:pre} 
%\vspace{-0.1cm}
We let $z=(w,\phi)\in\mathbb{R}^{n}$ denote all parameters. For simplicity, suppose $\mathcal{S}_i$ and $\mathcal{D}_i$ for all $i\in\mathcal{I}$ have sizes of $S$ and $D$, respectively. In this paper, we consider the following types of loss functions.
\begin{list}{$\bullet$}{\topsep=0.7ex \leftmargin=0.2in \rightmargin=0.in \itemsep =0.05in}

\item The outer-loop meta loss function in \cref{eq:obj} takes the finite-sum form as $L_{\mathcal{D}_i}(w^i_N, \phi):=\sum_{\xi \in \mathcal{D}_i } \ell (w^i_N,\phi; \xi)$. It is generally nonconvex in terms of both $w$ and $\phi$.

\item The inner-loop loss function $L_{\mathcal{S}_i}(w,\phi)$ with respect to $w$ has two cases: strongly-convexity and nonconvexity. The strongly-convex case occurs often when $w$ corresponds to parameters of the last {\em linear} layer of a neural network, so that the loss function of such a $w$ is naturally chosen to be a quadratic function or a logistic loss with a strongly convex regularizer~\citep{bertinetto2018meta,lee2019meta}. The nonconvex case can occur if $w$ represents parameters of more than one layers (e.g., last two layers~\citep{raghu2019rapid}). As we prove in \Cref{sec:mainresult}, such geometries affect the convergence rate significantly.
%In the meanwhile, the loss function $L_{\mathcal{D}_i}(w,\phi)$ for the outer meta-optimization loop is generally nonconvex in terms of either $w$ or $\phi$. 
\vspace{0.1cm}
\end{list}
Since the objective function $L^{meta}(w,\phi)$ in~\cref{eq:obj} is generally nonconvex, we use the gradient norm as the convergence criterion, which is standard in nonconvex optimization.
\begin{Definition}
We say that $(\bar{w},\bar{\phi})$ is an $\epsilon$-accurate solution for the meta optimization problem in~\cref{eq:obj} if 
{\small $\,\mathbb{E}\Big\| \frac{\partial L^{meta}(\bar{w},\bar{\phi})}{\partial \bar w}\Big\|^2 <\epsilon$ and $\mathbb{E}\Big\| \frac{\partial L^{meta}(\bar{w},\bar{\phi})}{\partial \bar \phi}\Big\|^2 <\epsilon$}.
%, where $(w,\phi)$ is returned by a stochastic algorithm.
\end{Definition}
We further take the following standard assumptions on the {\em individual} loss function for each task, which have been commonly adopted in conventional minimization problems~\citep{ghadimi2013stochastic,ji2019history,ji2019improved,wang2018spiderboost} and min-max optimization~\citep{lin2020near} as well as the MAML-type optimization~\citep{finn2019online,ji2020multi}. 
% and $L_{\mathcal{S}_i}(z) =L_{\mathcal{S}_i}(w,\phi)$
%For notational convenience, let {\small$L_i(w,\phi)$} denote either {\small$L^{base}_i(w,\phi)$} or {\small$L^{meta}_i(w,\phi)$}. 
\begin{Assumption}\label{assm:smooth}%[Smoothness] 
The loss function $L_{\mathcal{S}_i}(z)$ and $L_{\mathcal{D}_i}(z)$ for each task $\mathcal{T}_i$ satisfy:
\begin{list}{$\bullet$}{\topsep=0.7ex \leftmargin=0.3in \rightmargin=0.5in \itemsep =0.05in}
\vspace{0.2cm}
\item $L_{\mathcal{S}_i}(z)$ and $L_{\mathcal{D}_i}(z)$ are $L$-smooth, i.e., for any $z,z^\prime \in \mathbb{R}^{n}$, $$\|\nabla L_{\mathcal{S}_i}(z) - \nabla L_{\mathcal{S}_i}(z^\prime) \| \leq L\|z-z^\prime\|, \|\nabla L_{\mathcal{D}_i}(z) - \nabla L_{\mathcal{D}_i}(z^\prime) \| \leq L\|z-z^\prime\|.$$
\item $L_{\mathcal{D}_i}(z)$ is $M$-Lipschitz, i.e., for any $z,z^\prime\in \mathbb{R}^{n}$,  $| L_{\mathcal{D}_i}(z) -  L_{\mathcal{D}_i}(z^\prime) | \leq M\|z-z^\prime\|$.
\end{list}
%The loss function $L_{\mathcal{S}_i}(w,\phi)$ is continuously differentiable in both $w_i$ and $\phi$ and there exist positive constants $L_{11}, L_{12}, L_{22}$ such that for any $w, w_{1},w_{2}\in\mathbb{R}^m$ and $\phi,\phi_1,\phi_2\in\mathbb{R}^{n}$, 
%\begin{small}
%\begin{align*}
%&\|\nabla_w L_{\mathcal{S}_i}(w_1,\phi) - \nabla_w L_{\mathcal{S}_i}(w_2,\phi)\| \leq L_{11} \|w_{1} - w_{2}\|, \|\nabla_w L_{\mathcal{S}_i}(w,\phi_1) - \nabla_w L_{\mathcal{S}_i}(w,\phi_2)\| \leq L_{12} \|\phi_1-\phi_2\|,  
%\\&\|\nabla_\phi L_{\mathcal{S}_i}(w_1,\phi) - \nabla_\phi L_{\mathcal{S}_i}(w_2,\phi)\| \leq L_{12} \|w_{1} - w_{2}\|, \,\|\nabla_\phi L_{\mathcal{S}_i}(w,\phi_1) - \nabla_\phi L_{\mathcal{S}_i}(w,\phi_2)\| \leq L_{22} \|\phi_1-\phi_2\|.
%\end{align*}
%\end{small}
%\hspace{-0.1cm}
%In addition, suppose gradients {\small $\nabla_w L_{S_i}(w,\phi)$} and {\small $\nabla_\phi L_{S_i}(w,\phi)$} are bounded, i.e., there exists constants $M_1,M_2>0$ such that 
%\begin{small}
%\begin{align*}
%\|\nabla_w L_{S_i}(w,\phi)\| \leq M_1, \, \|\nabla_\phi L_{S_i}(w,\phi) \| \leq M_2. 
%\end{align*}
%\end{small}
%The same assumption holds for the loss function $L_{\mathcal{D}_i}(z)$.
\end{Assumption}
%Note that the above boundedness assumption implies the 
%\Cref{assm:smooth} has been widely adopted in conventional minimization problems~\cite{ghadimi2013stochastic} and min-max optimization~\cite{lin2020near} as well as the MAML-type optimization~\cite{ji2020multi}. 
%involving two sets of updating parameters, e.g., . 
%nemirovski2004prox,rafique2018non,nouiehed2019solving,

%Due to the need to differentiate through the inner gradient descent process, the partial gradients 
%$\frac{\partial L_{\mathcal{D}_i}(w^i_{N}(w,\phi),\,\phi)}{\partial {w}}$ and $\frac{\partial L_{\mathcal{D}_i}(w^i_{N}(\phi),\,\phi)}{\partial {\phi}}$ 
%of the partial gradient of $L_{\mathcal{D}_i}( w^i_N, \phi)$ in the objective $L^{meta}(w,\phi)$ in~\cref{obj} involves high-order derivatives, as shown in the following proposition. 

Note that we {\em do not} impose the function Lipschitz assumption (i.e., item 2 in~\Cref{assm:smooth}) on the inner-loop loss function $L_{S_i}(z)$. We take the assumption on the Lipschitzness of function $L_{\mathcal{D}_i}$ to ensure the meta gradient to be bounded. We note that  iMAML~\citep{rajeswaran2019meta} alternatively assumes the search space of parameters to be bounded (see Theorem 1 therein) so that the meta gradient (eq. (5) therein) can be bounded.

As shown in~\Cref{le:gd_form},  the partial meta gradients involve two types of high-order derivatives {\small$\nabla_w^2 L_{\mathcal{S}_i}(\cdot,\cdot)$} and {\small$\nabla_\phi\nabla_w L_{\mathcal{S}_i}(\cdot,\cdot)$}. 
The following assumption imposes a Lipschitz condition for these two high-order derivatives, which has been widely adopted in optimization problems that involve two sets of parameters, e.g, bi-level programming~\citep{ghadimi2018approximation}.
%$\nabla_w^2 L_{\mathcal{S}_i}(\cdot,\cdot)$ and $\nabla_\phi\nabla_w L_{\mathcal{S}_i}(\cdot,\cdot)$ in the above meta gradients. 
\begin{Assumption}\label{assm:second} Both $\nabla_w^2 L_{\mathcal{S}_i}(z)$ and $\nabla_\phi\nabla_w L_{\mathcal{S}_i}(z)$ are $\rho$-Lipschitz and $\tau$-Lipschitz, i.e., 
%There exist constants $\rho, \tau>0$ such that for any $z,z^\prime\in\mathbb{R}^n$
\begin{list}{$\bullet$}{\topsep=0.7ex \leftmargin=0.3in \rightmargin=0.5in \itemsep =0.05in}
\vspace{0.2cm}
\item For any $z,z^\prime\in\mathbb{R}^{n}$, $\|\nabla^2_w L_{\mathcal{S}_i}(z) - \nabla^2_w L_{\mathcal{S}_i}(z^\prime)\| \leq \rho \|z - z^\prime\|$.
\item For any $z,z^\prime\in\mathbb{R}^{n}$, $\|\nabla_\phi \nabla_w L_{\mathcal{S}_i}(z) - \nabla_\phi\nabla_w L_{\mathcal{S}_i}(z^\prime)\| \leq \tau\|z-z^\prime\|$.
\end{list}
%\begin{small}
%\begin{align*}
%&\|\nabla^2_w L_{\mathcal{S}_i}(w_1,\phi) - \nabla^2_w L_{\mathcal{S}_i}(w_2,\phi)\| \leq \rho \|w_{1} - w_{2}\|, \|\nabla^2_w L_{\mathcal{S}_i}(w,\phi_1) - \nabla^2_w L_{\mathcal{S}_i}(w,\phi_2)\| \leq  \rho \|\phi_1-\phi_2\|,  
%\\&\|\nabla_\phi \nabla_wL_{\mathcal{S}_i}(w_1,\phi) - \nabla_\phi\nabla_w L_{\mathcal{S}_i}(w_2,\phi)\| \leq \tau\|w_{1} - w_{2}\|, \,\|\nabla_\phi L_{\mathcal{S}_i}(w,\phi_1) - \nabla_\phi L_{\mathcal{S}_i}(w,\phi_2)\| \leq L_{22} \|\phi_1-\phi_2\|.
%\end{align*}
%\end{small}
%We suppose $\nabla_w^2 L_{\mathcal{S}_i}(\cdot,\cdot)$ and $\nabla_\phi\nabla_w L_{\mathcal{S}_i}(\cdot,\cdot)$ satisfies 
\end{Assumption}
\section{Convergence Analysis of ANIL}\label{sec:mainresult}
%\vspace{-0.2cm}
%Although ANIL has 
%In \Cref{Algorithm}, we present the ANIL algorithm, which invo
%two types of existing algorithms for meta-learning with shared embedding model. The first type of algorithms require the base loss $L^{base}_i$ for each task to be simple to admit an exact close-form solution. For the second type, the base loss $L^{base}_i$ is much more general and can be solved by various iterative optimization solvers. 
We first provide convergence analysis for the ANIL algorithm, and then compare the performance of ANIL under two geometries and compare the performance between ANIL and MAML.

\subsection{Convergence Analysis under Strongly-Convex Inner-Loop Geometry}\label{se:strong-convex}
%\vspace{-2mm}
%In this subsection, w
We first analyze the convergence rate of ANIL for the case where the inner-loop loss function $L_{\mathcal{S}_i}(\cdot,\phi)$ satisfies the following strongly-convex condition. 
%This can occur in practice, for example, when $w$ captures the parameters of the last linear layer of a neural network so that the loss function with respect to such a $w$ is often chosen  to be the quadratic loss or the logistic loss with a strongly convex regularizer, which are strongly convex.
\begin{Definition}
%For any given $\phi$
$L_{\mathcal{S}_i}(w,\phi)$ is $\mu$-strongly convex with respect to $w$ if for any $w,w^\prime$ and $\phi$,
 \begin{align*}
 L_{\mathcal{S}_i}(w^\prime,\phi) \geq L_{\mathcal{S}_i}(w,\phi) + \big\langle w^\prime -w , \nabla_w L_{\mathcal{S}_i}(w,\phi) \big\rangle + \frac{\mu}{2} \|w-w^\prime\|^2.
 \end{align*} 
 \end{Definition}
  \vspace{-0.2cm}
%  \Cref{sec:nonconvex} shows that  the meta objective function $L^{meta}(w,\phi)$ is smooth with respective to either $w$ or $\phi$ with their smoothness parameters {\em decreasing} with $N$. 
% From an optimization perspective, smaller smoothness parameters lead to a larger stepsize, and hence yield a faster convergence rate, as characterized in the following theorem.
%In~\Cref{sec:nonconvex}, we show that for the nonconvex inner-loop loss function, increasing the number $N$ of the inner-loop gradient descent steps leads to {\em a much larger} smoothness parameter of the meta objective function $L^{meta}( w,\phi)$. However, such an impact of $N$ on the landscape of the meta objective function is opposite for the strongly-convex case, as shown in the following lemma. 

%The following Proposition characterizes the smoothness property of the meta objective function $L^{meta}(w,\phi)$. 
%the explicit forms of the partial gradients in
Based on  \Cref{le:gd_form}, we characterize the smoothness property of $L^{meta}(w,\phi)$ in~\cref{eq:obj} as below. 
\begin{proposition}\label{le:strong-convex}
Suppose Assumptions~\ref{assm:smooth} and \ref{assm:second} hold
%, and $L_{\mathcal{S}_i}(w,\phi)$ is $\mu$-strongly convex with respect to $w$. Let 
and choose the inner stepsize $\alpha=\frac{\mu}{L^2}$. Then, for any two points $(w_1,\phi_1), (w_2,\phi_2)\in\mathbb{R}^n$, we have
\begin{small}
\begin{align*}
1)  \quad \Big\| &\frac{\partial L^{meta}( w,\phi)}{\partial w} \Big |_{(w_1,\phi_1)} - \frac{\partial L^{meta}( w,\phi)}{\partial w} \Big |_{(w_2,\phi_2)} \Big\| 
\\&\leq \text{\normalfont poly}(L,M,\rho)\frac{L}{\mu}(1-\alpha \mu)^{N}\|w_1-w_2\|+\text{\normalfont poly}(L,M,\rho)\left(\frac{L}{\mu}+1\right)N(1-\alpha \mu)^{N}\|\phi_1-\phi_2\|,
% \Big((1-\alpha\mu)^{\frac{3N}{2}} L+ \frac{2\rho M}{\mu}(1-\alpha\mu)^{N-1}\Big) \|w_1-w_2\|  \nonumber
%\\&+  \Big((1-\alpha\mu)^NL+\alpha\rho MN(1-\alpha\mu)^{N-1}\Big)\left(\frac{2L}{\mu}+1\right)\|\phi_1-\phi_2\|
%\end{align*}
%\end{small}
%\hspace{-0.12cm} where the constant {\small $L_w:=\text{\normalfont poly}(L,M,\rho)\frac{L}{\mu}(1-\alpha \mu)^{N}$} and {\small $L^\prime_w:=\text{\normalfont poly}(L,M,\rho)\left(\frac{L}{\mu}+1\right)N(1-\alpha \mu)^{N}$}.
%=\text{\normalfont poly}(M,\rho)(1+\alpha L)^{2N}
%\Big\| &\frac{\partial L_{\mathcal{D}_i}( w^i_N,\phi)}{\partial w} \big |_{(w_1,\phi_1)} -  \frac{\partial L_{\mathcal{D}_i}( w^i_N,\phi)}{\partial w} \big |_{(w_2,\phi_2)} \Big\| % \leq L_w (\|w_1-w_2\| + \|\phi_1-\phi_2\|)
%\\&\quad\leq \big(L+\alpha L^2+2M\rho\big) (1+\alpha L)^{2N-1} (\|w_1-w_2\| + \|\phi_1-\phi_2\|).
%\begin{small}
%\begin{align*}
\\ 2)\quad  \Big\| &\frac{\partial L^{meta}( w,\phi)}{\partial \phi} \Big |_{(w_1,\phi_1)} -  \frac{\partial L^{meta}( w,\phi)}{\partial \phi} \Big |_{(w_2,\phi_2)} \Big\|
\\&\leq \text{\normalfont poly}(L,M,\tau,\rho) \frac{L}{\mu}(1-\alpha\mu)^{\frac{N}{2}}\|w_1-w_2\|+\text{\normalfont poly}(L,M,\rho) \frac{L^3}{\mu^3}\|\phi_1-\phi_2\|,
% \\\leq &  \left(  L+\frac{2\tau M}{\mu}+ \frac{2LM}{\mu} \left( \frac{\alpha\rho}{1-\alpha\mu}+\frac{2\rho}{\mu}\right) +\frac{L^2}{\mu} \right)(1-\alpha\mu)^{\frac{N-1}{2}}\|w_1-w_2\|
%\\&+ \left(L+\frac{\tau M}{\mu} + \frac{ LM\rho}{\mu^2}+\frac{L^2}{\mu} \right)\left(\frac{2L}{\mu}+1\right)\|\phi_1-\phi_2\|
% \Big\| &\frac{\partial L_{\mathcal{D}_i}( w^i_N, \phi)}{\partial \phi}\big |_{(w_1,\phi_1)}  - \frac{\partial L_{\mathcal{D}_i}( w^i_N, \phi)}{\partial \phi} \big |_{(w_2,\phi_2)}\Big\| %\leq L_\phi(\|w_1-w_2\| + \|\phi_1-\phi_2\|).\nonumber
% \\&\quad\leq \big(\alpha M(\tau-\rho)  N(1+\alpha L)^{N-1} + \big(L+\rho M/L\big)(1+\alpha L)^{2N}\big)(\|w_1-w_2\|+\|\phi_1-\phi_2\|). \nonumber
\end{align*}
\end{small}
\hspace{-0.12cm} 
where $\tau,\rho,L$ and $M$ are given in Assumptions~\ref{assm:smooth} and \ref{assm:second}, and $\text{\normalfont poly}(\cdot)$ denotes the polynomial function of the parameters with the explicit forms 
%of the smoothness parameters are 
given in~\Cref{append:str}.
%where the constant {\small $L_\phi:= \text{\normalfont poly}(L,M,\tau,\rho) \frac{L}{\mu}(1-\alpha\mu)^{\frac{N}{2}}$} and {\small $L_\phi^\prime:= \text{\normalfont poly}(L,M,\rho) \frac{L^2}{\mu^2}\left(\frac{L}{\mu}+1\right)$} 
\end{proposition}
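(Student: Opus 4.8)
## Proof Proposal for Proposition~\ref{le:strong-convex}

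The plan is to bound the two Lipschitz differences by directly working with the explicit gradient formulas from \Cref{le:gd_form} and propagating perturbation bounds through the inner-loop trajectory. The crucial preliminary step is to control how the inner-loop iterates $w^i_{k,m}$ (abbreviate $w_m$) depend on the initialization $w$ and on the reuse parameter $\phi$. Under the choice $\alpha = \mu/L^2 \le 1/L$, the map $w \mapsto w - \alpha\nabla_w L_{\mathcal{S}_i}(w,\phi)$ is a contraction with modulus $(1-\alpha\mu)$ in $w$, so unrolling the recursion in \cref{inner:gd} gives $\|w_N(w_1,\phi) - w_N(w_2,\phi)\| \le (1-\alpha\mu)^N \|w_1 - w_2\|$, which already explains the $(1-\alpha\mu)^N$ factor attached to $\|w_1-w_2\|$ in part~1. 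For the $\phi$-dependence, I would differentiate \cref{inner:gd} in $\phi$ (or equivalently run a discrete Gr\"onwall argument on $\|w_m(w,\phi_1) - w_m(w,\phi_2)\|$), where each step injects a new error term of size $\alpha L\|\phi_1-\phi_2\|$ that is then contracted by the remaining steps; summing the geometric-type series $\alpha L \sum_{j} (1-\alpha\mu)^{j}$ produces the $\frac{L}{\mu}$-type prefactor but \emph{not} an extra $N$, whereas the Jacobian $\frac{\partial w_N}{\partial \phi}$ itself is a sum of $N$ products of near-identity matrices — this is the source of the stray factor $N(1-\alpha\mu)^N$ multiplying $\|\phi_1-\phi_2\|$ in part~1.

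Next I would expand each of the two partial-gradient formulas from \Cref{le:gd_form} as a product/sum of factors of three types: (i) the "backward" matrix products $\prod_{m}(I - \alpha\nabla_w^2 L_{\mathcal{S}_i}(w_m,\phi))$, whose operator norm is at most $(1-\alpha\mu)^{(\text{number of factors})}$ since each factor has eigenvalues in $[1-\alpha L^2/\mu\cdot\ldots]$ — more precisely in $[1-\alpha L,\,1-\alpha\mu] = [1-\mu/L,\,1-\mu^2/L^2]$, so the norm is $\le 1-\alpha\mu$; (ii) the cross-derivative factors $\nabla_\phi\nabla_w L_{\mathcal{S}_i}$, bounded by $L$ in norm (from smoothness, since this is a block of the Hessian); and (iii) the terminal gradients $\nabla_w L_{\mathcal{D}_i}$ and $\nabla_\phi L_{\mathcal{D}_i}$, bounded by $M$ via the Lipschitzness of $L_{\mathcal{D}_i}$. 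Then, to bound the difference of the gradient at $(w_1,\phi_1)$ versus $(w_2,\phi_2)$, I use the standard telescoping trick: write the difference of two products $\prod A_m - \prod A'_m$ as a sum over positions of (product of $A$'s) $\cdot (A_j - A'_j)\cdot$ (product of $A'$'s), and bound each $A_j - A'_j$ using the Hessian-Lipschitz Assumption~\ref{assm:second} (giving $\rho$ or $\tau$ times the argument displacement $\|(w_j,\phi_1)-(w'_j,\phi_2)\| \le \|w_j - w'_j\| + \|\phi_1-\phi_2\|$) together with the trajectory bounds from the first step. The Lipschitzness of $L_{\mathcal{D}_i}$'s gradient ($L$-smoothness) handles the difference of terminal-gradient factors and also, when combined with the trajectory contraction, contributes the factor $(1-\alpha\mu)^N$ since $w_N$ is displaced by at most $(1-\alpha\mu)^N\|w_1-w_2\|$.

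The main subtlety — and the step I expect to be the real obstacle — is tracking the exponents of $(1-\alpha\mu)$ correctly, especially for part~2. In $\frac{\partial L^{meta}}{\partial\phi}$ the summand indexed by $m$ carries only the \emph{partial} product $\prod_{j=m+1}^{N-1}(I-\alpha\nabla_w^2 L_{\mathcal{S}_i})$, i.e. only $(1-\alpha\mu)^{N-1-m}$ worth of contraction, so for small $m$ there is strong contraction but for $m$ near $N$ there is essentially none; yet the displacement $\|w_m - w'_m\|$ of the trajectory \emph{grows} with $m$ like $(1-\alpha\mu)^?$... Summing $\sum_{m=0}^{N-1}(1-\alpha\mu)^{N-1-m}\cdot(\text{displacement at step }m)$ and extracting the clean factor $(1-\alpha\mu)^{N/2}$ claimed for the $\|w_1-w_2\|$ term requires splitting the sum at $m\approx N/2$ (contraction from one side dominates on each half) — this "square-root trick" is where the $(1-\alpha\mu)^{N/2}$ rather than $(1-\alpha\mu)^{N}$ comes from, and it must be done carefully so the polynomial prefactors stay clean. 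The $\|\phi_1-\phi_2\|$ term in part~2, by contrast, does \emph{not} decay in $N$: the $\phi$-perturbation enters at every inner step \emph{and} through the $N$ explicit cross-derivative factors without any net contraction surviving, so the best one gets is the $N$-independent $\frac{L^3}{\mu^3}$ prefactor (the three powers of $L/\mu$ coming from: the $\phi$-Jacobian of the trajectory, the backward matrix product's derivative, and the geometric sum over $m$). Once all three factor-types are bounded and the telescoping sums collapsed into geometric series in $(1-\alpha\mu)$, collecting terms and absorbing all constants built from $L,M,\rho,\tau$ into the $\text{poly}(\cdot)$ notation yields the stated inequalities; I would defer the exact polynomial forms to \Cref{append:str} as the statement already promises.
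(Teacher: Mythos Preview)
Your plan is essentially the paper's: prove a trajectory-perturbation lemma for $\|w_m^i(w_1,\phi_1)-w_m^i(w_2,\phi_2)\|$, insert it into the explicit formulas of \Cref{le:gd_form}, telescope the product/sum differences via the triangle inequality, and collapse the resulting geometric series. The one point where your proposal diverges is the origin of the $(1-\alpha\mu)^{N/2}$ exponent in part~2, and there your ``split at $m\approx N/2$'' heuristic is neither needed nor what the paper does. The paper's auxiliary lemma uses only the crude strong-convexity inequality $\langle x-y,\nabla_w L_{\mathcal{S}_i}(x,\phi)-\nabla_w L_{\mathcal{S}_i}(y,\phi)\rangle\ge\mu\|x-y\|^2$, which yields the \emph{weaker} per-step contraction $\sqrt{1-2\alpha\mu+\alpha^2L^2}=(1-\alpha\mu)^{1/2}$ at $\alpha=\mu/L^2$, hence a trajectory displacement of order $(1-\alpha\mu)^{m/2}\|w_1-w_2\|$; multiplying by the partial-product bound $(1-\alpha\mu)^{N-1-m}$ and summing over $m$ is then a plain geometric series dominated by its $m=N-1$ term, producing $(1-\alpha\mu)^{(N-1)/2}$ directly. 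With your tighter $(1-\alpha\mu)$-per-step contraction (valid via the integrated-Hessian argument since $\alpha\le 1/L$ and $\nabla_w^2 L_{\mathcal{S}_i}$ exists by Assumption~\ref{assm:second}), the product becomes $(1-\alpha\mu)^{N-1}$ independent of $m$, the sum is simply $N(1-\alpha\mu)^{N-1}$, and this already implies the stated $(1-\alpha\mu)^{N/2}$ bound --- so again no splitting trick is required. (Incidentally, the $w$-part of the trajectory displacement \emph{decays} with $m$, it does not grow; only the $\phi$-contribution accumulates, to the $L/\mu$ plateau you correctly identify.)
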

%for the strongly-convex case, the impact of $N$ on the landscape of the meta objective function is opposite.
%a larger $N$ yields a 

%Compared with the nonconvex case , the strong-convex geometry 

 \Cref{le:strong-convex} indicates that increasing the number $N$ of inner-loop gradient descent steps yields much {\em smaller} smoothness parameters for the meta objective function $L^{meta}(w,\phi)$. As shown in the following theorem, this allows a larger stepsize $\beta_w$, which yields a faster convergence rate $\mathcal{O}(\frac{1}{K\beta_w})$.
 
% larger stepsize chosen for the outer-loop meta optimization, and hence yields a faster convergence rate.
% , as characterized in the following convergence theorem.
% 
% The statement specifically refers to Theorem 1,
%%the specific convergence results in Theorem $1$,  where it can be seen that 
%where increasing $N$ leads to larger stepsize $\beta_w$, which yields faster convergence rate $\mathcal{O}(\frac{1}{K\beta_w})$. 

%  This demonstrates that the strongly convexity helps to 
% For this reason,
% The following theorem provides a convergence analysis of ANIL for the strongly-convex case.% , compared to that in~\Cref{th:nonconvex} for the nonconvex case.   
\begin{Theorem}\label{th:strong-convex} %and apply \Cref{alg:anil} to solve the objective function in~\cref{obj} with 
Suppose Assumptions~\ref{assm:smooth} and \ref{assm:second} hold, and apply \Cref{alg:anil} to solve the meta optimization problem~\cref{eq:obj} with stepsizes $\beta_w={\small\text{\normalfont poly}(\rho,\tau,L,M) \mu^2(1-\frac{\mu^2}{L^2})^{-\frac{N}{2}}}$ and $\beta_\phi={\small\text{\normalfont poly}(\rho,\tau,L,M)\mu^{3}}$. %and $\beta_\phi = \text{\normalfont poly}(M,\rho,\tau) (1+\alpha L)^{-2N}$. 
%, where $L_w$ and $L_\phi$ are given in~\Cref{le:smooth_nonconvex}. 
Then, ANIL  finds a point $(w,\phi)\in\big\{(w_k,\phi_k),k=0,...,K-1\big\}$ such that 
%$(w,\phi)$ such that $(w,\phi)\in\{(w_i,\phi_i),i=0,...,K-1\}$ such that 
%where $L_w$ and $L_\phi$ are given in~\Cref{le:smooth_nonconvex}
%\begin{align*}
%\text{\normalfont (Convergence rate w.r.p. $w$)}\quad \mathbb{E}\left\| \frac{\partial L^{meta}(w,\phi)}{\partial w}\right\|^2  \leq& \,\mathcal{O}\bigg(  \frac{(1+\alpha L)^{2N}}{K} + \frac{(1+\alpha L)^{2N}}{B}  \bigg), \nonumber
%\\\text{\normalfont (Convergence rate w.r.p. $\phi$)}\quad\mathbb{E}\left\| \frac{\partial L^{meta}(w,\phi)}{\partial \phi}\right\|^2  \leq &\,\mathcal{O}\bigg(  \frac{(1+\alpha L)^{2N}}{K} + \frac{(1+\alpha L)^{2N}}{B}  \bigg).
%\end{align*} %\text{\normalfont (Convergence rate w.r.p. $w$)}\quad 
%\mathbb{E}\left\| \frac{\partial L^{meta}(w_\xi,\phi_\xi)}{\partial w_\xi}\right\|^2  \leq& \mathcal{O}
\begin{align*}
\text{\normalfont (Rate w.r.t. $w$)}\quad\mathbb{E}\left\| \frac{\partial L^{meta}(w,\phi)}{\partial w}\right\|^2  \leq& \mathcal{O}\Bigg( \frac{  \frac{1}{\mu^2}\left(1-\frac{\mu^2}{L^2}\right)^{\frac{N}{2}}}{K}      +\frac{\frac{1}{\mu} \left(1-\frac{\mu^2}{L^2}\right)^{\frac{N}{2}}}{B}     \Bigg), \nonumber
\\\text{\normalfont (Rate w.r.t. $\phi$)}\quad \mathbb{E}\left\| \frac{\partial L^{meta}(w,\phi)}{\partial \phi}\right\|^2  \leq & \mathcal{O}\Bigg(\frac{ \frac{1}{\mu^2}\left(1-\frac{\mu^2}{L^2}\right)^{\frac{N}{2}}+\frac{1}{\mu^3}}{K} +\frac{\frac{1}{\mu}\left(1-\frac{\mu^2}{L^2}\right)^{\frac{3N}{2}}+\frac{1}{\mu^2}}{B}\Bigg).
\end{align*}
To achieve an $\epsilon$-accurate point, ANIL requires at most {\small $\mathcal{O}\big(\frac{c_wN}{\mu^{4}}\big(1-\frac{\mu^2}{L^2}\big)^{N/2}+\frac{c_w^\prime N}{\mu^{5}}\big)\epsilon^{-2}$} 
%\begin{align*}
%KBN=&\mathcal{O}\left(\frac{L^2}{\mu^2}\left(1-\frac{\mu^2}{L^2}\right)^{\frac{N}{2}}+\frac{L^3}{\mu^3}\right)\left(\frac{L}{\mu}\left(1-\frac{\mu^2}{L^2}\right)^{\frac{3N}{2}}+\frac{L^2}{\mu^2}\right)\epsilon^{-2}
%\\\leq&\mathcal{O}\left(\frac{N}{\mu^4}\left(1-\frac{\mu^2}{L^2}\right)^{\frac{N}{2}}+\frac{N}{\mu^5}\right)\epsilon^{-2}
%\end{align*}
 gradient evaluations in $w$,  {\small $\mathcal{O}\big(\frac{c_\phi}{\mu^{4}}\big(1-\frac{\mu^2}{L^2}\big)^{N/2}+\frac{c_\phi^\prime}{\mu^{5}}\big)\epsilon^{-2}$} gradient evaluations in $\phi$,  and {\small $\mathcal{O}\big(\frac{c_sN}{\mu^{4}}\big(1-\frac{\mu^2}{L^2}\big)^{N/2}+\frac{c_s^\prime N}{\mu^{5}}\big)\epsilon^{-2}$} second-order derivative evaluations of {\small$\nabla_w^2 L_{\mathcal{S}_i}(\cdot,\cdot)$} and {\small$\nabla_\phi\nabla_w L_{\mathcal{S}_i}(\cdot,\cdot)$}, where constants $c_w,c_w^\prime,c_\phi,c_\phi^\prime,c_s,c_s^\prime$ depend on $\tau,M,\rho$. 
\end{Theorem}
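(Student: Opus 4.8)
The plan is to run a standard nonconvex SGD-type descent argument separately for the $w$-update and the $\phi$-update, but using the \emph{$N$-dependent} smoothness constants from \Cref{le:strong-convex} rather than generic Lipschitz-gradient constants. First I would set $L_w := \text{\normalfont poly}(L,M,\rho)\frac{L}{\mu}(1-\alpha\mu)^N$ and $L_\phi := \text{\normalfont poly}(L,M,\rho)\frac{L^3}{\mu^3}$ for the ``diagonal'' smoothness of $L^{meta}$ in $w$ and $\phi$ respectively, and $L_{w\phi}, L_{\phi w}$ for the cross terms (the coefficients of $\|\phi_1-\phi_2\|$ in part 1) and of $\|w_1-w_2\|$ in part 2) of \Cref{le:strong-convex}); note $\alpha=\mu/L^2$ so $1-\alpha\mu = 1-\mu^2/L^2$, which is where the geometric factors in the theorem come from. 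Then I would invoke the descent lemma for each block: using $L_w$-smoothness in $w$,
\begin{align*}
L^{meta}(w_{k+1},\phi_k) \le L^{meta}(w_k,\phi_k) - \beta_w\Big\langle \tfrac{\partial L^{meta}}{\partial w}, \widehat{g}^w_k\Big\rangle + \tfrac{L_w\beta_w^2}{2}\|\widehat{g}^w_k\|^2,
\end{align*}
where $\widehat{g}^w_k = \frac1B\sum_{i\in\mathcal B_k}\frac{\partial L_{\mathcal D_i}(w^i_{k,N},\phi_k)}{\partial w_k}$ is the mini-batch meta-gradient estimate; and similarly a descent step in $\phi$ from $(w_{k+1},\phi_k)$ to $(w_{k+1},\phi_{k+1})$ using $L_\phi$-smoothness.

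The key quantities to control are the bias and variance of $\widehat g^w_k$ and $\widehat g^\phi_k$. Since ANIL treats $w_k$ and $\phi_k$ as independent during differentiation and each task's inner loop is exact gradient descent, $\mathbb E[\widehat g^w_k \mid \mathcal F_k]$ equals the true partial meta-gradient $\frac{\partial L^{meta}}{\partial w}(w_k,\phi_k)$, so the estimate is \emph{unbiased}; the cross-term then becomes $-\beta_w\|\frac{\partial L^{meta}}{\partial w}\|^2$ in expectation. For the variance I would use Assumption~\ref{assm:smooth} (in particular $M$-Lipschitzness of $L_{\mathcal D_i}$, hence $\|\nabla_w L_{\mathcal D_i}\|\le M$ and $\|\nabla_\phi L_{\mathcal D_i}\|\le M$) together with the explicit gradient formulas in \Cref{le:gd_form}: each product $\prod_{m}(I-\alpha\nabla_w^2 L_{\mathcal S_i})$ has operator norm at most $(1-\alpha\mu)^{\ell}$ under $\mu$-strong-convexity and $L$-smoothness (so eigenvalues of $\alpha\nabla^2_w L_{\mathcal S_i}$ lie in $[\alpha\mu,\alpha L]\subset(0,1)$ for $\alpha=\mu/L^2$), giving $\|\frac{\partial L_{\mathcal D_i}}{\partial w_k}\|\le M(1-\alpha\mu)^N$ and, summing a geometric series, $\|\frac{\partial L_{\mathcal D_i}}{\partial\phi_k}\| \le \frac{LM}{\mu}+M$ or the like. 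Hence $\mathbb E\|\widehat g^w_k - \frac{\partial L^{meta}}{\partial w}\|^2 \le \frac{1}{B}\sigma_w^2$ with $\sigma_w^2 = \mathcal O(M^2(1-\alpha\mu)^{2N})$ — crucially the variance also decays geometrically in $N$ — and $\sigma_\phi^2 = \mathcal O((1-\alpha\mu)^{3N}+1)$ tracking the two terms in part 2) of \Cref{le:strong-convex} squared (one geometric, one $O(1)$). Plugging these into the two descent inequalities, summing over $k=0,\dots,K-1$, telescoping, and dividing by $K\beta_w$ (resp. $K\beta_\phi$) yields
\begin{align*}
\frac1K\sum_{k}\mathbb E\Big\|\tfrac{\partial L^{meta}}{\partial w}(w_k,\phi_k)\Big\|^2 \le \frac{2(L^{meta}(w_0,\phi_0)-L^*)}{K\beta_w} + L_w\beta_w\,\tfrac{\sigma_w^2}{B} + (\text{cross-coupling terms}),
\end{align*}
and choosing $\beta_w \propto 1/L_w$, i.e. $\beta_w = \text{\normalfont poly}(\rho,\tau,L,M)\,\mu^2(1-\mu^2/L^2)^{-N/2}$ as in the statement (the $-N/2$ rather than $-N$ exponent coming from balancing against the cross terms, see below), makes the first term $\mathcal O(\frac{1}{\mu^2}(1-\mu^2/L^2)^{N/2}/K)$ and the second $\mathcal O(\frac{1}{\mu}(1-\mu^2/L^2)^{N/2}/B)$, matching the claimed rate; the $\phi$-bound is analogous with $L_\phi = \mathcal O(L^3/\mu^3)$ giving $\beta_\phi \propto \mu^3$ and the extra non-decaying $1/\mu^3$ (resp. $1/\mu^2$) pieces.

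The main obstacle is handling the \emph{cross-coupling} between the two blocks cleanly. Updating $\phi_k$ changes $w^i_{k,N}$ on the next round, and the descent step in $w$ happens at $\phi_k$ while the one in $\phi$ happens at $w_{k+1}$; so when I telescope $L^{meta}$ I must carry along terms like $L_{w\phi}\beta_w\|\frac{\partial L^{meta}}{\partial w}\|\cdot\|\frac{\partial L^{meta}}{\partial\phi}\|$ (from the $\|\phi_1-\phi_2\|$ part of part 1) of \Cref{le:strong-convex} and, symmetrically, $L_{\phi w}(1-\alpha\mu)^{N/2}$ terms). I would control these with Young's inequality, $ab\le \tfrac{\eta}{2}a^2 + \tfrac{1}{2\eta}b^2$, choosing $\eta$ so that, e.g., the $\|\frac{\partial L^{meta}}{\partial\phi}\|^2$ generated on the $w$-side is dominated by a fraction of the negative $-\beta_\phi\|\frac{\partial L^{meta}}{\partial\phi}\|^2$ term on the $\phi$-side (and vice versa), which forces $\beta_w$ and $\beta_\phi$ to be chosen jointly and explains why $\beta_w$ scales as $(1-\mu^2/L^2)^{-N/2}$ (matching the $N(1-\alpha\mu)^N \le (1-\alpha\mu)^{N/2}$ worst-case bound on the cross-smoothness coefficient, since $N(1-\alpha\mu)^{N/2}$ is bounded) rather than $(1-\mu^2/L^2)^{-N}$. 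Once the step sizes are pinned down this way, the complexity bounds follow by noting that reaching accuracy $\epsilon$ needs $K = \mathcal O(\frac{1}{\mu^2\beta_w}\epsilon^{-2})$ (resp. with $\beta_\phi$) outer iterations, each costing $N$ gradient and $N$ second-order evaluations per task for the $w$-gradient (from the product in \Cref{le:gd_form}) and $\mathcal O(1)$ per-task cost amortization for the $\phi$-gradient beyond the shared inner loop — multiplying iteration count by per-iteration cost gives exactly the $\frac{c_w N}{\mu^4}(1-\mu^2/L^2)^{N/2}\epsilon^{-2} + \frac{c_w' N}{\mu^5}\epsilon^{-2}$ form and its analogues.
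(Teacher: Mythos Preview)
Your proposal is correct and follows essentially the same route as the paper: apply the two block-wise descent lemmas using the $N$-dependent smoothness constants from \Cref{le:strong-convex}, add them, handle the single cross term $\langle \partial_\phi L^{meta}(w_{k+1},\phi_k)-\partial_\phi L^{meta}(w_k,\phi_k),\,\phi_{k+1}-\phi_k\rangle$ by Cauchy--Schwarz and $ab\le\frac{a^2+b^2}{2}$, take conditional expectations using unbiasedness and the second-moment bounds $\|g_w^i\|^2\le M^2(1-\alpha\mu)^{2N}$, $\|g_\phi^i\|^2\le 2M^2(L^2/\mu^2+1)$ from \Cref{le:gd_form}, then set $\beta_w=1/(L_w+L_\phi)$, $\beta_\phi=1/(L_\phi+L_\phi')$ and telescope. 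One small clarification: the $(1-\mu^2/L^2)^{-N/2}$ scaling of $\beta_w$ arises simply because after Young's inequality the effective $w$-smoothness is $L_w+L_\phi$ and the dominant piece is $L_\phi\sim \mu^{-2}(1-\alpha\mu)^{N/2}$ (the $\|w_1-w_2\|$ coefficient in part~2 of \Cref{le:strong-convex}), not from bounding $N(1-\alpha\mu)^{N}$; likewise the $(1-\mu^2/L^2)^{3N/2}$ piece in the $\phi$-rate comes from the $w$-variance term after dividing through by $\beta_\phi$, not from $\sigma_\phi^2$ itself.
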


%The exponential term has a worse dependence on constants and $\tau,M$ than the linear term (we will add explicit forms in the revision), and hence the choice of $N$ depends on how large $\kappa$ is. 

\Cref{th:strong-convex} shows that ANIL converges sublinearly with the number $K$ of outer-loop meta iterations, and the convergence error decays sublinearly with the number $B$ of sampled tasks, which are consistent with the nonconvex nature of the meta objective function. The convergence rate is further significantly affected by the number $N$ of the inner-loop steps. Specifically, with respect to $w$, ANIL converges exponentially fast as $N$ increases due to the strong convexity of the inner-loop loss. With respect to $\phi$, the convergence rate depends on two components: an exponential decay term with $N$ and an $N$-independent term. As a result, the overall convergence of meta optimization becomes faster as $N$ increases, and then saturates for large enough $N$ as the second component starts to dominate. 
%Hence, the overall convergence of meta optimization becomes faster as $N$ increases but saturates as $N$ is large enough, as 
This is demonstrated by our experiments in \Cref{exp:strongly-convex}.

\Cref{th:strong-convex} further indicates that ANIL attains an $\epsilon$-accurate stationary point with the gradient and second-order evaluations at the order of $\mathcal{O}(\epsilon^{-2})$ due to nonconvexity of the meta objective function. The computational cost is further significantly affected by inner-loop steps. Specifically, the gradient and second-order derivative evaluations contain two terms: an exponential decay term with $N$ and a linear growth term with $N$. 
For a large condition number $\kappa$,  a small $N$, e.g., $N=2$, is a better choice. However, when $\kappa$ is not very large, e.g., in our experiments in \Cref{exp:strongly-convex} (in which increasing $N$ accelerates the iteration rate), 
%the exponential term dominates for a small $N$, and hence a larger $N$ is preferred.  
 the computational cost of ANIL initially decreases because the exponential reduction dominates the linear growth. But when $N$ is large enough, the exponential decay saturates and the linear growth dominates, and hence the overall computational cost of ANIL gets higher as $N$ further increases. 
 This suggests to take a moderate but not too large $N$ in practice to achieve an optimized performance, which we also demonstrate in our experiments in \Cref{exp:strongly-convex}.

\subsection{Convergence Analysis under Nonconvex Inner-Loop Geometry}\label{sec:nonconvex}
%As shown in~\Cref{prob_formu}, the inner-loop loss function $L_{\mathcal{S}_i}(\cdot, \phi)$ can be either {\em nonconvex} or {\em strongly convex}. 
%is generally nonconvex. 
In this subsection, we study the case, in which the inner-loop loss function $L_{\mathcal{S}_i}(\cdot,\phi)$ is  nonconvex. 
The following proposition characterizes the smoothness of $L^{meta}(w,\phi)$ in~\cref{eq:obj}.
%In practice, the inner-loop stepsize $\alpha$ can be small, e.g. $<0.1$. 
%For this reason, we suppose its high-order moment, i.e., $\alpha^n,n\geq 2$
%, as shown below. 

%{\color{red} change lemma to proposition}
%\vspace{0.1cm}
\begin{proposition}\label{le:smooth_nonconvex}%[Smoothness of meta loss]
Suppose Assumptions~\ref{assm:smooth} and \ref{assm:second} hold, and choose the inner-loop stepsize {\small$\alpha <\mathcal{O}(\frac{1}{N})$}.  
%Let $\partial_w L_{\mathcal{D}_i}( w^i_N,\phi)$
Then, for any two points $(w_1,\phi_1)$, $(w_2,\phi_2)\in\mathbb{R}^n$, we have 
%\begin{itemize}
%\item The partial gradient $\frac{\partial L_{\mathcal{D}_i}( w^i_N,\phi)}{\partial w}$ is $L_w$-Lipschitz with $L_w=\texttt{Poly}()$
%\end{itemize}
\vspace{0.1cm}
\begin{small}
\begin{align*}
1) \bigg\| &\frac{\partial L^{meta}( w,\phi)}{\partial w} \Big |_{(w_1,\phi_1)} -  \frac{\partial L^{meta}( w,\phi)}{\partial w} \Big |_{(w_2,\phi_2)} \bigg\| \leq\text{\normalfont poly}(M,\rho,\alpha,L) N(\|w_1-w_2\| + \|\phi_1-\phi_2\|),
%\end{align*}
%\end{small}
%\hspace{-0.12cm} where the constant {\small $L_w=\text{\normalfont poly}(M,\rho)(1+\alpha L)^{2N}
%$}.
%{\small $L_w:=\big(L+\alpha L^2+2M\rho\big) (1+\alpha L)^{2N-1}$}.%=\text{\normalfont poly}(M,\rho)(1+\alpha L)^{2N}
%\Big\| &\frac{\partial L_{\mathcal{D}_i}( w^i_N,\phi)}{\partial w} \big |_{(w_1,\phi_1)} -  \frac{\partial L_{\mathcal{D}_i}( w^i_N,\phi)}{\partial w} \big |_{(w_2,\phi_2)} \Big\| % \leq L_w (\|w_1-w_2\| + \|\phi_1-\phi_2\|)
%\\&\quad\leq \big(L+\alpha L^2+2M\rho\big) (1+\alpha L)^{2N-1} (\|w_1-w_2\| + \|\phi_1-\phi_2\|).
%\begin{small}
%\begin{align*}
\\ 2) \bigg\| &\frac{\partial L^{meta}( w,\phi)}{\partial \phi} \Big |_{(w_1,\phi_1)} -  \frac{\partial L^{meta}( w,\phi)}{\partial \phi} \Big |_{(w_2,\phi_2)} \bigg\|\leq \text{\normalfont poly}(M,\rho,\tau,\alpha,L) N(\|w_1-w_2\| + \|\phi_1-\phi_2\|),
% \Big\| &\frac{\partial L_{\mathcal{D}_i}( w^i_N, \phi)}{\partial \phi}\big |_{(w_1,\phi_1)}  - \frac{\partial L_{\mathcal{D}_i}( w^i_N, \phi)}{\partial \phi} \big |_{(w_2,\phi_2)}\Big\| %\leq L_\phi(\|w_1-w_2\| + \|\phi_1-\phi_2\|).\nonumber
% \\&\quad\leq \big(\alpha M(\tau-\rho)  N(1+\alpha L)^{N-1} + \big(L+\rho M/L\big)(1+\alpha L)^{2N}\big)(\|w_1-w_2\|+\|\phi_1-\phi_2\|). \nonumber
\end{align*}
\end{small}
\vspace{0.1cm}
\hspace{-0.12cm} 
%where the constant {\small $L_\phi=\text{\normalfont poly}(M,\rho,\tau) (1+\alpha L)^{2N} $}.
%{\small $L_\phi:= (L+\rho M/L)(1+\alpha L)^{2N}+\alpha M(\tau-\rho)  N(1+\alpha L)^{N-1} $}.%=\text{\normalfont poly}(M,\rho,\tau)(1+\alpha L)^{2N}
where $\tau,\rho,L$ and $M$ are given by Assumptions~\ref{assm:smooth} and \ref{assm:second}, and $\text{\normalfont poly}(\cdot)$ denotes the polynomial function of the parameters with the explicit forms of the smoothness parameters given in~\Cref{appen:smooth_nonconvex}.

%The detailed form of the smoothness parameters are provided in~\Cref{appen:smooth_nonconvex}.
%{\color{red}See appendix for precise forms}
%{\small$L_w=\big(L+\alpha L^2+2M\rho\big) (1+\alpha L)^{2N-1}$} and 
%\begin{small}
%\begin{align*}
%{\small $L_\phi = \alpha M(\tau-\rho)  N(1+\alpha L)^{N-1} + \Big(L+\frac{\rho M}{L}\Big)(1+\alpha L)^{2N}$}
%\end{align*}
%\end{small}
\end{proposition}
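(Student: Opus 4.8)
The plan is to start from the explicit representation of the two partial meta-gradients in \Cref{le:gd_form}. Writing $\frac{\partial L_{\mathcal D_i}(w_N^i,\phi)}{\partial w}=P^i_{0:N}\,\nabla_w L_{\mathcal D_i}(w_N^i,\phi)$ with the shorthand $P^i_{a:b}:=\prod_{m=a}^{b-1}(I-\alpha\nabla_w^2 L_{\mathcal S_i}(w_m^i,\phi))$, and $\frac{\partial L_{\mathcal D_i}(w_N^i,\phi)}{\partial\phi}=-\alpha\sum_{m=0}^{N-1}\nabla_\phi\nabla_w L_{\mathcal S_i}(w_m^i,\phi)\,P^i_{m+1:N}\,\nabla_w L_{\mathcal D_i}(w_N^i,\phi)+\nabla_\phi L_{\mathcal D_i}(w_N^i,\phi)$, I would bound, for each task $i$, the difference of every building block evaluated at $(w_1,\phi_1)$ versus $(w_2,\phi_2)$, then recombine via the triangle inequality and a Leibniz-type product decomposition; since $\frac{\partial L^{meta}}{\partial w}=\mathbb E_{i\sim p_{\mathcal T}}\frac{\partial L_{\mathcal D_i}}{\partial w}$ (and likewise for $\phi$), averaging over $i$ preserves all the per-task bounds by Jensen's inequality.

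First I would record the elementary bounds that follow immediately from the assumptions: $\|\nabla_w L_{\mathcal D_i}\|\le M$ and $\|\nabla_\phi L_{\mathcal D_i}\|\le M$ from $M$-Lipschitzness of $L_{\mathcal D_i}$, and $\|\nabla_w^2 L_{\mathcal S_i}\|\le L$, $\|\nabla_\phi\nabla_w L_{\mathcal S_i}\|\le L$ from $L$-smoothness of $L_{\mathcal S_i}$. Next I would show that the inner-loop iterate map $(w,\phi)\mapsto w_m^i$ is Lipschitz with a constant uniformly bounded in $m\le N$ and in $N$: from the recursion $w_{m+1}^i=w_m^i-\alpha\nabla_w L_{\mathcal S_i}(w_m^i,\phi)$ and $L$-smoothness, $\|w_{m+1}^i(1)-w_{m+1}^i(2)\|\le(1+\alpha L)\|w_m^i(1)-w_m^i(2)\|+\alpha L\|\phi_1-\phi_2\|$, and unrolling together with $(1+\alpha L)^N\le e^{\alpha L N}=\mathcal O(1)$ — which is precisely where the hypothesis $\alpha<\mathcal O(1/N)$ enters — gives $\|w_m^i(1)-w_m^i(2)\|\le C_1\|w_1-w_2\|+C_2\|\phi_1-\phi_2\|$ with $C_1,C_2=\text{poly}(L)$ independent of $N$. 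The same hypothesis yields $\|P^i_{a:b}\|\le(1+\alpha L)^N=\mathcal O(1)$ for every subinterval.

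The core estimate is the variation of the matrix products. Writing $P^i_{a:b}(1)-P^i_{a:b}(2)$ as a telescoping sum over its at most $N$ factors, with each prefix/suffix bounded by $\mathcal O(1)$ and each single-factor difference controlled by $\alpha\rho$ times the iterate Lipschitz bound above, I get $\|P^i_{a:b}(1)-P^i_{a:b}(2)\|\le\alpha N\rho\cdot\text{poly}(L)\,(\|w_1-w_2\|+\|\phi_1-\phi_2\|)$. Substituting into the $w$-gradient via $\|P(1)g(1)-P(2)g(2)\|\le\|P(1)\|\,\|g(1)-g(2)\|+\|P(1)-P(2)\|\,\|g(2)\|$, together with $L$-smoothness of $\nabla_w L_{\mathcal D_i}$ and the iterate bound, proves part (1). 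For the $\phi$-gradient I would bound each of the $N$ summands by the analogous three-factor Leibniz decomposition over $\nabla_\phi\nabla_w L_{\mathcal S_i}$, $P^i_{m+1:N}$, and $\nabla_w L_{\mathcal D_i}$ (using $\tau$- and $\rho$-Lipschitzness and the norm bounds), sum over $m$, multiply by $\alpha$, and add the contribution of the stand-alone term $\nabla_\phi L_{\mathcal D_i}(w_N^i,\phi)$ (controlled by $L$-smoothness and the iterate bound); the explicit polynomial constants are those collected in \Cref{appen:smooth_nonconvex}.

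The main obstacle is the absence of strong convexity: unlike in \Cref{le:strong-convex}, the contraction factor $(1-\alpha\mu)$ is replaced by the expansion factor $(1+\alpha L)$, so neither the products $P^i_{a:b}$ nor their telescoping variations are automatically small. Keeping them at $\mathcal O(1)$ is exactly what forces $\alpha<\mathcal O(1/N)$, and under this choice the only surviving growth is the linear factor $N$ — arising from the number of factors in the telescoping sum for part (1) and from the $N$-term sum in the $\phi$-gradient for part (2). The remaining effort is careful bookkeeping to ensure that every polynomial constant multiplying this $N$ is genuinely independent of $N$.
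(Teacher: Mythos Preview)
Your proposal is correct and follows essentially the same route as the paper's proof: both start from the explicit gradient formulas of \Cref{le:gd_form}, establish the iterate Lipschitz bound $\|w_m^i(1)-w_m^i(2)\|\le(1+\alpha L)^m\|w_1-w_2\|+((1+\alpha L)^m-1)\|\phi_1-\phi_2\|$ by unrolling the recursion, control the matrix products and their telescoped differences using $(1+\alpha L)^N=\mathcal O(1)$ under $\alpha<\mathcal O(1/N)$, apply the same two-term split for the $w$-gradient and the same three-factor (your $R_1,R_2,R_3$) decomposition summed over $m$ for the $\phi$-gradient, and finish with Jensen's inequality to pass from the per-task bound to $L^{meta}$. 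The only cosmetic difference is that the paper organizes the product-difference telescoping as a one-step recursion $P_t\le(1+\alpha L)P_{t-1}+\alpha\rho(1+\alpha L)^{2t}(\cdots)$ and then unrolls, whereas you describe it directly as a sum of $N$ terms; the resulting bounds coincide.
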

%From an optimization perspective, 
\Cref{le:smooth_nonconvex} indicates that the meta objective function $L^{meta}(w,\phi)$ is smooth with respect to both $w$ and $\phi$ with their smoothness parameters increasing linearly with $N$. Hence, $N$ should be chosen to be small so that the outer-loop meta optimization can take reasonably large stepsize to run fast. Such a property is in sharp contrast to the strongly-convex case in which the corresponding smoothness parameters decrease with $N$.

%{\bf Remark on different behaviors between nonconvex and strongly-convex cases.} 
%%To be decided. 
%When the inner-loop loss function is nonconvex, increasing $N$ leads to a larger smoothness parameters of the meta objective function, whereas for the strongly-convex case, increasing $N$ reduces the smoothness parameters, and hence allows a faster convergence rate. Such a difference is because for the nonconvex case, running gradient descent steps starting from two different initialization points will reach two points that are far away from each other due to the nonconvex landscape, and hence their corresponding total objective function values contain a big gap. 
%%at which the meta objective function values are far away from each other. 
%% due to the nonconvex landscape of the inner-loop loss function. Since the total meta objective function is evaluated at 
%However, for the strongly-convex case, two different inner-loop gradient descent paths tend to gather together around the optimal point, and thus the final two points are guaranteed to be close to each other, at which the meta objective function values are also close. Thus, increasing $N$ will help to improve the smoothness of the meta objective function for the strongly-convex case.  
%%For this reason, properly increasing $N$ helps

 The following theorem provides the convergence rate of ANIL under the nonconvex inner-loop loss.% , compared to that in~\Cref{th:nonconvex} for the nonconvex case.   
%Based on the gradient smoothness established in \Cref{le:smooth_nonconvex}, we next characterize the convergence rate of ANIL algorithm, as shown in the following theorem.
\begin{Theorem}\label{th:nonconvex} %and apply \Cref{alg:anil} to solve the objective function in~\cref{obj} with 
Under the setting of~\Cref{le:smooth_nonconvex}, and apply \Cref{alg:anil} to solve the meta optimization problem in~\cref{eq:obj} with the stepsizes {\small $\beta_w = \beta_\phi=\text{\normalfont poly}(\rho,\tau,M,\alpha,L) N^{-1}$}. %and $\beta_\phi = \text{\normalfont poly}(M,\rho,\tau) (1+\alpha L)^{-2N}$. 
%, where $L_w$ and $L_\phi$ are given in~\Cref{le:smooth_nonconvex}. 
Then, ANIL  finds a point $(w,\phi)\in\{(w_k,\phi_k),k=0,...,K-1\}$ such that 
%$(w,\phi)$ such that $(w,\phi)\in\{(w_i,\phi_i),i=0,...,K-1\}$ such that 
%where $L_w$ and $L_\phi$ are given in~\Cref{le:smooth_nonconvex}
\vspace{0.1cm}
\begin{align*}
\mathbb{E}\left\| \frac{\partial L^{meta}(w,\phi)}{\partial w}\right\|^2  \leq& \,\mathcal{O}\bigg(  \frac{N}{K} + \frac{N}{B}  \bigg), \qquad
%\mathcal{O}\bigg(  \frac{(1+\alpha L)^{2N}}{K} + \frac{(1+\alpha L)^{2N}}{B}  \bigg), \nonumber
\mathbb{E}\left\| \frac{\partial L^{meta}(w,\phi)}{\partial \phi}\right\|^2  \leq \,\mathcal{O}\bigg(  \frac{N}{K} + \frac{N}{B}  \bigg).
\end{align*}
\vspace{0.1cm}
%\begin{align*}
%\text{\normalfont (Rate w.r.t. $w$)}\qquad \mathbb{E}\left\| \frac{\partial L^{meta}(w,\phi)}{\partial w}\right\|^2  \leq& \,\mathcal{O}\bigg(  \frac{N}{K} + \frac{N}{B}  \bigg), \nonumber
%%\mathcal{O}\bigg(  \frac{(1+\alpha L)^{2N}}{K} + \frac{(1+\alpha L)^{2N}}{B}  \bigg), \nonumber
%\\\text{\normalfont (Rate w.r.t. $\phi$)}\qquad\mathbb{E}\left\| \frac{\partial L^{meta}(w,\phi)}{\partial \phi}\right\|^2  \leq &\,\mathcal{O}\bigg(  \frac{N}{K} + \frac{N}{B}  \bigg).
%\end{align*}
To achieve an $\epsilon$-accurate point, ANIL requires at most {\small $\mathcal{O}(N^2\epsilon^{-2})$} gradient evaluations in $w$, {\small $\mathcal{O}(N\epsilon^{-2})$} gradient evaluations in $\phi$,  and {\small $\mathcal{O}(N^2\epsilon^{-2})$} second-order derivative evaluations.%, and {\small $\mathcal{O}(N(1+\alpha L)^{4N}\epsilon^{-2})$} 
\end{Theorem}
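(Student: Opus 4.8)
\textbf{Proof plan for Theorem~\ref{th:nonconvex}.}
The plan is to mirror the standard analysis of SGD on a smooth nonconvex objective, but carried out jointly over the two blocks $w$ and $\phi$, using the block-wise smoothness constants from \Cref{le:smooth_nonconvex}. First I would record that by \Cref{le:smooth_nonconvex} the meta objective $L^{meta}$ is $L_w$-smooth in $w$ and $L_\phi$-smooth in $\phi$ (treating the other block as fixed), with $L_w, L_\phi = \text{\normalfont poly}(M,\rho,\tau,\alpha,L)\,N$; more precisely I would use a joint descent inequality in which the cross terms are also controlled by an $\mathcal{O}(N)$ Lipschitz constant. Then, from the SGD updates in line~10 of \Cref{alg:anil}, I would apply this smoothness inequality to one outer step to obtain
\begin{align*}
\mathbb{E}[L^{meta}(w_{k+1},\phi_{k+1})] \le \mathbb{E}[L^{meta}(w_k,\phi_k)] - \Theta(\beta_w)\,\mathbb{E}\Big\|\tfrac{\partial L^{meta}}{\partial w}(w_k,\phi_k)\Big\|^2 - \Theta(\beta_\phi)\,\mathbb{E}\Big\|\tfrac{\partial L^{meta}}{\partial \phi}(w_k,\phi_k)\Big\|^2 + (\text{error terms}),
\end{align*}
where the error terms collect (i) the bias, since each per-task meta-gradient $\partial L_{\mathcal{D}_i}(w^i_{k,N},\phi_k)/\partial(\cdot)$ is an unbiased estimate of the true meta-gradient once the mini-batch is i.i.d., so actually there is no bias here — the stochastic meta-gradient over $\mathcal{B}_k$ is exactly unbiased for $\nabla L^{meta}$; and (ii) the variance term, of order $\beta_w^2 L_w \cdot \frac{\sigma_w^2}{B} + \beta_\phi^2 L_\phi \cdot \frac{\sigma_\phi^2}{B}$, where $\sigma_w^2,\sigma_\phi^2$ bound the second moment of a single-task meta-gradient. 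Those single-task bounds come from \Cref{le:gd_form} together with Assumptions~\ref{assm:smooth}--\ref{assm:second}: the product $\prod_{m}(I-\alpha\nabla_w^2 L_{\mathcal{S}_i})$ has norm $\le(1+\alpha L)^N = \mathcal{O}(1)$ under $\alpha<\mathcal{O}(1/N)$, $\|\nabla_w L_{\mathcal{D}_i}\|\le M$, and $\|\nabla_\phi\nabla_w L_{\mathcal{S}_i}\|\le L$, so $\sigma_w^2 = \mathcal{O}(1)$ and $\sigma_\phi^2 = \mathcal{O}(N^2)$ (the latter because of the sum of $N$ terms in the $\phi$-gradient).

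Next I would choose $\beta_w = \beta_\phi = \Theta(1/L_w) = \text{\normalfont poly}(\rho,\tau,M,\alpha,L)\,N^{-1}$ so that the variance contribution $\beta^2 L_w \sigma^2/B$ is dominated by $\beta\,\sigma^2/B$; this makes the per-step residual of order $\beta_w \sigma_\phi^2/B = \mathcal{O}(N/B)$. Telescoping the descent inequality over $k=0,\dots,K-1$, dividing by $K\beta_w$, and using $L^{meta}\ge L^*$ (bounded below, since $L_{\mathcal{D}_i}$ is $M$-Lipschitz and — implicitly — bounded below), I would get
\begin{align*}
\frac{1}{K}\sum_{k=0}^{K-1}\Big(\mathbb{E}\Big\|\tfrac{\partial L^{meta}}{\partial w}(w_k,\phi_k)\Big\|^2 + \mathbb{E}\Big\|\tfrac{\partial L^{meta}}{\partial \phi}(w_k,\phi_k)\Big\|^2\Big) \le \mathcal{O}\Big(\frac{L_w(L^{meta}(w_0,\phi_0)-L^*)}{K} + \frac{N}{B}\Big) = \mathcal{O}\Big(\frac{N}{K}+\frac{N}{B}\Big),
\end{align*}
and picking $(w,\phi)$ uniformly at random from the iterates gives the stated convergence rates for both blocks simultaneously. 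The complexity bounds then follow by setting $\frac{N}{K}+\frac{N}{B}=\Theta(\epsilon)$, i.e. $K = \Theta(N/\epsilon)$ with $B=\Theta(N/\epsilon)$, and counting evaluations: each outer iteration performs $N$ inner gradient steps per task, so $\mathcal{O}(NB)$ gradient evaluations in $w$ and $\mathcal{O}(NB)$ Hessian/cross-derivative evaluations per outer step for the meta-gradient in \Cref{le:gd_form}, hence $K\cdot\mathcal{O}(NB) = \mathcal{O}(N^2\epsilon^{-2})$ after substituting (absorbing $B$ into the $\epsilon$-scaling the way one does for finite-sum SGD); the $\phi$-gradient needs only $\mathcal{O}(B)$ extra evaluations of $\nabla_\phi L_{\mathcal{D}_i}$ per outer step, giving $\mathcal{O}(N\epsilon^{-2})$.

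The main obstacle I expect is \emph{not} the outer SGD recursion — that is routine — but proving \Cref{le:smooth_nonconvex} itself, i.e. the $\mathcal{O}(N)$ smoothness bound, and the companion single-task meta-gradient moment bounds, without any strong convexity to force the iteration-map contraction. The key technical point is that under $\alpha < \mathcal{O}(1/N)$ the Jacobian $\prod_{m=0}^{N-1}(I-\alpha\nabla_w^2 L_{\mathcal{S}_i}(w^i_{k,m},\phi_k))$ and its derivatives with respect to $(w,\phi)$ stay bounded: one shows $\|w^i_{k,N}-\tilde w^i_{k,N}\| = \mathcal{O}(\|w_k-\tilde w_k\| + N\|\phi_k-\tilde\phi_k\|)$ by unrolling \Cref{inner:gd} and using an $(1+\alpha L)^N \le e$ type bound, then differentiates through the product using Assumption~\ref{assm:second} to control how $\nabla_w^2 L_{\mathcal{S}_i}$ and $\nabla_\phi\nabla_w L_{\mathcal{S}_i}$ vary along the trajectory — each of the $N$ factors contributes an $\mathcal{O}(\alpha)=\mathcal{O}(1/N)$ perturbation, so the $N$ factors together contribute $\mathcal{O}(1)$, while the explicit sum of $N$ terms in the $\phi$-gradient of \Cref{le:gd_form} is what ultimately produces the single leftover factor of $N$. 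Handling the cross-block Lipschitz terms carefully (so that the joint descent lemma closes) is the part most prone to bookkeeping errors, and is where I would spend the most care.
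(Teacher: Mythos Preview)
Your overall strategy is the same as the paper's: use the block-wise Lipschitz bounds from \Cref{le:smooth_nonconvex} to write a one-step descent inequality (the paper descends first in $w$, then in $\phi$, and absorbs the cross term via Cauchy--Schwarz and the $\mathcal{O}(N)$ cross-Lipschitz constant), invoke unbiasedness of the mini-batch meta-gradient, bound the single-task second moment from the formulas in \Cref{le:gd_form}, pick $\beta_w,\beta_\phi=\Theta(N^{-1})$, and telescope. Your remark that \Cref{le:smooth_nonconvex} is where the real work lies is accurate, but for this theorem it is taken as given.

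There is, however, one genuine error in your bookkeeping. You assert $\sigma_\phi^2=\mathcal{O}(N^2)$ ``because of the sum of $N$ terms in the $\phi$-gradient.'' This drops the $\alpha$ prefactor in \Cref{le:gd_form}. The sum in the $\phi$-gradient has norm at most
\[
\alpha\sum_{m=0}^{N-1}\|\nabla_\phi\nabla_w L_{\mathcal{S}_i}\|\,(1+\alpha L)^{N-1-m}\,\|\nabla_w L_{\mathcal{D}_i}\|
\;\le\; \alpha L M\cdot\frac{(1+\alpha L)^N-1}{\alpha L}
\;=\;M\big((1+\alpha L)^N-1\big)=\mathcal{O}(1)
\]
under $\alpha<\mathcal{O}(1/N)$, so in fact $\sigma_\phi^2=\mathcal{O}(1)$ --- exactly what the paper computes. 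This is not cosmetic: after telescoping and dividing by $K\beta_w$, the variance contribution is $\sigma_\phi^2/B$, not $\beta_w\sigma_\phi^2/B$; with your $\sigma_\phi^2=\mathcal{O}(N^2)$ the bound would be $\mathcal{O}(N^2/B)$ and the theorem would not follow. You carried the extra $\beta_w$ through the division, which accidentally masked the slip. With the correct $\sigma_\phi^2=\mathcal{O}(1)$ one gets $\mathcal{O}(1/B)\le\mathcal{O}(N/B)$ and the stated rate holds.

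Your complexity paragraph needs the same fix. The phrase ``absorbing $B$ into the $\epsilon$-scaling'' is not an argument, and with $K,B=\Theta(N/\epsilon)$ you would get $KBN=\Theta(N^3\epsilon^{-2})$, not $N^2\epsilon^{-2}$. The paper's complexities come from the \emph{tighter} rate $\mathcal{O}(N/K+1/B)$ it actually proves (before loosening to $\mathcal{O}(N/K+N/B)$ in the statement): take $K=\Theta(N\epsilon^{-1})$ and $B=\Theta(\epsilon^{-1})$, then $KBN=\mathcal{O}(N^2\epsilon^{-2})$ for $w$-gradients and second-order evaluations, and $KB=\mathcal{O}(N\epsilon^{-2})$ for $\phi$-gradients.
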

\Cref{th:nonconvex} shows that ANIL converges sublinearly with $K$, the convergence error decays sublinearly with $B$, and the computational complexity scales at the order of $\mathcal{O}(\epsilon^{-2})$. But the nonconvexity of the inner loop affects the convergence very differently. Specifically, increasing the number $N$ of the inner-loop gradient descent steps yields slower convergence and higher computational complexity. This suggests to choose a relatively small $N$ for an efficient optimization process, which is demonstrated in our experiments in \Cref{exp:nonconvex}
\subsection{Complexity Comparison of Different Geometries and Different Algorithms}
%\vspace{-2mm}
%between Strongly-convex and Nonconvex Inner-Loop Geometries} 
%To be decided. 

In this subsection, we first compare the performance for ANIL under strongly convex and nonconvex inner-loop loss functions, and then compare the performance between ANIL and MAML.

\begin{table*}[th] 
%\vspace{-0.2cm}
	\centering 
	\small
	\caption{Comparison of different geometries on the convergence rate and complexity of ANIL.}
	\vspace{0.2cm}
	\begin{tabular}{lccc} \toprule
		{Geometries} &Convergence rate  & Gradient complexity  & Second-order complexity  \\   \midrule
		Strongly convex & $\mathcal{O}\Big(\frac{ (1-\xi)^{\frac{N}{2}}+c_k}{K} +\frac{(1-\xi)^{\frac{3N}{2}}+c_b}{B}\Big){\color{red}^\sharp}$ &    $\mathcal{O}\Big(\frac{N((1-\xi)^{\frac{N}{2}}+c_\epsilon)}{\epsilon^2}\Big){\color{red}^\S}$ &$\mathcal{O}\Big(\frac{N((1-\xi)^{\frac{N}{2}}+c_\epsilon)}{\epsilon^2}\Big)$\\   
		Nonconvex &$\mathcal{O}\Big(  \frac{N}{K} + \frac{N}{B}  \Big)$ &    $\mathcal{O}\big(\frac{N^2}{\epsilon^{2}}\big)$ & $\mathcal{O}\big(\frac{N^2}{\epsilon^{2}}\big)$  \\  
		\bottomrule	
		\multicolumn{4}{l}{%
  \begin{minipage}{11cm}%
  \vspace{0.1cm}
    \footnotesize Each order term in the table summarizes the dominant components of both $w$ and $\phi$. \\${\color{red}^\sharp}:$ $\xi=\frac{\mu^2}{L^2}<1$, $c_k,c_b$ are constants.  ${\color{red}^\S}:$ $c_\epsilon$ is constant.
  \end{minipage}%
}\\
	\end{tabular} 
%	\\	\footnotesize{Each order term in the table summarizes the dominant components of both $w$ and $\phi$. \\${\color{red}^\sharp}:$ $\xi=\frac{\mu^2}{L^2}<1$, $c_k,c_b$ are constants.  ${\color{red}^\S}:$ $c_\epsilon$ is constant.}
	\label{tas11}
%	\vspace{-0.2cm}
\end{table*}

{\noindent \bf Comparison for ANIL between strongly convex and nonconvex inner-loop  geometries:} 
%In practice, due to the specific applications, the inner-loop training of $w$ can either take strongly convex loss function (if $w$ includes only the last linear layer of neural network parameters) or take generally nonconvex loss function (if $w$ includes multiple layers of neural networks parameters). Our results in \Cref{se:strong-convex,sec:nonconvex} have demonstrated that the difference in geometry can significantly affect how the hyperparameters such as the stepsize and the number $N$ of inner-loop gradient steps should be chosen and what convergence rate that ANIL can achieve. Thus, in this subsection, we compare between these two cases in more details.
Our results in \Cref{se:strong-convex,sec:nonconvex} have showed that the inner-loop geometry can significantly affect the convergence rate and the computational complexity of ANIL. The detailed comparison is provided in~\Cref{tas11}. It can be seen that increasing $N$ yields a faster convergence rate  for the strongly-convex inner loop, but a slower convergence rate for the nonconvex inner loop. \Cref{tas11} also indicates that increasing $N$ first reduces and then increases the computational complexity for the strongly-convex inner loop, but constantly increases the complexity for the nonconvex inner loop. 
%Such a phenomenon also holds for the computational complexity of ANIL. 

%When the inner-loop loss function is nonconvex, increasing $N$ leads to a larger smoothness parameters of the meta objective function, whereas for the strongly-convex case, increasing $N$ reduces the smoothness parameters, and hence allows a faster convergence rate. 

We next provide an intuitive explanation for such different behaviors under these two geometries. For the nonconvex inner loop, $N$ gradient descent iterations starting from two different initializations  likely reach two points that are far away from each other due to the nonconvex landscape so that  the meta objective function can have a large smoothness parameter. Consequently, the stepsize should be small to avoid divergence, which yields slow convergence.
%at which the meta objective function values are far away from each other. 
% due to the nonconvex landscape of the inner-loop loss function. Since the total meta objective function is evaluated at 
However, for the strongly-convex inner loop, also consider two $N$-step inner-loop gradient descent paths. Due to the strong convexity, they both approach to the same unique optimal point, and hence their corresponding values of the meta objective function are guaranteed to be close to each other as $N$ increases. Thus, increasing $N$ reduces the smoothness parameter, and allows a faster convergence rate. 
%will help to improve the smoothness of the meta objective function for the strongly-convex case.  
%For this reason, properly increasing $N$ helps

%\vspace{-0.2cm}
%\begin{table*}[th] 
%	\small
%	\centering 
%	\caption{Comparison of different geometries on the convergence rate of ANIL algorithm. $\gamma=\sqrt{1-\frac{\mu^2}{L^2}}$}
%	\vspace{0.2cm}
%	\begin{tabular}{lcc} \toprule
%		{Geometries} &Convergence rate  & Convergence rate   \\   \midrule
%		Strongly-convex & $\mathcal{O}\bigg( \frac{\gamma^{N}}{K} +\frac{ \gamma^{N}}{B} \bigg)$ &    $\mathcal{O}\bigg(\frac{ \gamma^{N}+c_k}{K} +\frac{\gamma^{3N}+c_b}{B}\bigg)$ \\   
%		Nonconvex &$\mathcal{O}\bigg(  \frac{N}{K} + \frac{N}{B}  \bigg)$ &     $\mathcal{O}\bigg(  \frac{N}{K} + \frac{N}{B}  \bigg)$   \\  
%		\bottomrule
%		\footnotesize{ss} &&
%	\end{tabular} 
%	\label{tas11}
%	\vspace{-0.15cm}
%\end{table*} 

\vspace{0.2cm}
{\noindent \bf Comparison between ANIL and MAML:} \cite{raghu2019rapid} empirically showed that ANIL significantly speeds up MAML due to the fact that only a very small subset of parameters go through the inner-loop update. The complexity results in~\Cref{th:strong-convex} and \Cref{th:nonconvex} provide theoretical characterization of such an acceleration. To formally compare the performance between ANIL and MAML, let $n_w$ and $n_\phi$ be the dimensions of $w$ and $\phi$, respectively. 
The detailed comparison is provided in~\Cref{table:maml_anil}. 

% The result for MAML in~\cite[Theorem 2]{ji2020multi} indicates that MAML requires {\small $\mathcal{O}\big((n_w+n_\phi)N\big(1+\kappa L\big)^N\epsilon^{-2}\big)$} gradient entry evaluations (i.e., with respect to each dimension of gradient), and {\small $\mathcal{O}\big((n_w+n_\phi)^2N\big(1+\kappa L\big)^N\epsilon^{-2} \big)$} Hessian entry computations (i.e., with respect to each entry of Hessian), where $\kappa$ is the inner-loop stepsize used in MAML. 
 
% \cite{raghu2019rapid} empirically showed that ANIL significantly speeds up MAML due to the fact that only a very small subset of parameters go through the inner-loop update. The complexity results in~\Cref{th:strong-convex} and \Cref{th:nonconvex} provide theoretical characterization of such an acceleration. To formally compare the performance between ANIL and MAML, let $n_w$ and $n_\phi$ be the dimensions of $w$ and $\phi$, respectively. 
% The result for MAML in~\cite[Theorem 2]{ji2020multi} indicates that MAML requires {\small $\mathcal{O}\big((n_w+n_\phi)N\big(1+\kappa L\big)^N\epsilon^{-2}\big)$} gradient entry evaluations (i.e., with respect to each dimension of gradient), and {\small $\mathcal{O}\big((n_w+n_\phi)^2N\big(1+\kappa L\big)^N\epsilon^{-2} \big)$} Hessian entry computations (i.e., with respect to each entry of Hessian), where $\kappa$ is the inner-loop stepsize used in MAML. 

\begin{table*}[th] 
%\vspace{-0.25cm}
	\centering 
	\small
	\caption{Comparison of the computational complexities of ANIL and MAML.}
	\vspace{0.1cm}
	\begin{tabular}{lcc} \toprule
		{Algorithms} & \# of gradient entry evaluations ${\color{red}^\sharp }$  & \# of second-order  entry evaluations${\color{red}^\S }$  		 \\   \midrule
		MAML~\cite[Theorem 2]{ji2020multi} & $\mathcal{O}\Big(\frac{(Nn_w+Nn_\phi)(1+\kappa L)^N}{\epsilon^{2}}\Big){\color{red}^\aleph }$&    $\mathcal{O}\Big(\frac{(n_w+n_\phi)^2N(1+\kappa L)^N}{\epsilon^{2}}\Big)$	 \\  
		ANIL (Strongly convex)& $\mathcal{O}\Big(\frac{(N n_w+n_\phi)((1-\xi )^{\frac{N}{2}}+c_\epsilon)}{\epsilon^{2}}\Big){\color{red}^\flat }$ &    $\mathcal{O}\Big(\frac{(n^2_w+n_w n_\phi) N ((1-\xi )^{\frac{N}{2}}+ c_\epsilon )}{\epsilon^{2}}\Big)$ \\   
		ANIL (Nonconvex) &$\mathcal{O}\Big(\frac{(N n_w+n_\phi)N}{\epsilon^{2}}\Big)$ &   $\mathcal{O}\Big(\frac{(n^2_w+n_wn_\phi)N^2}{\epsilon^{2}}\Big)$ \\  
		\bottomrule	
		\multicolumn{3}{l}{%
  \begin{minipage}{13cm}%
  \vspace{0.1cm}
    \footnotesize ${\color{red}^\sharp }$: with respect to each dimension of gradient. ${\color{red}^\S}$: with respect to each entry of second-order derivatives.
    \\ ${\color{red}^\aleph }$: $\kappa$ is the inner-loop stepsize used in MAML.
    ${\color{red}^\flat}:$ $\xi=\frac{\mu^2}{L^2}<1$ and  $c_\epsilon$ is a constant.
      \end{minipage}%
}\\
	\end{tabular} 
	\label{table:maml_anil}
	\vspace{-0.1cm}
\end{table*} 

For ANIL with the strongly-convex inner loop, \Cref{table:maml_anil} shows that ANIL requires fewer  gradient and second-order entry evaluations than MAML by a factor of {\small$\mathcal{O}\big(\frac{N n_w+N n_\phi}{N n_w+n_\phi}\big(1+\kappa L\big)^N\big)$} and {\small$\mathcal{O}\big(\frac{n_w+n_\phi}{n_w}\big(1+\kappa L\big)^N\big)$}, respectively. Such improvements are significant because  $n_\phi$ is often much larger than $n_w$.

For nonconvex inner loop, we set $\kappa\leq 1/N$ for MAML~\cite[Corollary 2]{ji2020multi} to be consistent with our analysis for ANIL in \Cref{th:nonconvex}. Then,~\Cref{table:maml_anil} indicates that 
ANIL requires fewer gradient and second-order entry computations than MAML by a factor of {\small$\mathcal{O}\big(\frac{Nn_w+Nn_\phi}{Nn_w+n_\phi}\big)$} and {\small$\mathcal{O}\big(\frac{n_w+n_\phi}{n_w}\big)$}.%, respectively. 

%For ANIL with the strongly-convex inner loop, our analysis shows that ANIL requires {\small $\mathcal{O}\big((N n_w+n_\phi)\epsilon^{-2}\big)$} gradient entry evaluations, which improves that of MAML by a factor of {\small$\mathcal{O}\big(\frac{N n_w+N n_\phi}{N n_w+n_\phi}\big(1+\kappa L\big)^N\big)$}. In addition, ANIL requires {\small $\mathcal{O}\big(n^2_w N \big(1-\frac{\mu^2}{L^2}\big)^{N/2}+n^2_w N\big)\epsilon^{-2}$} Hessian entry evaluations, which improves that of MAML by a factor of {\small$\mathcal{O}\big(\big(\frac{n_w+n_\phi}{n_w}\big)^2\big(1+\kappa L\big)^N\big)$}. Such improvements are significant due to the fact that $n_\phi$ is often much larger than $n_w$.  
%
%
%For nonconvex inner-loop loss, we set $\kappa=1/N$ for MAML~\cite[Corollary 2]{ji2020multi}) to be consistent with our analysis for ANIL in \Cref{th:nonconvex}. Then, \cite[Corollary 2]{ji2020multi} indicates that MAML requires {\small$\mathcal{O}((n_w+n_\phi)N^2\epsilon^{-2})$} gradient entry evaluations and {\small$\mathcal{O}((n_w+n_\phi)^2N^2\epsilon^{-2})$} Hessian entry evaluations. As a comparison, \Cref{th:nonconvex} indicates that ANIL requires {\small$\mathcal{O}((n_wN^2+n_\phi N)\epsilon^{-2})$} gradient entry evaluations, which improves that of MAML by a factor of {\small$\mathcal{O}\big(\frac{Nn_w+Nn_\phi}{Nn_w+n_\phi}\big)$}. In addition, ANIL requires {\small$\mathcal{O}(n_w^2N^2\epsilon^{-2})$} Hessian entry evaluations, which improves that of MAML by a factor of {\small$\mathcal{O}\big((\frac{n_w+n_\phi}{n_w})^2\big)$}.

%\vspace{-0.2cm}
\section{Experiments}
%\vspace{-0.2cm}
In this section, we validate our theory on the ANIL algorithm over two benchmarks for few-shot multiclass classification, i.e., FC100~\citep{oreshkin2018tadam} and miniImageNet~\citep{vinyals2016matching}. The experimental implementation and the model architectures are adapted from the existing repository~\citep{learn2learn2019} for ANIL. 
We consider a 5-way 5-shot task on both the FC100 and miniImageNet datasets. 
We relegate the introduction of datasets, model architectures and hyper-parameter settings to~\Cref{appen:exp} due to the space limitations.

Our experiments aim to explore how the different geometry (i.e., strong convexity and nonconvexity) of the inner loop affects the convergence performance of ANIL. 
%This is characterized in our theory by how the inner-loop steps $N$ affects the convergence rate and computational complexity of ANIL under each geometry, and we verify this by the experiments below. 

%\vspace{-1.6mm}
\subsection{ANIL with Strongly-Convex Inner-Loop Loss}\label{exp:strongly-convex}
%\vspace{-1.6mm}

We first validate the convergence results of ANIL under the {\em strongly-convex} inner-loop loss function $L_{\mathcal{S}_i}(\cdot,\phi)$, as we establish in~\Cref{se:strong-convex}. 
Here, we let $w$ be parameters of {\em the last layer} of CNN and $\phi$ be parameters of the remaining inner layers. As in~\cite{bertinetto2018meta,lee2019meta}, the inner-loop loss function adopts $L^2$ regularization on $w$ with a hyper-parameter $\lambda>0$, and hence is {\em strongly convex}. 

  \begin{figure*}[h]
%  \vspace{-2mm}
	\centering    
	\subfigure[dataset: FC100 ]{\label{fig1:a}\includegraphics[width=34.5mm]{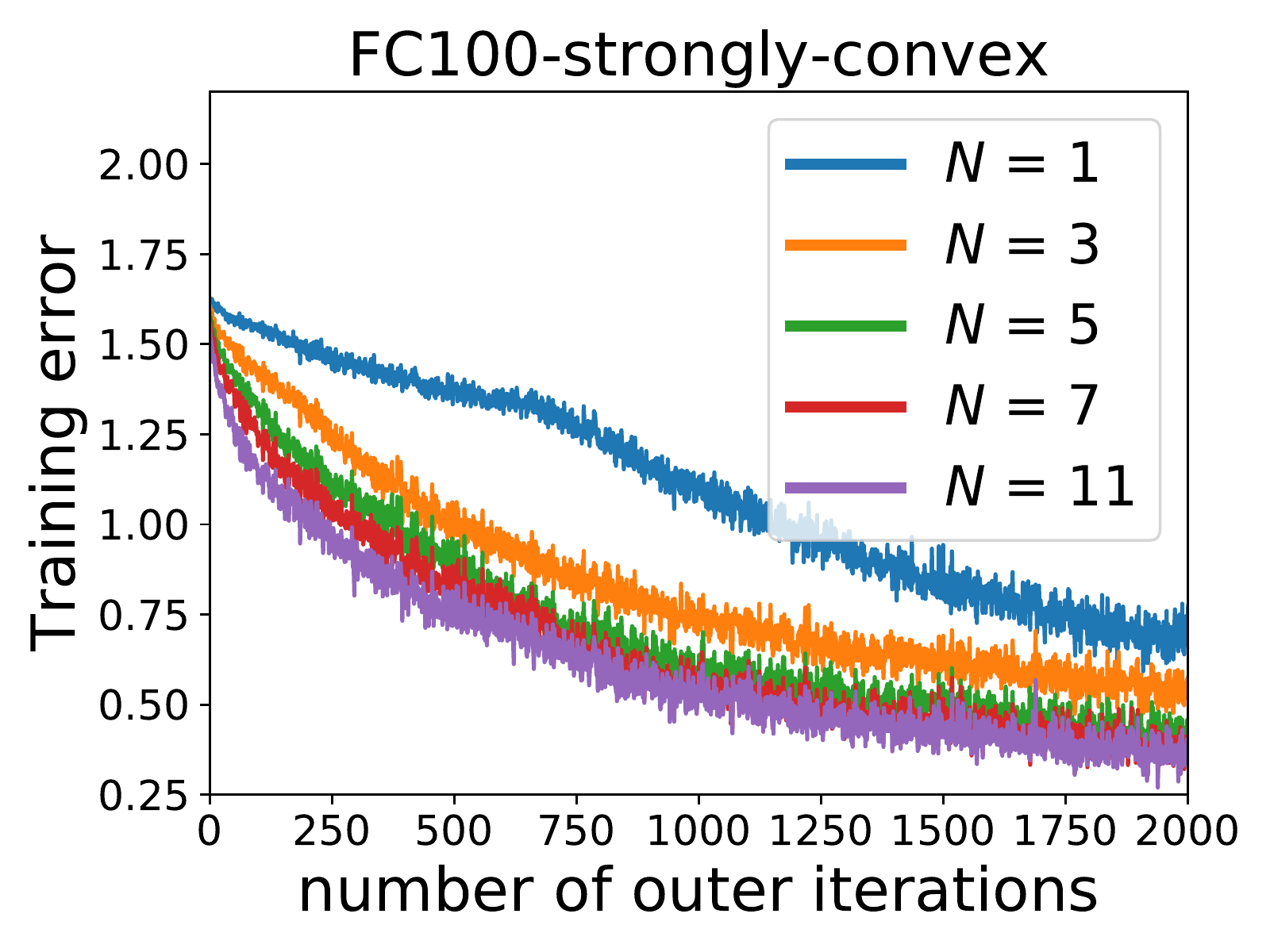}\includegraphics[width=34.5mm]{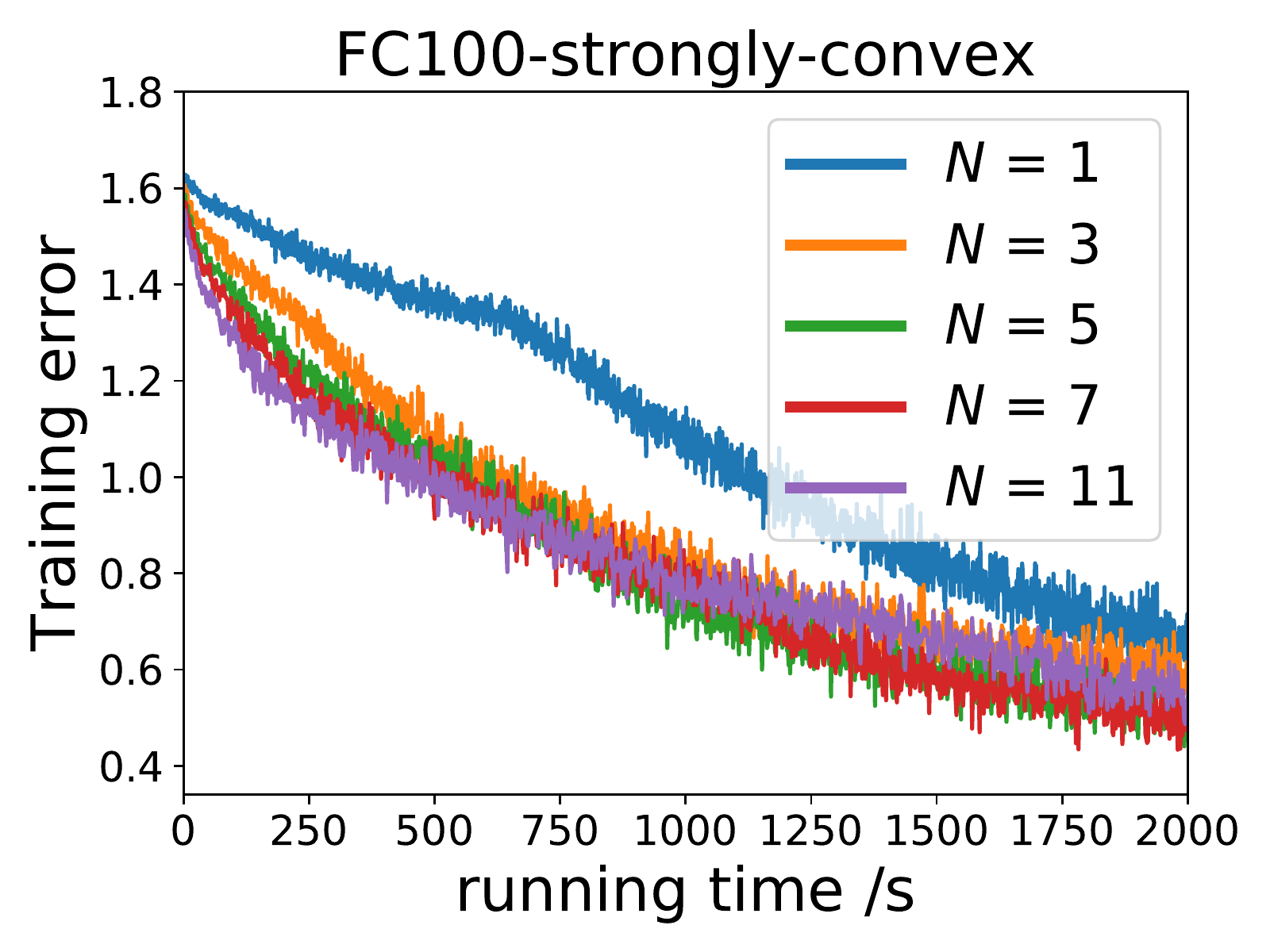}} 
	\subfigure[dataset: miniImageNet ]{\label{fig1:b}\includegraphics[width=34.5mm]{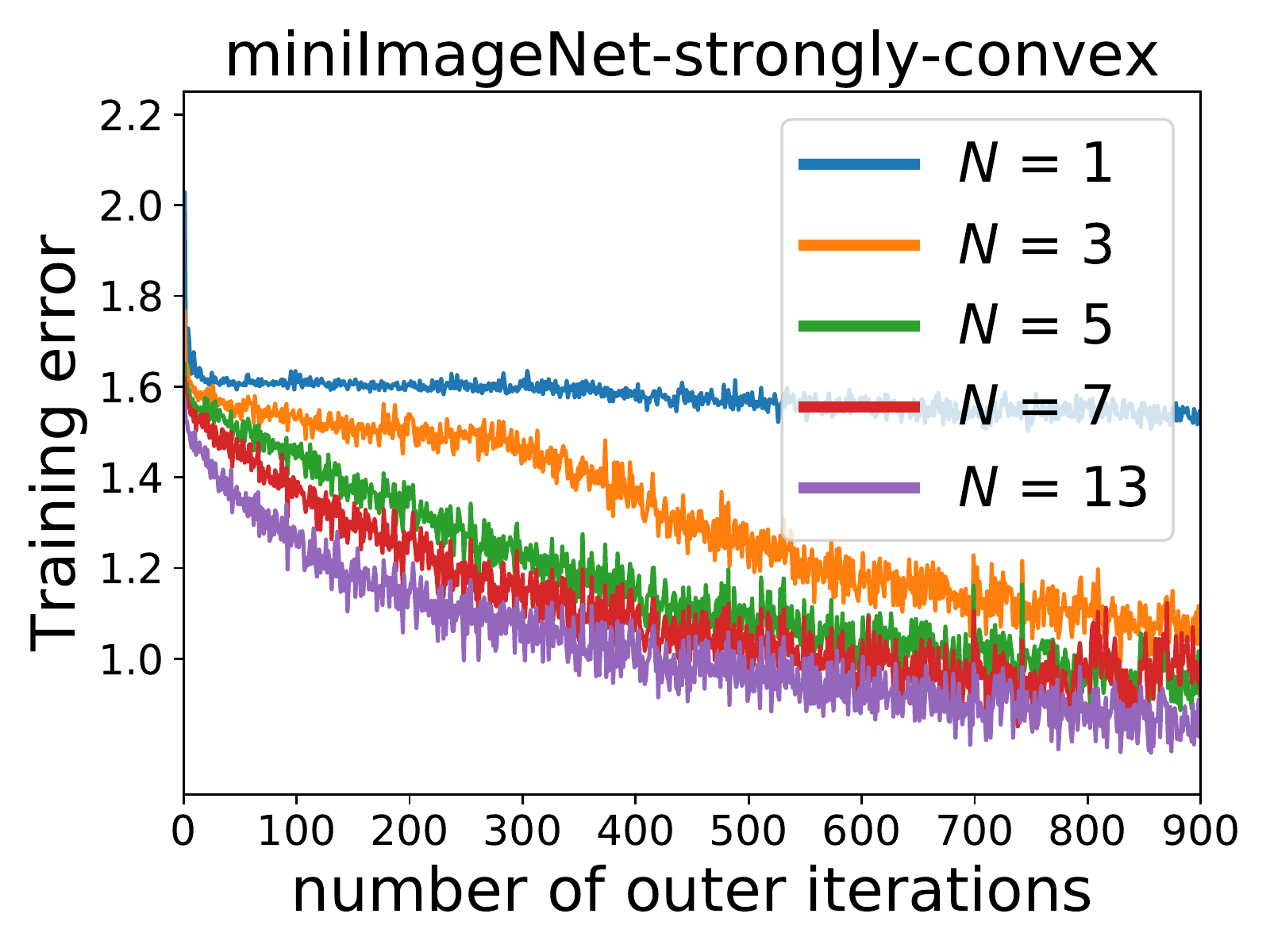}\includegraphics[width=34.5mm]{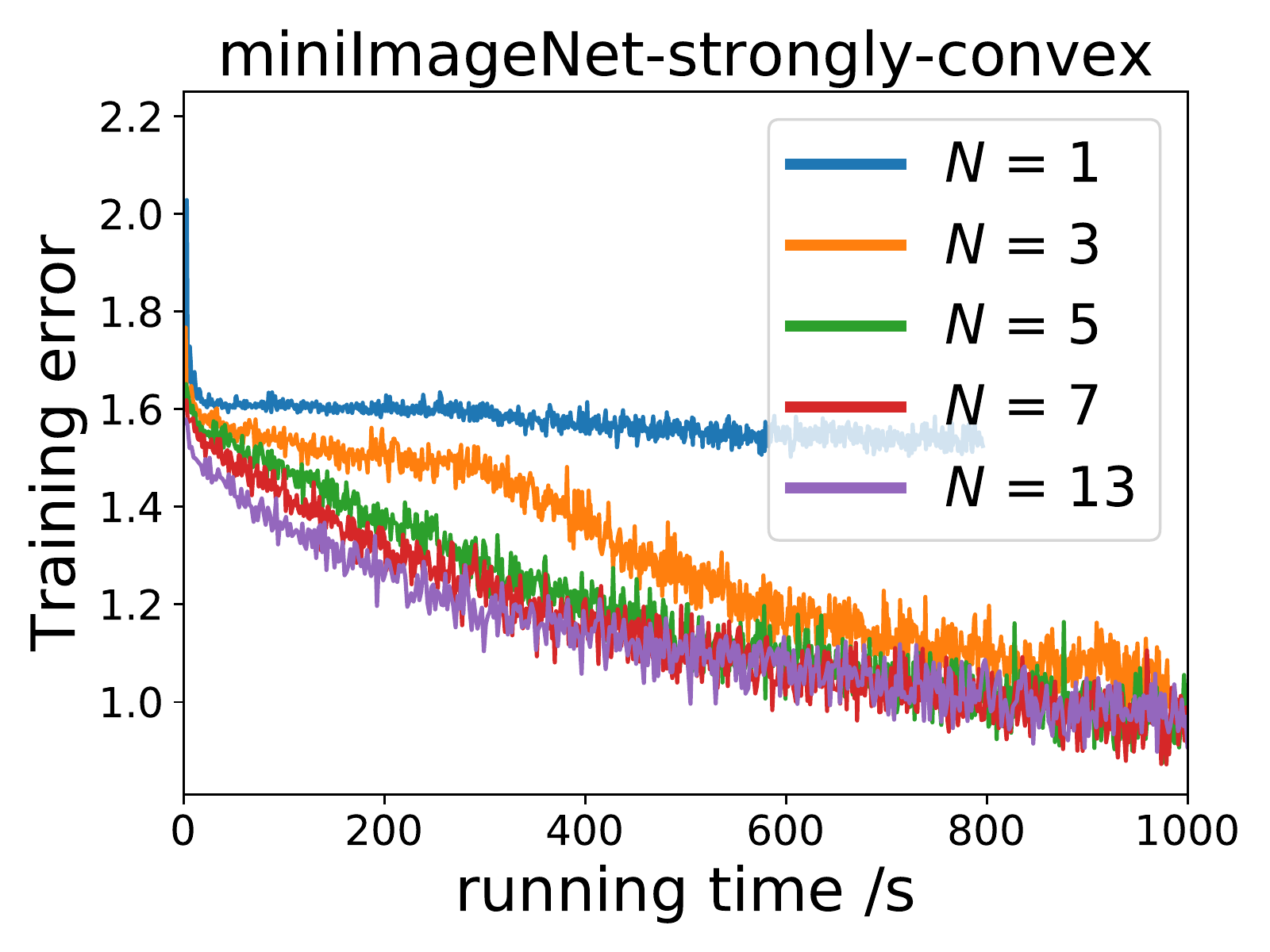}}  
%	\vspace{-0.2cm}
	\caption{Convergence of ANIL with strongly-convex inner-loop loss function.  For each dataset, left plot: training loss v.s. number of total meta iterations; right plot: training loss v.s. running time.}\label{fig:strfc100}
%	  \vspace{-0.2cm}
\end{figure*}

%\Cref{fig:strfc100} provides the experimental results over 
For the FC100 dataset, the left plot of~\Cref{fig1:a} shows that the convergence rate in terms of the number of meta outer-loop iterations becomes faster as the inner-loop steps $N$ increases, but nearly saturates at $N=7$ (i.e., there is not much improvement for $N\geq 7$).
%but after $N$ reaches $7$, increasing $N$ does not further accelerate the convergence rate.  
This is consistent with \Cref{th:strong-convex}, in which the gradient convergence bound first  
% the convergence bound is propositional to {\small$O((1-\mu^2/L^2)^{N/2}+1/\mu^3)$}, so that it first 
 decays exponentially with $N$, and then the bound in $\phi$ dominates and saturates to a constant. Furthermore, the right plot of~\Cref{fig1:a} shows that the running-time convergence first becomes faster  as $N$ increases up to $N\leq 7$, and then starts to slow down as $N$ further increases. 
%However, if we further increase $N$ after $N$ reaches $5$, the convergence rate v.s. running time become {\em slower}, as shown in the right figure of~\Cref{fig:strfc100}. 
This is also captured by~\Cref{th:strong-convex} as follows. 
%The gradient valuations of $\phi$ initially dominates, which decreases exponentially as $N$ increases, and then the gradient and second-order derivative evaluations of $w$ dominate (after that of $\phi$ saturates), which becomes larger as $N$ further increases.
%The gradient and second-order derivative evaluations contains two terms: an exponential decay term with $N$ and a linear growth term with $N$. As a result, the 
The computational cost of ANIL initially decreases because the exponential reduction dominates the linear growth in the gradient and second-order derivative evaluations. But when $N$ becomes large enough, the linear growth dominates, and hence the overall computational cost of ANIL gets higher as $N$ further increases. 
%This is also captured by~\Cref{th:strong-convex} in which the computational complexity of $w$ is given by {\small$\mathcal{O}(N(1-\mu^2/L^2)^{N/2}+N)\epsilon^{-2}$}, so that it first decays exponentially with $N$ and then increases with $N$ due to the second term. 
Similar nature of convergence behavior is also observed over the miniImageNet dataset as shown in~\Cref{fig1:b}.  Thus, our experiment suggests that for the strongly-convex inner-loop loss, choosing a relatively large $N$ (e.g., $N=7$) 
%for the FC100 and miniImageNet datasets) to 
achieves a good balance between the convergence rate (as well as the convergence error) and the computational complexity.  

%For the results shown in~\Cref{fig:strimg100} on miniImageNet dataset, we can observe a similar convergence behavior of ANIL 
%similarly observe that 
%the convergence rate in terms of the number of iteration 
%or running time initially becomes faster, but sa

%\vspace{-1mm}
\subsection{ANIL with Nonconvex Inner-Loop Loss}\label{exp:nonconvex}
%\vspace{-1mm}

We next validate the convergence results of ANIL under the {\em nonconvex} inner-loop loss function $L_{\mathcal{S}_i}(\cdot,\phi)$, as we establish in~\Cref{sec:nonconvex}. 
%Similarly to the strongly-convex case, we consider a 5-way 5-shot task on both the FC100 dataset and the miniImageNet. 
%%, where we evaluate the model's ability to discriminate $5$ unseen classes, given only $5$ labelled samples per class.  
%For each task, we use a $5$-layer CNN (its architecture is specified in~\Cref{model_structure}), where 
Here, we let $w$ be the parameters of {\em the last two layers with ReLU activation} of CNN (and hence the inner-loop loss is nonconvex with respect to $w$) and $\phi$ be the remaining parameters of the inner layers. 
%We adopt Adam~\cite{kingma2014adam} as the optimizer at the outer meta stage, and adopt the cross-entropy loss to measure the error between the predicted and true classes.  
% with a mini-batch size of $32$.  
 %the classifier aims to  discriminate among $5$ randomly picked validation classes with 
%for the case where the inner-loop loss function $L_{\mathcal{S}_i}(\cdot,\phi)$ is nonconvex. 

% \vspace{-0.1cm}
  \begin{figure*}[h]
%  \vspace{-2mm}
	\centering    
	\subfigure[dataset: FC100 ]{\label{fig2:a}\includegraphics[width=34.5mm]{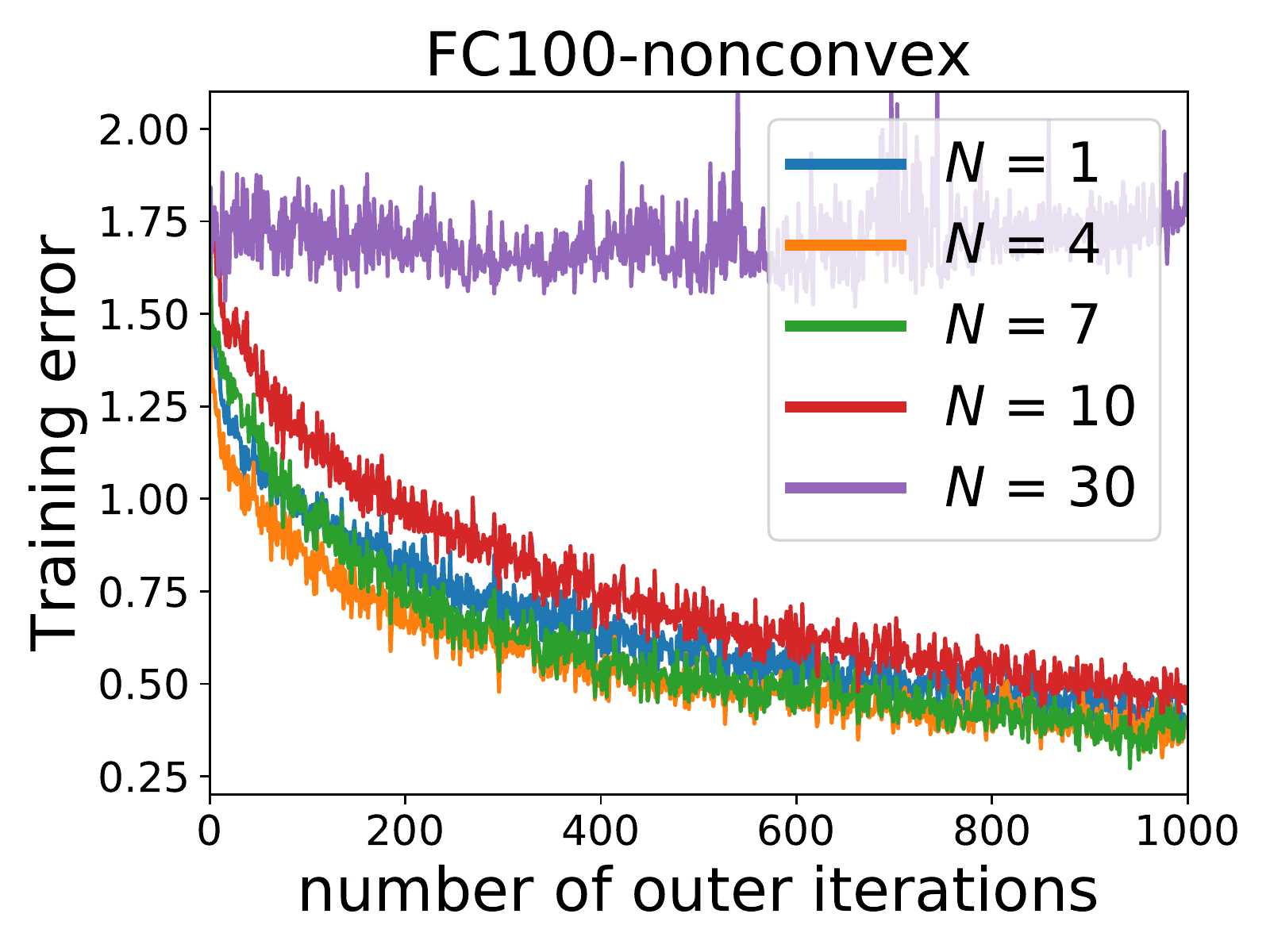}\includegraphics[width=34.5mm]{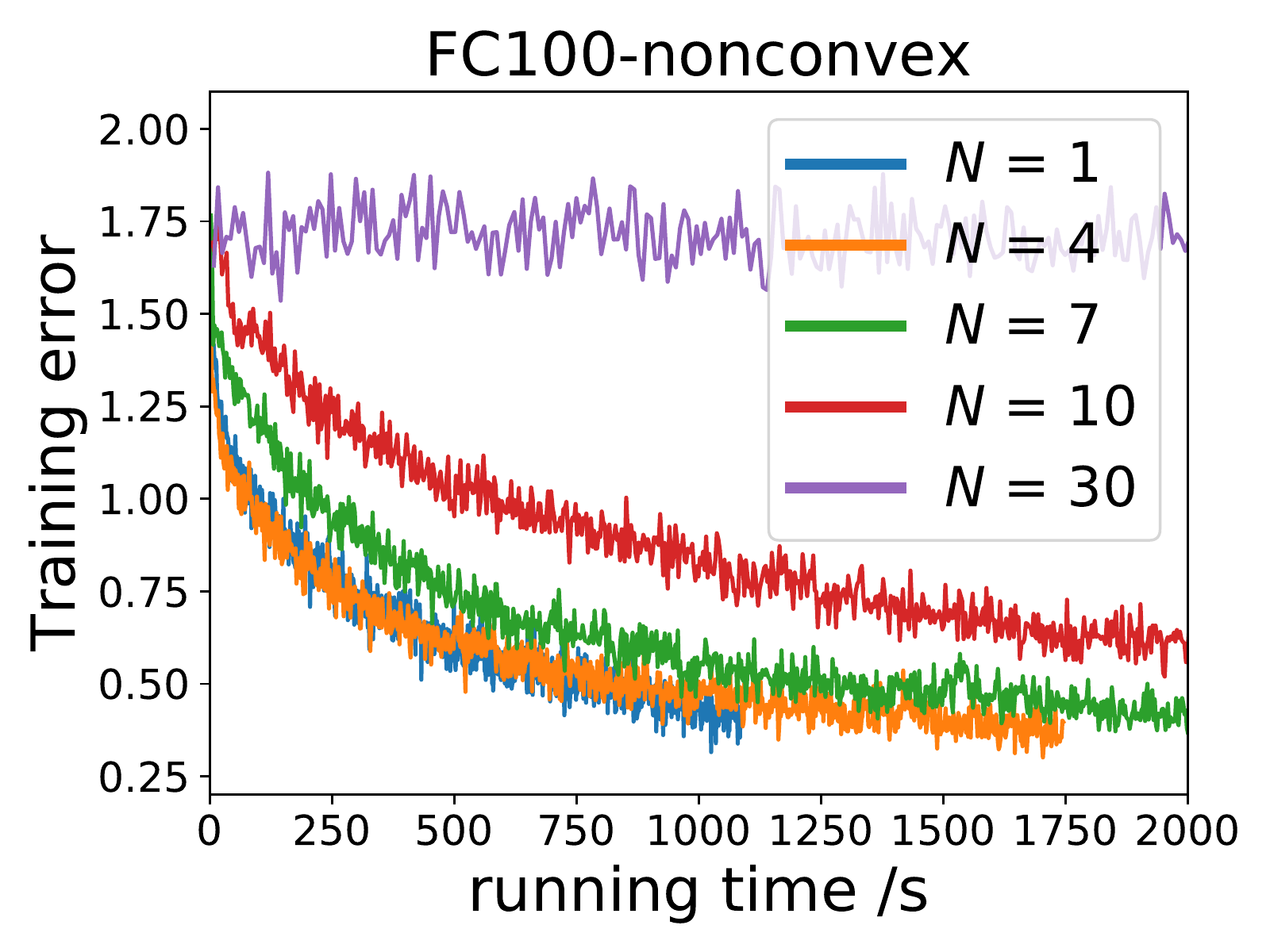}} 
	\subfigure[dataset: miniImageNet ]{\label{fig2:b}\includegraphics[width=34.5mm]{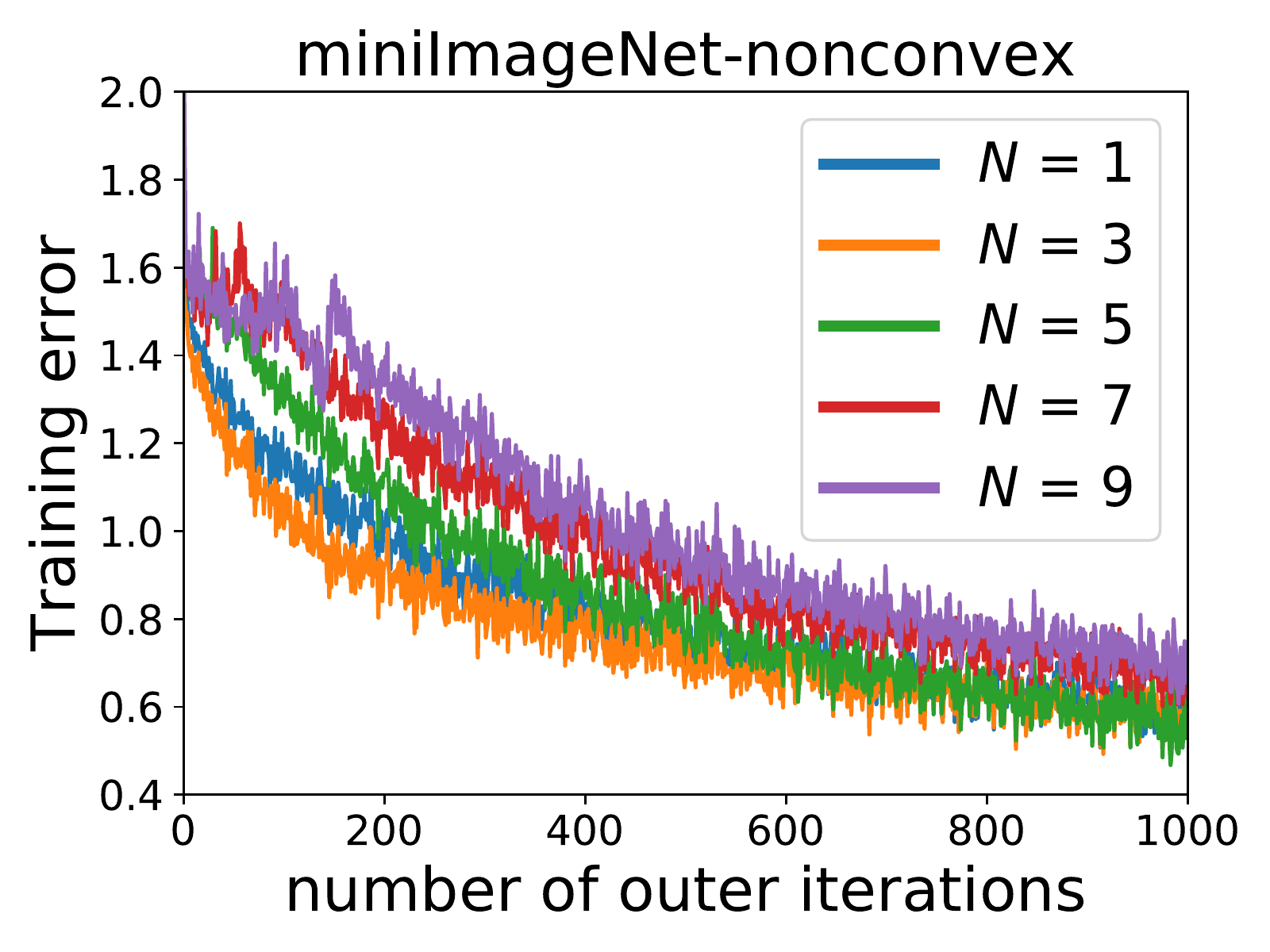}\includegraphics[width=34.5mm]{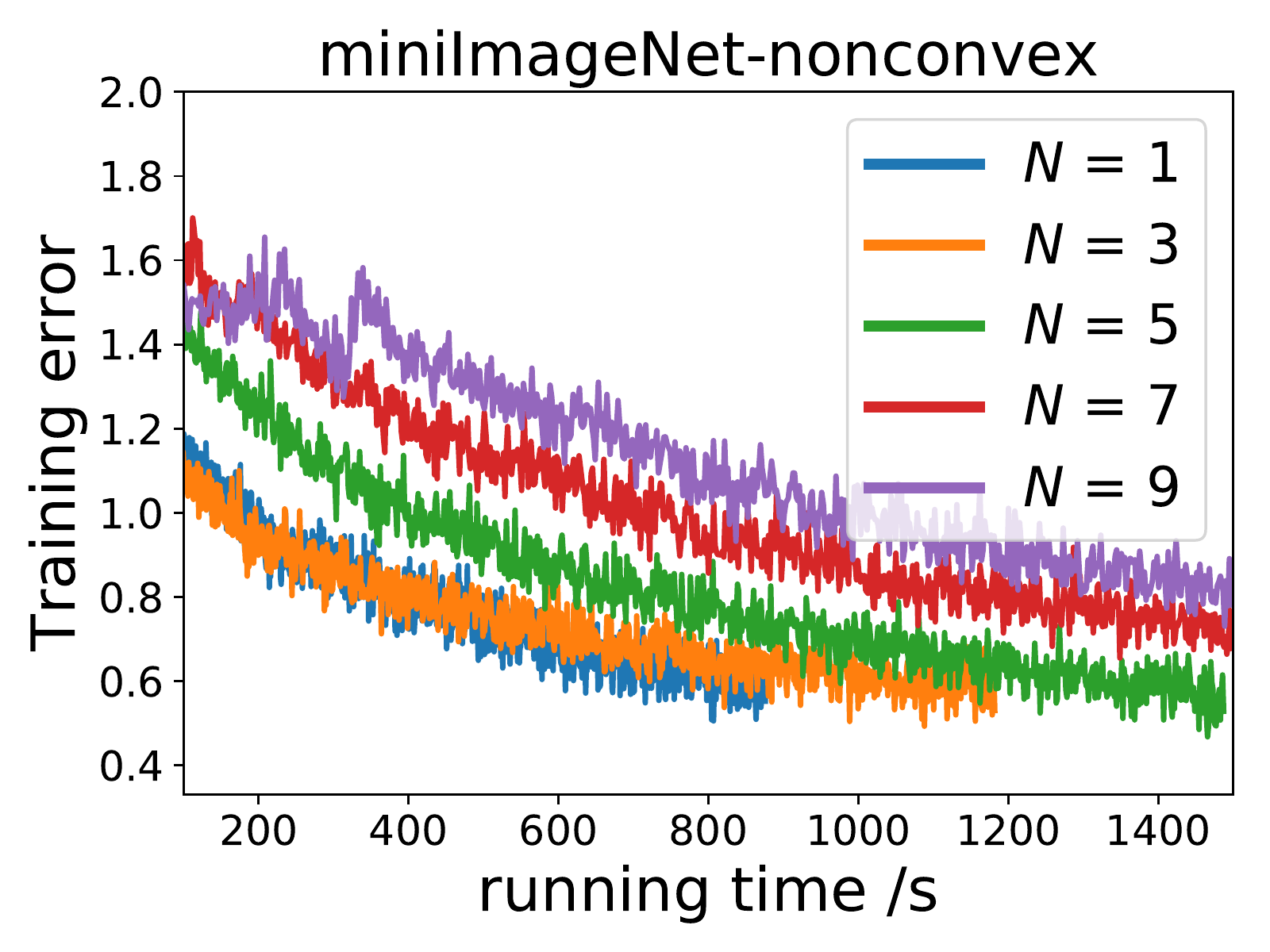}}  
%	\vspace{-0.2cm}
	\caption{Convergence of ANIL with nonconvex inner-loop loss function.  For each dataset, left plot: training loss v.s. number of total meta iterations; right plot: training loss v.s. running time.}\label{figure:result2}
%	  \vspace{-0.2cm}
\end{figure*}

\Cref{figure:result2} provides the experimental results over the datasets FC100 and miniImageNet. 
For both datasets, the running-time convergence (right plot for each dataset) 
%in terms of both the number $K$ of meta outer-loop iterations and the running time 
becomes {\em slower} as $N$ increases, where $N=1$ is fastest, and the algorithm even diverges for $N=30$ over the FC100 dataset. The plots are consist with \Cref{th:nonconvex}, in which the computational complexity increases as $N$ becomes large. Note that $N=1$ is not the fastest in the left plot for each dataset because the influence of $N$ is more prominent in terms of the running time than the number of outer-loop iterations (which is likely offset by other constant-level parameters for small $N$).
%$N$ affects the running time more significantly (see right plot for each dataset) its computational saving in the inner loop is not reflected in terms of meta outer-loop iterations as the left plot demonstrates.
%achieves the fastest convergence with respect to the running time for both datasets, and the algorithm even diverges for $N=30$ over the FC100 dataset. 
%The plots are consist with \Cref{th:nonconvex}, in which both the convergence bound and the computational complexity increases as $N$ becomes large. 
Thus, the optimization perspective here suggests that $N$ should be chosen as small as possible for computational efficiency, which in practice should be jointly considered with other aspects such as generalization for determining $N$. 

%whereas a large $N$ may lead to a slow and unstable convergence if $\alpha$ is inappropriately chosen. 
%\begin{figure}[th]
%	\centering
%	\includegraphics[width=0.493\linewidth]{train_loss_time_fc_strconvex.pdf}
%	\includegraphics[width=0.493\linewidth]{train_loss_time_fc_strconvex.pdf}
%	\vspace{-0.4cm}
%	\caption{Comparisons of various algorithms on regression.}\label{fig: 1} 
%		\vspace{-0.1cm}
%\end{figure}

%\vspace{-0.2cm}
\section{Conclusion}
%\vspace{-0.2cm}
In this paper, we provide theoretical convergence guarantee for the ANIL algorithm under strongly-convex and nonconvex inner-loop loss functions, respectively. Our analysis reveals different performance behaviors of ANIL under the two geometries by characterizing the impact of inner-loop adaptation steps on the overall convergence rate. Our results further provide guidelines for the hyper-parameter selections for ANIL under different inner-loop loss geometries. 

\section*{Broader Impact}
Meta-learning has been successfully used in a wide range of applications including reinforcement learning, robotics, federated learning, imitation learning, etc, which will be highly influential to technologize our life. This work focuses on understanding the computational efficiency of the optimization-based meta learning algorithms, particularly MAML and ANIL type algorithms. We characterize the convergence guarantee on these algorithms. Furthermore, our theory provides useful guidelines on the selections of hyperparameters for these algorithms, in order for them to be efficiently implemented in large-scale applications. We also anticipate the theory that we develop will be useful in other academic fields in addition to machine learning, including optimization theory, signal processing, and statistics.
%emerges as an appealing approach  
\begin{ack}
The work of K. Ji and Y. Liang is supported in part by the U.S. National Science Foundation under the grants CCF-1900145 and CCF-1761506. JDL acknowledges support of NSF CCF 2002272. 
\end{ack}

\bibliographystyle{ims}
\bibliography{ref}
\newpage
\appendix 
{\Large{\bf Supplementary Materials}}
\section{Further Specification of Experiments}\label{appen:exp}
%\subsection{Setting}
Following~\cite{learn2learn2019}, we consider a 5-way 5-shot task on both the FC100 and miniImageNet datasets, where we evaluate the model's ability to discriminate $5$ unseen classes, given only $5$ labelled samples per class. We adopt Adam~\cite{kingma2014adam} as the optimizer for the meta outer-loop update, and adopt the cross-entropy loss to measure the error between the predicted and true labels. 
\subsection{Introduction of FC100 and miniImageNet datasets}

{\bf FC100 dataset.} The FC100 dataset~\citep{oreshkin2018tadam} is generated from CIFAR-100~\citep{krizhevsky2009learning}, and consists of $100$ classes with each class containing $600$ images of size $32\time 32$. Following recent work~\citep{oreshkin2018tadam,lee2019meta}, we split these $100$ classes into $60$ classes for meta-training, $20$ classes for meta-validation, and $20$ classes for meta-testing.  

{\bf miniImageNet dataset.} The miniImageNet dataset~\citep{vinyals2016matching} consists of $100$ classes randomly chosen from ImageNet~\citep{russakovsky2015imagenet}, where each class contains $600$ images of size $84\times 84$. Following the repository~\citep{learn2learn2019}, we partition these classes into $64$ classes for meta-training, $16$ classes for meta-validation, and $20$ classes for meta-testing.

\subsection{Model Architectures and Hyper-Parameter Setting}\label{model_structure}
We adopt the following four model architectures depending on the dataset and the geometry of the inner-loop loss. The hyper-parameter configuration for each architecture is also provided as follows. 

{\bf Case 1: FC100 dataset, strongly-convex inner-loop loss.} Following~\cite{learn2learn2019}, we use a $4$-layer CNN of four convolutional blocks, where each block sequentially consists of  a $3\times 3$ convolution with a padding of $1$ and a stride of $2$, batch normalization, ReLU activation, and $2\times 2$
max pooling. Each convolutional layer has $64$ filters. This model is trained with an inner-loop stepsize of $0.005$, an outer-loop (meta) stepsize of $0.001$, and a mini-batch size of $B=32$. We set the regularization parameter $\lambda$ of the $L^2$ regularizer  to be $\lambda =5$.

{\bf Case 2: FC100 dataset, nonconvex inner-loop loss.} We adopt a $5$-layer CNN 
 with the first four convolutional layers the same as in {\bf Case 1}, followed by ReLU activation, and a full-connected layer with size of $256\times \text{ways}$.  This model is trained with an inner-loop stepsize of $0.04$, an outer-loop (meta) stepsize of $0.003$, and a mini-batch size of $B=32$. 
% Compared with the strongly-convex case, we choose a larger inner-loop stepsize here to achieve a fast convergence. 
% In the experiments, we choose the inner-loop stepsize to be $0.04$
% We set the regularization parameter $\lambda$ of the $L^2$ regularizer  to be $\lambda =5$.
 
 {\bf Case 3: miniImageNet  dataset, strongly-convex inner-loop loss.} Following~\cite{raghu2019rapid}, we use a $4$-layer CNN of four convolutional blocks, where each block sequentially consists of  a $3\times 3$ convolution with $32$ filters, batch normalization, ReLU activation, and $2\times 2$
max pooling. We choose an inner-loop stepsize of $0.002$, an outer-loop (meta) stepsize of $0.002$, and a mini-batch size of $B=32$, and set the regularization parameter $\lambda$ of the $L^2$ regularizer  to be $\lambda =0.1$.
 
 {\bf Case 4: miniImageNet dataset, nonconvex inner-loop loss.} We adopt a $5$-layer CNN 
 with the first four convolutional layers the same as in {\bf Case 3}, followed by ReLU activation, and a full-connected layer with size of $128\times \text{ways}$. We choose an inner-loop stepsize of $0.02$, an outer-loop (meta) stepsize of $0.003$, and a mini-batch size of $B=32$.

\subsection{Experiments with SGD Optimizer}
% Convergence analysis is done with vanilla gradient descent but all experiments are done with Adam; 
The experiments in \Cref{exp:strongly-convex} and \Cref{exp:nonconvex} adopt the Adam optimizer. 
In this subsection, we conduct experiments using mini-batch stochastic gradient descent (SGD) on FC100 dataset. For both the strongly-convex and nonconvex cases, we choose  an inner-loop stepsize of $0.05$, an outer-loop (meta) stepsize of $0.05$, and a mini-batch size of $B=32$. 
The results are given in \Cref{fig:sgd}. It can be seen that 
%although SGD is slower than Adam (compared to Figures $1,2$ in our paper), 
the nature of the results remains the same as those done with the Adam optimizer.  
  \begin{figure*}[h]
%  \vspace{-2mm}
	\centering    
	\includegraphics[width=52mm]{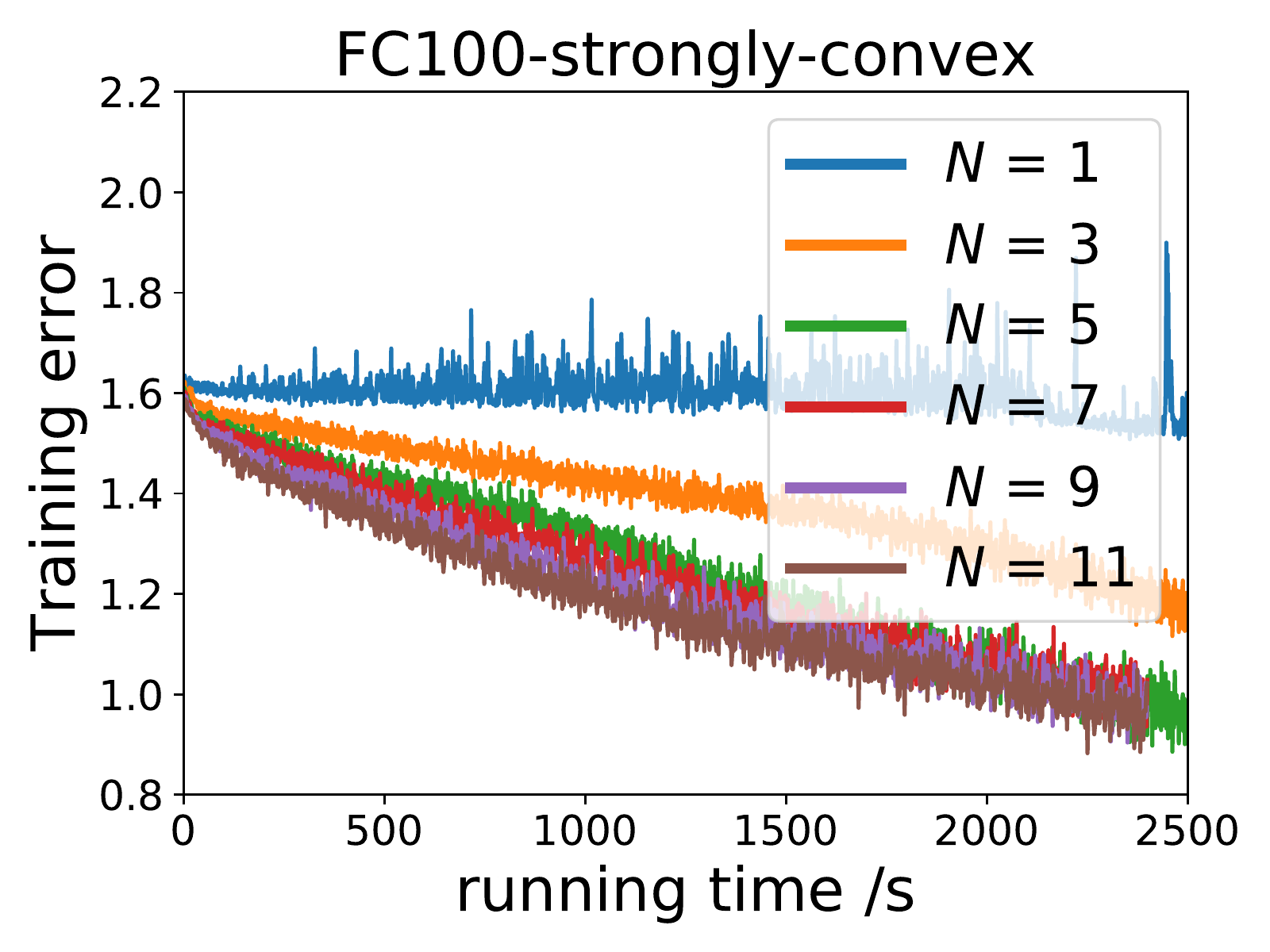}\includegraphics[width=52mm]{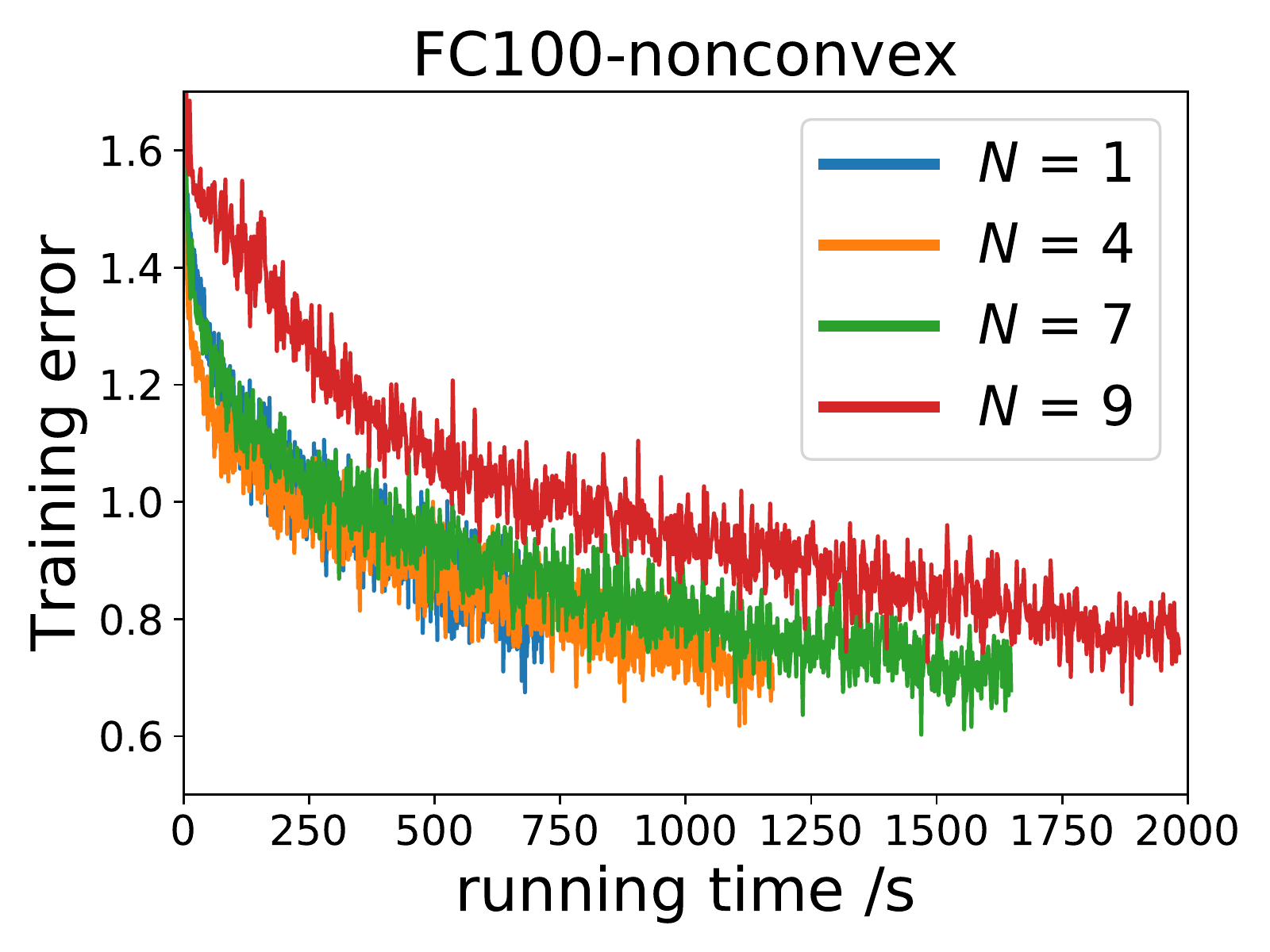}
%	\subfigure[dataset: miniImageNet ]{\label{fig1:b}\includegraphics[width=34.5mm]{train_loss_iteration_img_strconvex.pdf}\includegraphics[width=34.5mm]{train_loss_time_img_strconvex.pdf}}  
%	\vspace{-0.2cm}
	\caption{Convergence of ANIL with mini-batch SGD over FC100 dataset. Left plot: strongly-convex inner-loop loss; right plot: nonconvex inner-loop loss.}\label{fig:sgd}
%	  \vspace{-0.2cm}
\end{figure*}

\subsection{Experiments on Comparison of ANIL and MAML} 

In \Cref{figure:compps}, we compare the computational efficiency between ANIL and MAML. For the miniImageNet dataset, we choose the inner-loop stepsize as $0.1$, the outer-loop (meta) stepsize as $0.002$, the  mini-batch size as $32$, and the number of inner-loop steps as $5$ for ANIL. For MAML, we choose the inner-loop stepsize as $0.5$, the outer-loop stepsize as $0.003$, the  mini-batch size as $32$, and the number of inner-loop steps as $3$. 
For the FC100 dataset, we choose  the inner-loop stepsize as $0.1$, the outer-loop (meta) stepsize as $0.001$, the mini-batch size as $32$ for ANIL. 
For MAML, we choose the inner-loop stepsize as $0.5$, the outer-loop stepsize as $0.001$, and the  mini-batch size as $32$. We choose the number of inner-loop steps as $10$ for ANIL and $3$ for MAML.
It  can be seen that ANIL converges faster than MAML, as well supported by our theoretical results. 

  \begin{figure*}[h]
%  \vspace{-2mm}
	\centering    
	\subfigure[dataset: FC100 ]{\label{figcc:a}\includegraphics[width=34.5mm]{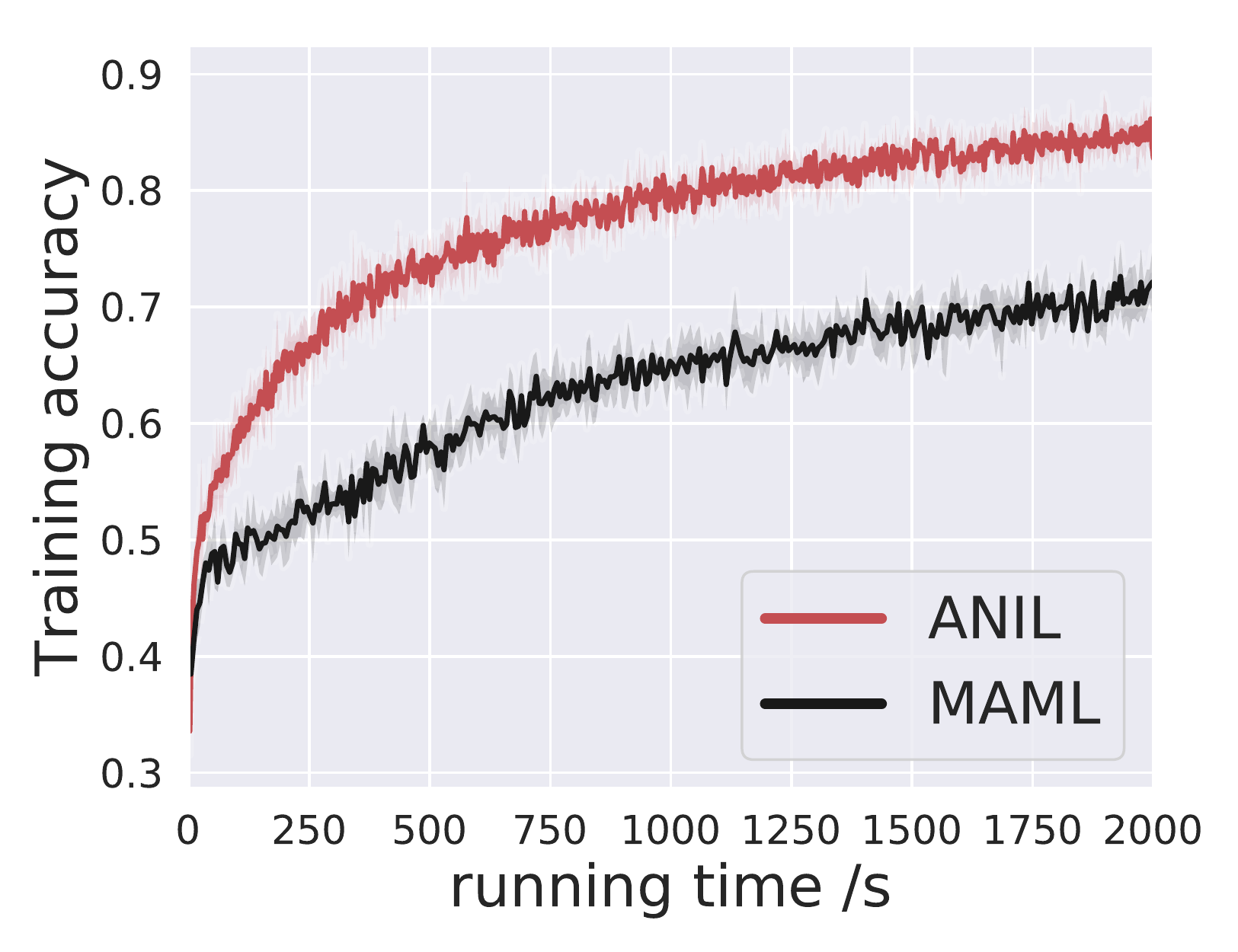}\includegraphics[width=34.5mm]{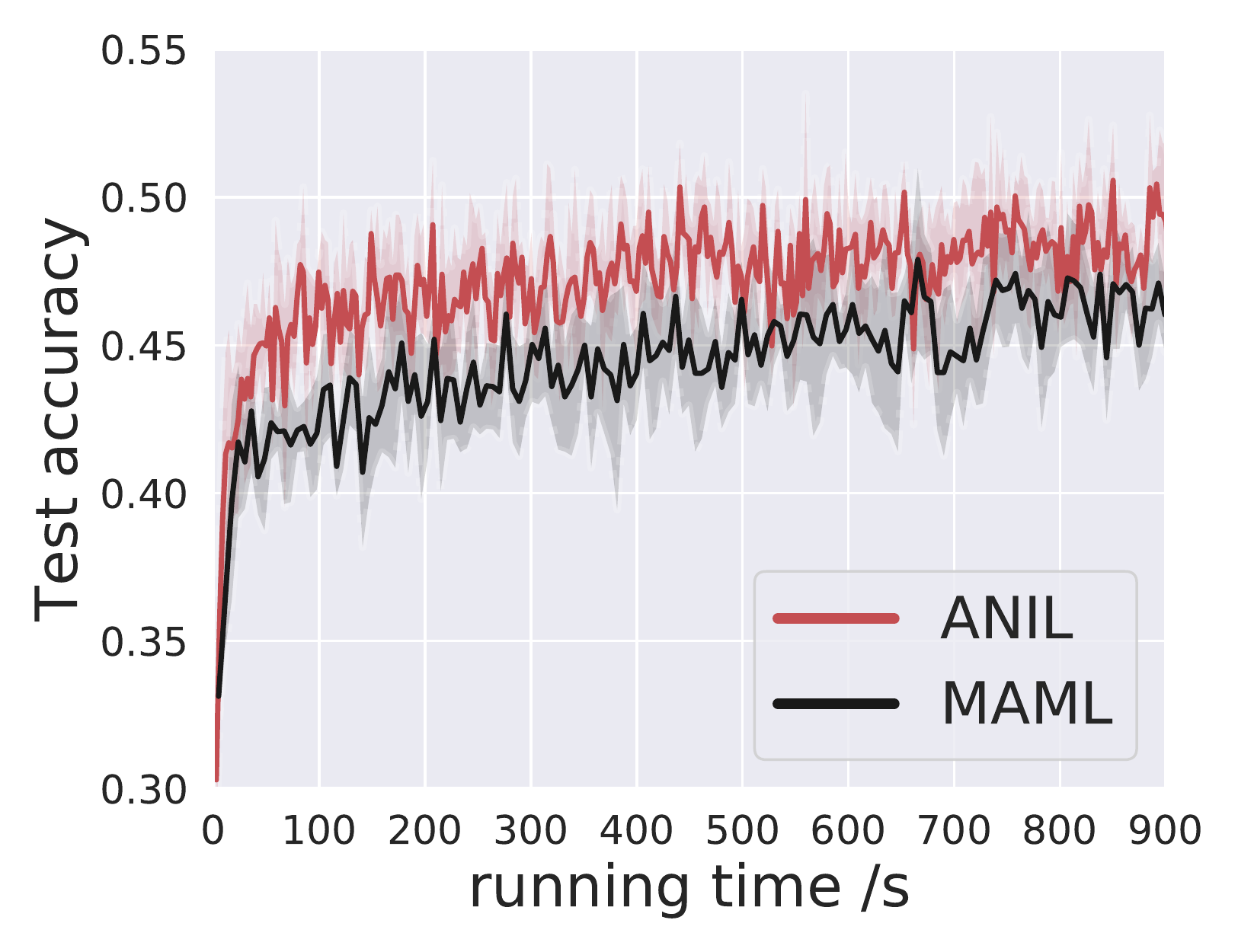}} 
	\subfigure[dataset: miniImageNet ]{\label{figcc:b}\includegraphics[width=34.5mm]{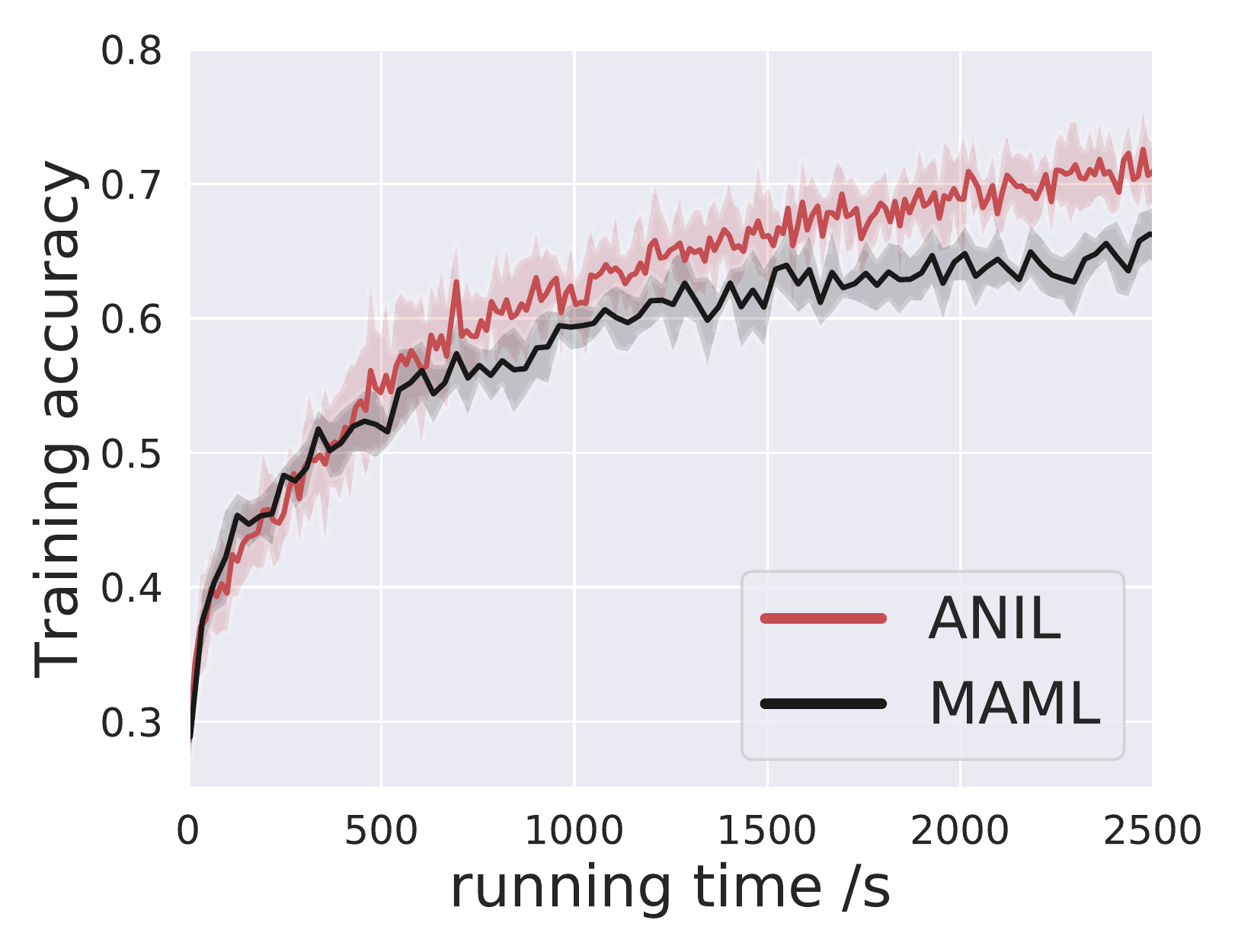}\includegraphics[width=34.5mm]{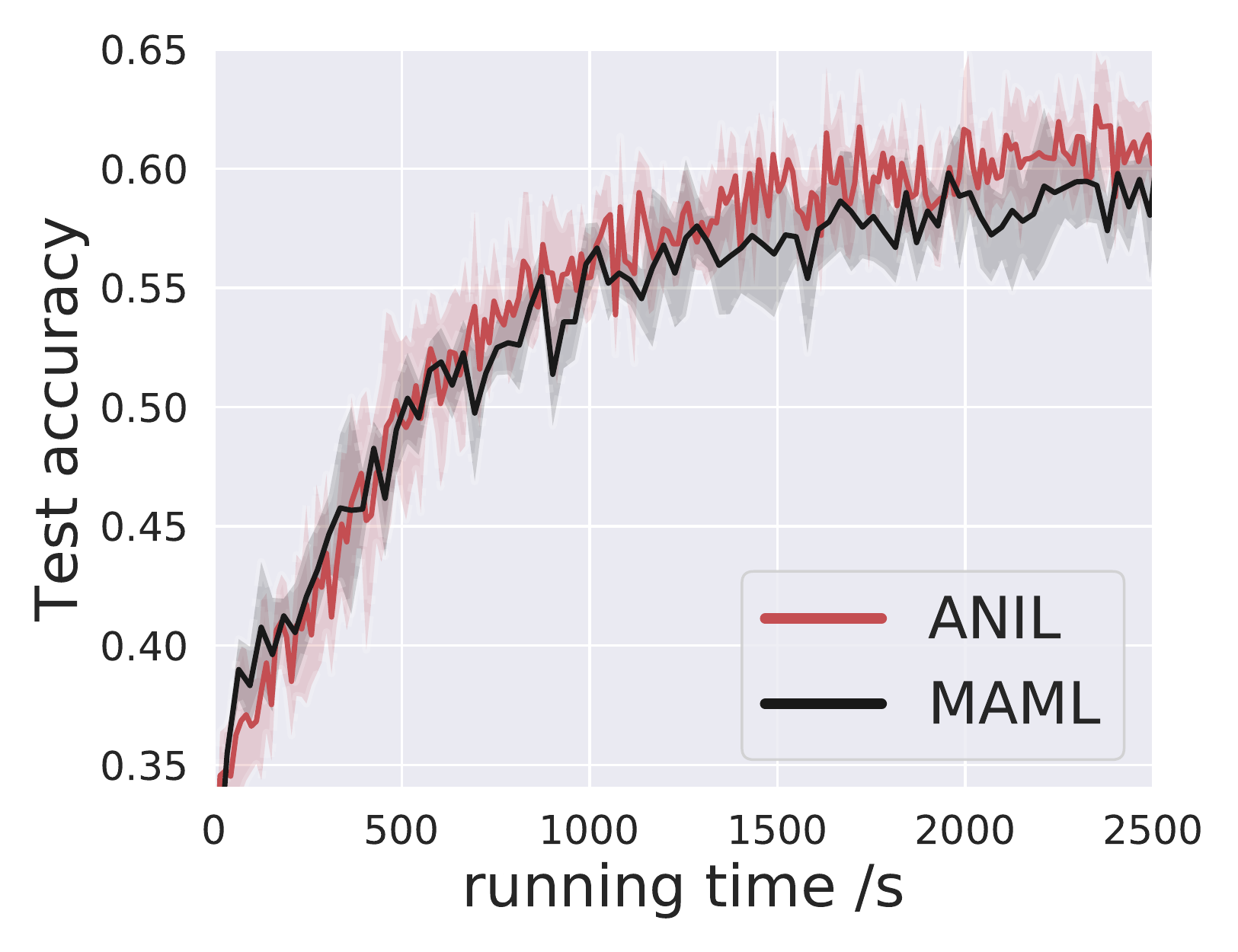}}  
%	\vspace{-0.2cm}
	\caption{Computational comparison of ANIL and MAML.  For each dataset, left plot: training accuracy v.s. running time; right plot: test accuracy v.s. running time.}\label{figure:compps}
%	  \vspace{-0.2cm} 
\end{figure*}

% \begin{wrapfigure}{r}{.41\textwidth}
%  \vspace{-0.5cm}
%\includegraphics[width=.2\columnwidth]{SGDtrain_loss_time2_fc_strconvex.pdf} 
%\includegraphics[width=.2\columnwidth]{SGDtrain_loss_iteration_fc_nonconvex.pdf}
%  \vspace{-0.5cm}
%\end{wrapfigure}

\section{Proof of~\Cref{le:gd_form} }
We first prove the form of the partial gradient $\frac{\partial L_{\mathcal{D}_i}( w^i_{k,N}, \phi_k)}{\partial w_k}$. Using the chain rule, we have 
\begin{align}\label{eq:form1}
\frac{\partial L_{\mathcal{D}_i}( w^i_{k,N}, \phi_k)}{\partial w_k} &= \frac{\partial w_{k,N}^i(w_k,\phi_k)}{\partial w_k} \nabla_{w} L_{\mathcal{D}_i} (w_{k,N}^i,\phi_k) + \frac{\partial \phi_k}{\partial w_k} \nabla_{\phi} L_{\mathcal{D}_i} (w_{k,N}^i,\phi_k) \nonumber
\\& = \frac{\partial w_{k,N}^i(w_k,\phi_k)}{\partial w_k} \nabla_{w} L_{\mathcal{D}_i} (w_{k,N}^i,\phi_k), 
\end{align}
where the last equality follows from the fact that $\frac{\partial \phi_k}{\partial w_k}  = 0$. Recall that the gradient updates in~\Cref{alg:anil} are given by 
\begin{align}\label{eq:gd_d2}
w_{k,m+1}^i = w_{k,m}^i - \alpha \nabla_{w} L_{\mathcal{S}_i} (w_{k,m}^i,\phi_k),\, m=0,1,...,N-1,
\end{align}
where $w_{k,0}^i=w_k $ for all $i$. 
Taking derivatives w.r.t. $w_k$ in~\cref{eq:gd_d2} yields
\begin{align}\label{eq:gd_up}
\frac{\partial w_{k,m+1}^i}{\partial w_k} = &\frac{\partial w_{k,m}^i}{\partial w_k} - \alpha \frac{\partial w_{k,m}^i}{\partial w_k}\nabla^2_{w} L_{\mathcal{S}_i}(w_{k,m}^i,\phi_k)-  \underbrace{\alpha \frac{\partial \phi_k}{\partial w_k}\nabla_\phi\nabla_{w} L_{\mathcal{S}_i}(w_{k,m}^i,\phi_k)}_{0}. 
%\\ =& \frac{\partial w_m^i}{\partial w} (I - \alpha \nabla^2_{w} L_{\mathcal{S}_i}(w_{m}^i,\phi)).
\end{align}
Telescoping \cref{eq:gd_up} over $m$ from $0$ to $N-1$ yields
\begin{align*}
\frac{\partial w_{k,N}^i}{\partial w_k} = \prod_{m=0}^{N-1}(I - \alpha \nabla_w^2L_{\mathcal{S}_i}(w_{k,m}^i,\phi_k)),
\end{align*}
which, in conjunction~\cref{eq:form1}, yields the first part in~\Cref{le:gd_form}. 

For the second part, using chain rule, we have
\begin{align}\label{eq:sec}
\frac{\partial L_{\mathcal{D}_i}( w^i_{k,N}, \phi_k)}{\partial \phi_k} = \frac{\partial w^i_{k,N}}{\partial \phi_k}\nabla_w L_{\mathcal{D}_i}( w^i_{k,N}, \phi_k) + \nabla_\phi L_{\mathcal{D}_i}( w^i_{k,N}, \phi_k).
\end{align}
Taking derivates w.r.t. $\phi_k$ in \cref{eq:gd_d2} yields 
\begin{align*}
\frac{\partial w_{k,m+1}^i}{\partial \phi_k} = &\frac{\partial w_{k,m}^i}{\partial \phi_k} -\alpha \Big( \frac{\partial w_{k,m}^i}{\partial \phi_k}\nabla^2_w  L_{\mathcal{S}_i}(w_{k,m}^i,\phi_k) +\nabla_\phi \nabla_w L_{\mathcal{S}_i} (w_{k,m}^i,\phi_k)\Big) \nonumber
\\= &\frac{\partial w_{k,m}^i}{\partial \phi_k} (I - \alpha \nabla^2_w  L_{\mathcal{S}_i}(w_{k,m}^i,\phi_k)) - \alpha \nabla_\phi \nabla_w L_{\mathcal{S}_i} (w_{k,m}^i,\phi_k).
\end{align*}
Telescoping the above equality over $m$ from $0$ to $N-1$ yields
\begin{align*}
\frac{\partial w_{k,N}^i}{\partial \phi_k} = \frac{\partial w_{k,0}^i}{\partial \phi_k} &\prod_{m=0}^{N-1} (I - \alpha \nabla^2_w  L_{\mathcal{S}_i}(w_{k,m}^i,\phi_k) )  \nonumber
\\&- \alpha\sum_{m=0}^{N-1}\nabla_\phi\nabla_w L_{\mathcal{S}_i}(w_{k,m}^i,\phi_k)\prod_{j=m+1}^{N-1} (I-\alpha \nabla^2_{w} L_{\mathcal{S}_i}(w_{k,j}^i,\phi_k)),
\end{align*}
which, in conjunction with the fact that $\frac{\partial w_{k,0}^i}{\partial \phi_k}=\frac{\partial w_k}{\partial \phi_k}=0$ and~\cref{eq:sec}, yields the second part.% of \Cref{le:gd_form}.

\section{Proof in~\Cref{se:strong-convex}: Strongly-Convex Inner Loop}
\subsection{Auxiliary Lemma}
The following lemma characterizes a bound on the difference between $w_{t}^{i}(w_1,\phi_1)$ and $w_{t}^{i}(w_2,\phi_2)$, where $w_{t}^{i}(w,\phi)$ corresponds to the $t^{th}$ inner-loop iteration starting from the initialization point $(w,\phi)$.
\begin{lemma}\label{le:support}
Choose $\alpha$ such that $1-2\alpha\mu+\alpha^2L^2>0$. Then, 
for any two points $(w_1,\phi_1),(w_2,\phi_2)\in\mathbb{R}^n$, we have 
\begin{align*}
\big\|w_{t}^{i}(w_1,\phi_1)- w_{t}^{i}(w_2,\phi_2)\big\| \leq (1-2\alpha\mu+\alpha^2L^2)^{\frac{t}{2}}\|w_1-w_2\| + \frac{\alpha L\|\phi_1-\phi_2\|}{1-\sqrt{1-2\alpha\mu+\alpha^2L^2}}.
\end{align*}
\end{lemma}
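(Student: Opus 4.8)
The plan is to track the trajectory gap $e_t := \big\|w_{t}^{i}(w_1,\phi_1) - w_{t}^{i}(w_2,\phi_2)\big\|$ across the inner-loop iterations and derive a linear recursion in $t$. Write $u_t := w_{t}^{i}(w_1,\phi_1)$ and $v_t := w_{t}^{i}(w_2,\phi_2)$, both produced by the update rule \cref{inner:gd} but with the second argument frozen at $\phi_1$ and $\phi_2$ respectively, and with $u_0 = w_1$, $v_0 = w_2$. Subtracting the two updates gives
\[
u_{t+1} - v_{t+1} = (u_t - v_t) - \alpha\big(\nabla_w L_{\mathcal{S}_i}(u_t,\phi_1) - \nabla_w L_{\mathcal{S}_i}(v_t,\phi_2)\big).
\]
The key step is to split the gradient difference as $\big(\nabla_w L_{\mathcal{S}_i}(u_t,\phi_1) - \nabla_w L_{\mathcal{S}_i}(v_t,\phi_1)\big) + \big(\nabla_w L_{\mathcal{S}_i}(v_t,\phi_1) - \nabla_w L_{\mathcal{S}_i}(v_t,\phi_2)\big)$, isolating the effect of the differing $w$-iterates (evaluated at a common $\phi_1$) from the effect of the differing initial reuse parameter.

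For the first piece I would invoke the standard contraction property of one gradient-descent step on a $\mu$-strongly-convex, $L$-smooth function: expanding $\big\|(u_t - v_t) - \alpha(\nabla_w L_{\mathcal{S}_i}(u_t,\phi_1) - \nabla_w L_{\mathcal{S}_i}(v_t,\phi_1))\big\|^2$, lower-bounding the cross term by $\mu\|u_t-v_t\|^2$ via strong convexity and upper-bounding the squared gradient-difference term by $L^2\|u_t-v_t\|^2$ via $L$-smoothness, yields the multiplicative factor $\sqrt{1-2\alpha\mu+\alpha^2L^2}$ — which is real precisely under the stated hypothesis $1-2\alpha\mu+\alpha^2L^2>0$, and is strictly less than $1$ since $\mu\le L$ forces $\alpha L^2 < 2\mu$ in the stepsize regime of interest. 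For the second piece, $L$-smoothness of $L_{\mathcal{S}_i}$ in the joint variable $z=(w,\phi)$ gives $\big\|\nabla_w L_{\mathcal{S}_i}(v_t,\phi_1) - \nabla_w L_{\mathcal{S}_i}(v_t,\phi_2)\big\| \le L\|\phi_1-\phi_2\|$. Combining via the triangle inequality produces
\[
e_{t+1} \le \sqrt{1-2\alpha\mu+\alpha^2L^2}\, e_t + \alpha L\|\phi_1-\phi_2\|, \qquad e_0 = \|w_1-w_2\|.
\]

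To finish, I would unroll this linear recursion: with $q := \sqrt{1-2\alpha\mu+\alpha^2L^2}\in(0,1)$ and $c := \alpha L\|\phi_1-\phi_2\|$, a one-line induction gives $e_t \le q^t e_0 + c\sum_{j=0}^{t-1} q^j \le q^t\|w_1-w_2\| + \frac{c}{1-q}$, which is exactly the claimed bound. I do not anticipate any real obstacle; the only step requiring mild care is the contraction estimate, namely verifying that the combination of strong convexity and $L$-smoothness collapses the gradient-descent step to the factor $\sqrt{1-2\alpha\mu+\alpha^2L^2}$ and that this factor lies in $(0,1)$ for the stepsizes used later in \Cref{le:strong-convex} and \Cref{th:strong-convex} (where $\alpha=\mu/L^2$).
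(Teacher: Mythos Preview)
Your proposal is correct and follows essentially the same argument as the paper: the same splitting of the gradient difference into a ``same-$\phi$'' piece (handled by the strong-convexity/smoothness contraction yielding the factor $\sqrt{1-2\alpha\mu+\alpha^2L^2}$) and a ``same-$w$'' piece (handled by $L$-smoothness in $\phi$), followed by telescoping the resulting linear recursion and bounding the finite geometric sum by $\tfrac{1}{1-q}$. The only cosmetic difference is notation; the logical steps match.
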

\begin{proof}
Based on the updates in~\cref{inner:gd}, we have 
\begin{align}
w_{m+1}^i(w_1,\phi_1)-&w_{m+1}^i(w_2,\phi_2) = w_{m}^i(w_1,\phi_1)-w_{m}^i(w_2,\phi_2)  \nonumber
\\&-\alpha \big(\nabla_w L_{\mathcal{S}_i}(w_m^i(w_1,\phi_1),\phi_1)-\nabla_w L_{\mathcal{S}_i}(w_m^i(w_2,\phi_2),\phi_1)\big)  \nonumber
\\&+ \alpha \big(\nabla_w L_{\mathcal{S}_i}(w_m^i(w_2,\phi_2),\phi_2)-\nabla_w L_{\mathcal{S}_i}(w_m^i(w_2,\phi_2),\phi_1)\big),\nonumber
\end{align}
which, together with the triangle inequality and~\Cref{assm:smooth}, yields
\begin{align}\label{eq:tri}
\|&w_{m+1}^i(w_1,\phi_1)-w_{m+1}^i(w_2,\phi_2) \|
\nonumber
\\&\leq \underbrace{\Big\|w_{m}^i(w_1,\phi_1)-w_{m}^i(w_2,\phi_2) -\alpha \big(\nabla_w L_{\mathcal{S}_i}(w_m^i(w_1,\phi_1),\phi_1)-\nabla_w L_{\mathcal{S}_i}(w_m^i(w_2,\phi_2),\phi_1)\big)\Big\|}_{P}  \nonumber
\\&\;\;\;\;+ \alpha L\|\phi_1-\phi_2\|.
%\big(\nabla_w L_{\mathcal{S}_i}(w_m^i(w_2,\phi_2),\phi_2-\nabla_w L_{\mathcal{S}_i}(w_m^i(w_2,\phi_2),\phi_1)\big),\nonumber
\end{align} 
Our next step is to upper-bound the term $P$ in~\cref{eq:tri}. Note that 
\begin{align}\label{eq:p}
P^2 =& \|w_{m}^i(w_1,\phi_1)-w_{m}^i(w_2,\phi_2) \|^2 + \alpha^2\|\nabla_w L_{\mathcal{S}_i}(w_m^i(w_1,\phi_1),\phi_1)-\nabla_w L_{\mathcal{S}_i}(w_m^i(w_2,\phi_2),\phi_1)\|^2 \nonumber
\\&-2\alpha \Big\langle w_{m}^i(w_1,\phi_1)-w_{m}^i(w_2,\phi_2),  \nabla_w L_{\mathcal{S}_i}(w_m^i(w_1,\phi_1),\phi_1)-\nabla_w L_{\mathcal{S}_i}(w_m^i(w_2,\phi_2),\phi_1)  \Big\rangle \nonumber
\\\leq& (1+\alpha^2L^2-2\alpha\mu) \|w_{m}^i(w_1,\phi_1)-w_{m}^i(w_2,\phi_2) \|^2, 
\end{align}
where the last inequality follows from the strong-convexity of the loss function $L_{\mathcal{S}_i}(\cdot,\phi)$ that for any $w,w^\prime$ and $\phi$,
\begin{align*}
\langle w - w^\prime, \nabla_w L_{\mathcal{S}_i}(w,\phi) - \nabla_w L_{\mathcal{S}_i}(w^\prime,\phi)\rangle \geq  \mu \|w-w^\prime\|^2.
\end{align*}
Substituting~\cref{eq:p} into~\cref{eq:tri} yields  
\begin{align}
\|w_{m+1}^i(w_1,\phi_1)-w_{m+1}^i(w_2,\phi_2) \|\leq& \sqrt{1+\alpha^2L^2-2\alpha\mu}\|w_{m} ^i(w_1,\phi_1)-w_{m}^i(w_2,\phi_2) \| \nonumber
\\&+ \alpha L\|\phi_1-\phi_2\|.
\end{align}
Telescoping the above inequality over $m$ from $0$ to $t-1$ completes the proof.
\end{proof}
\subsection{Proof of~\Cref{le:strong-convex} }\label{append:str}
Using an approach similar to the proof of~\Cref{le:gd_form}, we have
\begin{align}\label{eq:ainiyo}
\frac{\partial L_{\mathcal{D}_i}( w^i_{N}, \phi)}{\partial w} =& \prod_{m=0}^{N-1}(I - \alpha \nabla_w^2L_{\mathcal{S}_i}(w_{m}^i,\phi)) \nabla_{w} L_{\mathcal{D}_i} (w_{N}^i,\phi). 
\end{align}
%derive the explicit form of $\frac{\partial L_{\mathcal{D}_i}( w^i_N, \phi)}{\partial w}$ by replacing $w_{k,m},\phi_k,w_k$ in the first part of~\Cref{le:gd_form} with $w_m,\phi, w$, respectively. 
%Based on the explicit forms of the partial gradient $\frac{\partial L_{\mathcal{D}_i}( w^i_N, \phi)}{\partial w}$ in~\Cref{le:gd_form}, 
Let $w_m^i(w,\phi)$ denote the $m^{th}$ inner-loop iteration starting from $(w,\phi)$. Then, we have 
\begin{align}\label{eq:initss}
\Big\| &\frac{\partial L_{\mathcal{D}_i}( w^i_N,\phi)}{\partial w} \Big |_{(w_1,\phi_1)} -  \frac{\partial L_{\mathcal{D}_i}( w^i_N,\phi)}{\partial w} \Big |_{(w_2,\phi_2)} \Big\| \nonumber
\\ \leq & \underbrace{\Big\|\prod_{m=0}^{N-1}(I - \alpha \nabla_w^2L_{\mathcal{S}_i}(w_{m}^i(w_2,\phi_2),\phi_2)) \Big\|\Big\|\nabla_{w} L_{\mathcal{D}_i} (w_{N}^i(w_1,\phi_1),\phi_1)-\nabla_{w} L_{\mathcal{D}_i} (w_{N}^i(w_2,\phi_2),\phi_2)\Big\|}_{P} \nonumber
\\ &+\Big\|\prod_{m=0}^{N-1}(I - \alpha \nabla_w^2L_{\mathcal{S}_i}(w_{m}^i(w_1,\phi_1),\phi_1)) \nabla_{w} L_{\mathcal{D}_i} (w_{N}^i(w_1,\phi_1),\phi_1)\nonumber
\\&\hspace{1.2cm}\underbrace{\hspace{0.7cm}-\prod_{m=0}^{N-1}(I - \alpha \nabla_w^2L_{\mathcal{S}_i}(w_{m}^i(w_2,\phi_2),\phi_2)) \nabla_{w} L_{\mathcal{D}_i} (w_{N}^i(w_1,\phi_1),\phi_1)\Big\|}_{Q}, 
\end{align}
where $w_m^i(w,\phi)$ is obtained through the following gradient descent steps
\begin{align}\label{eq:updates}
w_{t+1}^i(w,\phi) = w_{t}^i(w,\phi) - \alpha \nabla_{w} L_{\mathcal{S}_i} (w_{t}^i(w,\phi),\phi),\, t=0,...,m-1\;\text{and} \;w_0^i(w,\phi)=w.
\end{align}
We next upper-bound the term $P$ in~\cref{eq:initss}. Based on the strongly-convexity of the function $L_{\mathcal{S}_i}(\cdot,\phi)$, we have $\big\|I-\alpha\nabla_w^2L_{\mathcal{S}_i}(\cdot,\phi)\big\|\leq 1-\alpha \mu$, and hence 
\begin{align}\label{eq:pss}
P \leq& (1-\alpha \mu)^{N}\big\|\nabla_{w} L_{\mathcal{D}_i} (w_{N}^i(w_1,\phi_1),\phi_1)-\nabla_{w} L_{\mathcal{D}_i} (w_{N}^i(w_2,\phi_2),\phi_2)\big\| \nonumber
\\\overset{(i)}\leq&(1-\alpha \mu)^{N} L\big(\|w_{N}^i(w_1,\phi_1)-w_{N}^i(w_2,\phi_2)\| +\|\phi_1-\phi_2\|\big) \nonumber
\\\overset{(ii)}\leq& (1-\alpha\mu)^{N}L\bigg( (1-2\alpha\mu+\alpha^2L^2)^{\frac{N}{2}}\|w_1-w_2\| + \frac{\alpha L\|\phi_1-\phi_2\|}{1-\sqrt{1-2\alpha\mu+\alpha^2L^2}} + \|\phi_1-\phi_2\|  \bigg) \nonumber
\\\overset{(iii)}\leq &  (1-\alpha\mu)^{\frac{3N}{2}} L \|w_1-w_2\| +  (1-\alpha\mu)^NL\left(\frac{2L}{\mu}+1\right)\|\phi_1-\phi_2\|,
% \\&(1-\alpha\mu)^{\frac{N}{2}}  \|w_1-w_2\| +  \left(\frac{2L}{\mu}+1\right)\|\phi_1-\phi_2\|
\end{align}
where $(i)$ follows from~\Cref{assm:smooth}, (ii) follows from~\Cref{le:support}, and $(iii)$ follows from the fact that  $\alpha\mu = \frac{\mu^2}{L^2} =\alpha^2L^2$ and $\sqrt{1-x}\leq 1-\frac{1}{2}x$. 

To upper-bound the term $Q$ in~\cref{eq:initss}, we have
\begin{small}
\begin{align}\label{eq:qml}
Q\leq M\underbrace{\bigg\|\prod_{m=0}^{N-1}(I - \alpha \nabla_w^2L_{\mathcal{S}_i}(w_{m}^i(w_1,\phi_1),\phi_1)) -\prod_{m=0}^{N-1}(I - \alpha \nabla_w^2L_{\mathcal{S}_i}(w_{m}^i(w_2,\phi_2),\phi_2))  \bigg\|}_{P_{N-1}}.
\end{align}
\end{small}
\hspace{-0.12cm}To upper-bound $P_{N-1}$ in~\cref{eq:qml}, we define a more general quantity $P_t$ by replacing $N-1$ with $t$ in~\cref{eq:qml}. Using the triangle inequality, we have
\begin{align}\label{eq:telob}
P_t &\leq \alpha(1-\alpha\mu)^{t}\| \nabla_w^2L_{\mathcal{S}_i}(w_{t}^i(w_1,\phi_1),\phi_1)) -\nabla_w^2L_{\mathcal{S}_i}(w_{t}^i(w_2,\phi_2),\phi_2)) \| + (1-\alpha\mu) P_{t-1} \nonumber
\\\leq& (1-\alpha\mu) P_{t-1} + \alpha \rho(1-\alpha\mu)^{\frac{3t}{2}}  \|w_1-w_2\| +(1-\alpha\mu)^t\alpha\rho\left(\frac{2L}{\mu}+1\right)\|\phi_1-\phi_2\|.
\end{align}
Telescoping~\cref{eq:telob} over $t$ from $1$ to $N-1$ yields
\begin{align*}
P_{N-1}\leq &(1-\alpha\mu)^{N-1} P_0 + \sum_{t=1}^{N-1}  \alpha \rho(1-\alpha\mu)^{\frac{3t}{2}}  \|w_1-w_2\|(1-\alpha \mu)^{N-1-t} \nonumber
\\&+\sum_{t=1}^{N-1}(1-\alpha\mu)^t\alpha\rho\left(\frac{2L}{\mu}+1\right)\|\phi_1-\phi_2\|(1-\alpha \mu)^{N-1-t}, 
\end{align*}
which, in conjunction with $P_0\leq \alpha\rho(\|w_1-w_2\|+\|\phi_1-\phi_2\|)$, yields
\begin{align*}
P_{N-1}\leq& (1-\alpha\mu)^{N-1} \alpha\rho(\|w_1-w_2\|+\|\phi_1-\phi_2\|) + \alpha \rho \|w_1-w_2\| (1-\alpha\mu)^{N-1}\frac{\sqrt{1-\alpha\mu}}{1-\sqrt{1-\alpha\mu}} \nonumber
\\&+ \alpha\rho\left(\frac{2L}{\mu}+1\right)\|\phi_1-\phi_2\|(N-1)(1-\alpha\mu)^{N-1} \nonumber
\\\leq& \frac{2\rho}{\mu}(1-\alpha\mu)^{N-1}\|w_1-w_2\|+  \alpha\rho\left(\frac{2L}{\mu}+1\right)\|\phi_1-\phi_2\|N(1-\alpha\mu)^{N-1}, 
\end{align*}
which, in conjunction with \cref{eq:qml}, yields 
\begin{align}\label{eq:qss}
Q\leq  \frac{2\rho M}{\mu}(1-\alpha\mu)^{N-1}\|w_1-w_2\|+  \alpha\rho M\left(\frac{2L}{\mu}+1\right)\|\phi_1-\phi_2\|N(1-\alpha\mu)^{N-1}.
\end{align}
Substituting~\cref{eq:pss} and~\cref{eq:qss} into~\cref{eq:initss} yields
\begin{align}\label{eq:mamamiya}
\Big\| \frac{\partial L_{\mathcal{D}_i}( w^i_N,\phi)}{\partial w} \Big |_{(w_1,\phi_1)} &-  \frac{\partial L_{\mathcal{D}_i}( w^i_N,\phi)}{\partial w} \Big |_{(w_2,\phi_2)} \Big\| \nonumber
\\\leq& \Big((1-\alpha\mu)^{\frac{3N}{2}} L+ \frac{2\rho M}{\mu}(1-\alpha\mu)^{N-1}\Big) \|w_1-w_2\|  \nonumber
\\&+  \Big((1-\alpha\mu)^NL+\alpha\rho MN(1-\alpha\mu)^{N-1}\Big)\left(\frac{2L}{\mu}+1\right)\|\phi_1-\phi_2\|.
\end{align} 
Based on the definition $ L^{meta}( w,\phi)= \mathbb{E}_i L_{\mathcal{D}_i}( w^i_N,\phi)$ and using the Jensen's inequality, we have 
\begin{align}\label{eq:jensen}
 \Big\| \frac{\partial L^{meta}( w,\phi)}{\partial w} \big |_{(w_1,\phi_1)} -&  \frac{\partial L^{meta}( w,\phi)}{\partial w} \big |_{(w_2,\phi_2)} \Big\|  \nonumber
 \\&\leq \mathbb{E}_i  \Big\| \frac{\partial L_{\mathcal{D}_i}( w^i_N,\phi)}{\partial w} \Big |_{(w_1,\phi_1)} -  \frac{\partial L_{\mathcal{D}_i}( w^i_N,\phi)}{\partial w} \Big |_{(w_2,\phi_2)} \Big\|. 
\end{align}
Combining~\cref{eq:mamamiya} and~\cref{eq:jensen} completes the proof of the first item. 

We next prove the Lipschitz property of the partial gradient $\frac{\partial L_{\mathcal{D}_i}( w^i_N, \phi)}{\partial \phi}$. For notational convenience, we define several quantities below.
\begin{align}\label{eq:notations}
Q_m(w,\phi) &= \nabla_\phi\nabla_w L_{\mathcal{S}_i}(w_{m}^i(w,\phi),\phi), \;U_m(w,\phi)  = \prod_{j=m+1}^{N-1}(I-\alpha\nabla_w^2L_{\mathcal{S}_i}(w_{j}^i(w,\phi),\phi)), \nonumber
\\V_m(w,\phi) & = \nabla_w L_{\mathcal{D}_i}(w_N^i(w,\phi),\phi),
\end{align}
where we let $w_m^i(w,\phi)$ denote the $m^{th}$ inner-loop iteration starting from $(w,\phi)$.
%Based on~\Cref{le:gd_form}, we have 
Using an approach similar to the proof for~\Cref{le:gd_form}, we have
\begin{align}\label{eq:ainiyo11}
\frac{\partial L_{\mathcal{D}_i}( w^i_{N}, \phi)}{\partial \phi} =& -\alpha \sum_{m=0}^{N-1}\nabla_\phi\nabla_w L_{\mathcal{S}_i}(w_{m}^i,\phi) \prod_{j=m+1}^{N-1}(I-\alpha\nabla_w^2L_{\mathcal{S}_i}(w_{j}^i,\phi))\nabla_w L_{\mathcal{D}_i}(w_{N}^i,\phi) \nonumber
\\&+\nabla_\phi L_{\mathcal{D}_i}(w_{N}^i,\phi).
\end{align}
 Then, we have 
\begin{align}\label{eq:sect}
\Big\|&\frac{\partial L_{\mathcal{D}_i}( w^i_N, \phi)}{\partial \phi}\Big |_{(w_1,\phi_1)}  - \frac{\partial L_{\mathcal{D}_i}( w^i_N, \phi)}{\partial \phi} \Big |_{(w_2,\phi_2)}\Big\| \nonumber
\\&\;\leq \alpha\sum_{m=0}^{N-1}\|Q_m(w_1,\phi_1)U_m(w_1,\phi_1)V_m(w_1,\phi_1)-Q_m(w_2,\phi_2)U_m(w_2,\phi_2)V_m(w_2,\phi_2)\|  \nonumber
\\&\;\quad+\|\nabla_\phi L_{\mathcal{D}_i}(w_{N}^i(w_1,\phi_1),\phi_1)-\nabla_\phi L_{\mathcal{D}_i}(w_{N}^i(w_2,\phi_2),\phi_2)\|.
\end{align}
%{\bf Step 1: upper-bound the first term at the right side of~\cref{eq:sect}.}
Using the triangle inequality, we have 
\begin{align}\label{R1R2R3} 
 \|Q_m(w_1,&\phi_1)U_m(w_1,\phi_1)V_m(w_1,\phi_1)-Q_m(w_2,\phi_2)U_m(w_2,\phi_2)V_m(w_2,\phi_2)\| \nonumber
 \\\leq &\underbrace{\|Q_m(w_1,\phi_1)-Q_m(w_2,\phi_2)\|\|U_m(w_1,\phi_1)\|\|V_m(w_1,\phi_1)\|}_{R_1}  \nonumber
 \\&+ \underbrace{\|Q_m(w_2,\phi_2)\|\|U_m(w_1,\phi_1)-U_m(w_2,\phi_2)\|\|V_m(w_1,\phi_1)\|}_{R_2}\nonumber
 \\&+\underbrace{\|Q_m(w_2,\phi_2)\|\|U_m(w_2,\phi_2)\|\|V_m(w_1,\phi_1)-V_m(w_2,\phi_2)\|}_{R_3}.
\end{align} 
%We next prove the Lipschitz property of the partial gradient $\frac{\partial L_{\mathcal{D}_i}( w^i_N, \phi)}{\partial \phi}$. Using an approach similar to
Combining~\cref{eq:sect} and~\cref{R1R2R3}, we have  
\begin{align}\label{eq:initllls}
\Big\|&\frac{\partial L_{\mathcal{D}_i}( w^i_N, \phi)}{\partial \phi}\Big |_{(w_1,\phi_1)}  - \frac{\partial L_{\mathcal{D}_i}( w^i_N, \phi)}{\partial \phi} \Big |_{(w_2,\phi_2)}\Big\| \nonumber
\\&\;\;\leq \alpha\sum_{m=0}^{N-1}(R_1+R_2+R_3) +\|\nabla_\phi L_{\mathcal{D}_i}(w_{N}^i(w_1,\phi_1),\phi_1)-\nabla_\phi L_{\mathcal{D}_i}(w_{N}^i(w_2,\phi_2),\phi_2)\|.
\end{align}
%where $R_1,R_2$ and $R_3$ are given by~\cref{R1R2R3}. 
To upper-bound $R_1$, we have 
\begin{align}\label{sR1}
R_1\leq& \tau (\|w_m^i(w_1,\phi_1)-w_m^i(w_2,\phi_2)\| + \|\phi_1-\phi_2\|) (1-\alpha \mu)^{N-m-1} M \nonumber
\\\leq& \tau M(1-\alpha\mu)^{N-\frac{m}{2}-1}  \|w_1-w_2\| + \tau M\Big(\frac{2L}{\mu}+1\Big) (1-\alpha \mu)^{N-m-1}\|\phi_1-\phi_2\|,
\end{align}
where the second inequality follows from~\Cref{le:support}. 
 
For $R_2$, based on Assumptions~\ref{assm:smooth} and~\ref{assm:second}, we have
\begin{align}\label{eq:rfist}
R_2\leq LM\|U_m(w_1,\phi_1)-U_m(w_2,\phi_2)\|.
\end{align}
Using the definitions of $U_m(w_1,\phi_1)$ and $U_m(w_2,\phi_2)$ in~\cref{eq:notations} and using the triangle inequality, we have 
%Using an approach similar to~\cref{eq:impor}, we have 
\begin{align*}
\|U_m&(w_1,\phi_1)-U_m(w_2,\phi_2)\|  \nonumber
\\\leq& \alpha \|\nabla_w^2L_{\mathcal{S}_i}(w_{m+1}^i(w_1,\phi_1),\phi_1)-\nabla_w^2L_{\mathcal{S}_i}(w_{m+1}^i(w_2,\phi_2),\phi_2)\| \|U_{m+1}(w_1,\phi_1)\| \nonumber
%\|\prod_{j=m+2}^{N-1}(I-\alpha\nabla_w^2L_{\mathcal{S}_i}(w_{j}^i(w_1,\phi_1),\phi_1))\|\nonumber
\\&+ \|I-\alpha\nabla_w^2L_{\mathcal{S}_i}(w_{m+1}^i(w_1,\phi_1),\phi_1)\| \|U_{m+1}(w_1,\phi_1)-U_{m+1}(w_2,\phi_2)\| \nonumber
\\\leq&\alpha \rho (1-\alpha \mu)^{N-m-2}(\|w_{m+1}^i(w_1,\phi_1)-w_{m+1}^i(w_2,\phi_2)\| + \|\phi_1-\phi_2\|) \nonumber
\\&+ (1-\alpha \mu)\|U_{m+1}(w_1,\phi_1)-U_{m+1}(w_2,\phi_2)\| \nonumber
\\\leq&\alpha \rho (1-\alpha \mu)^{N-m-2}\Big((1-\alpha\mu)^{\frac{m+1}{2}}  \|w_1-w_2\| +  \Big(\frac{2L}{\mu}+1\Big)\|\phi_1-\phi_2\|\Big) \nonumber
\\&\hspace{3cm}+ (1-\alpha \mu)\|U_{m+1}(w_1,\phi_1)-U_{m+1}(w_2,\phi_2)\|,
\end{align*} 
where the last  inequality follows from~\Cref{le:support}.  
Telescoping the above inequality over $m$ yields
\begin{align}
\|&U_m(w_1,\phi_1)-U_m(w_2,\phi_2)\| \nonumber
\\&\leq (1-\alpha\mu)^{N-m-2}\|U_{N-2}(w_1,\phi_1)-U_{N-2}(w_2,\phi_2)\|  \nonumber
\\&\quad+\sum_{t=0}^{N-m-3} (1-\alpha\mu)^{t}\alpha \rho (1-\alpha \mu)^{N-m-t-2}\Big((1-\alpha\mu)^{\frac{m+t+1}{2}}  \|w_1-w_2\| +  \Big(\frac{2L}{\mu}+1\Big)\|\phi_1-\phi_2\|\Big),\nonumber
\end{align}
which, in conjunction with~\cref{eq:notations}, yields
\begin{align}\label{eq:ggsmida}
\|U_m(w_1,\phi_1)-U_m(w_2,\phi_2)\| \leq& \left( \frac{\alpha\rho}{1-\alpha\mu}+\frac{2\rho}{\mu}\right)(1-\alpha\mu)^{N-1-\frac{m}{2}}\|w_1-w_2\|  \nonumber
\\+&\alpha(N-1-m)\left( \rho+\frac{2\rho L}{\mu} \right)(1-\alpha\mu)^{N-2-m}\|\phi_1-\phi_2\|.
\end{align}
Combining~\cref{eq:rfist} and~\cref{eq:ggsmida} yields
\begin{align}\label{eq:r2bb}
R_2\leq& LM \left( \frac{\alpha\rho}{1-\alpha\mu}+\frac{2\rho}{\mu}\right)(1-\alpha\mu)^{N-1-\frac{m}{2}}\|w_1-w_2\|  \nonumber
\\&\quad+\alpha LM(N-1-m)\left( \rho+\frac{2\rho L}{\mu} \right)(1-\alpha\mu)^{N-2-m}\|\phi_1-\phi_2\|.
\end{align}
For $R_3$, using the triangle inequality, we have 
\begin{align}\label{eq:r3bb}
R_3\leq& L(1-\alpha \mu)^{N-m-1} L(\|w_N^i(w_1,\phi_1)-w_N^i(w_2,\phi_2)\|+\|\phi_1-\phi_2\|) \nonumber
\\\leq& L^2(1-\alpha\mu)^{\frac{3N}{2}-m-1}\|w_1-w_2\| + L^2\left(\frac{2L}{\mu}+1\right)(1-\alpha\mu)^{N-1-m}\|\phi_1-\phi_2\|.
\end{align}
where the last inequality follows from~\Cref{le:support}.

Combine $R_1,R_2$ and $R_3$ in~\cref{sR1},~\cref{eq:r2bb} and~\cref{eq:r3bb}, we have
\begin{align}\label{eq:youdianda}
\sum_{m=0}^{N-1}&(R_1+R_2+R_3) \leq  \frac{2\tau M}{\alpha\mu}(1-\alpha\mu)^{\frac{N-1}{2}}  \|w_1-w_2\| + \frac{\tau M}{\alpha\mu}\Big(\frac{2L}{\mu}+1\Big) \|\phi_1-\phi_2\| \nonumber
\\ &+ \frac{2LM}{\alpha\mu} \left( \frac{\alpha\rho}{1-\alpha\mu}+\frac{2\rho}{\mu}\right)(1-\alpha\mu)^{\frac{N-1}{2}}\|w_1-w_2\| +\frac{\alpha LM}{\alpha^2\mu^2}\left( \rho+\frac{2\rho L}{\mu} \right)\|\phi_1-\phi_2\| \nonumber
\\&+\frac{L^2}{\alpha\mu} (1-\alpha\mu)^{\frac{N}{2}}\|w_1-w_2\| + \frac{L^2}{\alpha\mu}\left(\frac{2L}{\mu}+1\right)\|\phi_1-\phi_2\|.
\end{align}
In addition, note that 
\begin{align}\label{eq:ggpopdasda}
 \|\nabla_\phi L_{\mathcal{D}_i}(w_{N}^i(w_1,\phi_1),&\phi_1)-\nabla_\phi L_{\mathcal{D}_i}(w_{N}^i(w_2,\phi_2),\phi_2)\| \nonumber
 \\&\leq (1-\alpha\mu)^{\frac{N}{2}} L \|w_1-w_2\| +  L\left(\frac{2L}{\mu}+1\right)\|\phi_1-\phi_2\|.
 \end{align}
 Combining~\cref{eq:initllls},~\cref{eq:youdianda}, and~\cref{eq:ggpopdasda} yields
\begin{align}\label{eq:opissas}
\Big\|\frac{\partial L_{\mathcal{D}_i}( w^i_N, \phi)}{\partial \phi}\Big |_{(w_1,\phi_1)} & - \frac{\partial L_{\mathcal{D}_i}( w^i_N, \phi)}{\partial \phi} \Big |_{(w_2,\phi_2)}\Big\| \nonumber
\\ \leq &  \left(  L+\frac{2\tau M}{\mu}+ \frac{2LM}{\mu} \left( \frac{\alpha\rho}{1-\alpha\mu}+\frac{2\rho}{\mu}\right) +\frac{L^2}{\mu} \right)(1-\alpha\mu)^{\frac{N-1}{2}}\|w_1-w_2\| \nonumber
\\&+ \left(L+\frac{\tau M}{\mu} + \frac{ LM\rho}{\mu^2}+\frac{L^2}{\mu} \right)\left(\frac{2L}{\mu}+1\right)\|\phi_1-\phi_2\|,
\end{align}
which, using an approach similar to~\cref{eq:jensen}, completes the proof. 
%{\bf Upper-bounding $(P)$ in~\cref{eq:initss}}
%\begin{align}
% \Big\| &\frac{\partial L^{meta}( w,\phi)}{\partial w} \big |_{(w_1,\phi_1)} -  \frac{\partial L^{meta}( w,\phi)}{\partial w} \big |_{(w_2,\phi_2)} \Big\|
%\end{align}
%which finishes the proof. 
%which, in onjunction with $\beta_w = \frac{1}{2L_w}$ and $\beta_\phi = \frac{1}{L_w+L_\phi}$, yields
%\begin{align*}
%\frac{\beta_w}{2} \mathbb{E}\left\| \frac{\partial L^{meta}(w_\xi,\phi_\xi)}{\partial w}\right\|^2  \leq& \frac{\Delta}{K} + \frac{\beta_w+2\beta_\phi}{2B}(1+\alpha L)^{2N}M^2 \nonumber
%\\\frac{\beta_\phi}{2} \mathbb{E}\left\| \frac{\partial L^{meta}(w_\xi,\phi_\xi)}{\partial \phi}\right\|^2  \leq &\frac{\Delta}{K} + \frac{\beta_w+2\beta_\phi}{2B}(1+\alpha L)^{2N}M^2,
%\end{align*}
%which, in conjunction with $\Delta = L^{meta}(w_0,\phi_0)-\min_{w,\phi}L^{meta}(w,\phi)$ and letting
\subsection{Proof of~\Cref{th:strong-convex} }
For notational convenience, we define
\begin{align}\label{eq:ddfinetes}
g_{w}^i(k) &=  \frac{\partial L_{\mathcal{D}_i}( w^i_{k,N},\phi_k)}{\partial {w_k}},\quad g_{\phi}^i(k)  = \frac{\partial L_{\mathcal{D}_i}( w^i_{k,N},\phi_k)}{\partial {\phi_k}}, \nonumber
\\L_w&=(1-\alpha\mu)^{\frac{3N}{2}} L+ \frac{2\rho M}{\mu}(1-\alpha\mu)^{N-1},L_w^\prime= \Big(L+\alpha\rho MN\Big)(1-\alpha\mu)^{N-1}\left(\frac{2L}{\mu}+1\right),\nonumber
\\  L_\phi&= \left(  L+\frac{2\tau M}{\mu}+ \frac{2LM}{\mu} \left( \frac{\alpha\rho}{1-\alpha\mu}+\frac{2\rho}{\mu}\right) +\frac{L^2}{\mu} \right)(1-\alpha\mu)^{\frac{N-1}{2}}, \nonumber
\\L_\phi^\prime &=\left(L+\frac{\tau M}{\mu} + \frac{ LM\rho}{\mu^2}+\frac{L^2}{\mu} \right)\left(\frac{2L}{\mu}+1\right).
\end{align}
Then, the updates of~\Cref{alg:anil} are given by 
\begin{align}\label{alg:sgdpf}
w_{k+1}=  w_{k} - \frac{\beta_w}{B}\sum_{i\in\mathcal{B}_k} g_{w}^i(k)\,\text{ and }\, \phi_{k+1}= \phi_{k} - \frac{\beta_\phi}{B}\sum_{i\in\mathcal{B}_k}g_{\phi}^i(k).
\end{align}
Based on the smoothness properties established in~\cref{eq:mamamiya} and~\cref{eq:opissas} in the proof of~\Cref{le:strong-convex}, we have 
\begin{align*}
L^{meta}(w_{k+1},\phi_k) \leq & L^{meta}(w_k,\phi_k) + \left\langle \frac{\partial L^{meta}(w_k,\phi_k)}{\partial w_k}, w_{k+1}-w_k  \right\rangle + \frac{L_w}{2} \|w_{k+1}-w_k\|^2, \nonumber
\\ L^{meta}(w_{k+1},\phi_{k+1}) \leq & L^{meta}(w_{k+1},\phi_k) + \left\langle \frac{\partial L^{meta}(w_{k+1},\phi_k)}{\partial \phi_k}, \phi_{k+1}-\phi_k  \right\rangle + \frac{L^\prime_\phi}{2} \|\phi_{k+1}-\phi_k\|^2.
%\\ \leq &L^{meta}(w_k,\phi_k) 
\end{align*}
Adding the above two inequalities, we have
\begin{align}\label{eq:gg1}
L^{meta}(w_{k+1},\phi_{k+1}) \leq&  L^{meta}(w_k,\phi_k) + \left\langle \frac{\partial L^{meta}(w_k,\phi_k)}{\partial w_k}, w_{k+1}-w_k  \right\rangle + \frac{L_w}{2} \|w_{k+1}-w_k\|^2 \nonumber
\\ &+\left\langle \frac{\partial L^{meta}(w_{k},\phi_k)}{\partial \phi_k}, \phi_{k+1}-\phi_k  \right\rangle + \frac{L_\phi^\prime}{2} \|\phi_{k+1}-\phi_k\|^2  \nonumber
\\&+\left\langle \frac{\partial L^{meta}(w_{k+1},\phi_k)}{\partial \phi_k}-\frac{\partial L^{meta}(w_{k},\phi_k)}{\partial \phi_k}, \phi_{k+1}-\phi_k  \right\rangle. 
\end{align}
%which, in conjunction with
Based on the Cauchy-Schwarz inequality, we have 
\begin{align}\label{eq:gg2}
\Big\langle \frac{\partial L^{meta}(w_{k+1},\phi_k)}{\partial \phi_k}-&\frac{\partial L^{meta}(w_{k},\phi_k)}{\partial \phi_k}, \phi_{k+1}-\phi_k  \Big\rangle  \nonumber
\\&\leq  L_\phi\|w_{k+1}-w_k\|\|\phi_{k+1}-\phi_k\|\nonumber
\\&\leq  \frac{L_\phi}{2}\|w_{k+1}-w_k\|^2 + \frac{L_\phi}{2}\|\phi_{k+1}-\phi_k\|^2.
\end{align}
Combining~\cref{eq:gg1} and~\cref{eq:gg2}, we have   
\begin{align*}
L^{meta}(w_{k+1},\phi_{k+1}) \leq&  L^{meta}(w_k,\phi_k) + \left\langle \frac{\partial L^{meta}(w_k,\phi_k)}{\partial w_k}, w_{k+1}-w_k  \right\rangle + \frac{L_w+L_\phi }{2}\|w_{k+1}-w_k\|^2 \nonumber
\\ &+\left\langle \frac{\partial L^{meta}(w_{k},\phi_k)}{\partial \phi_k}, \phi_{k+1}-\phi_k  \right\rangle + \frac{L_\phi+L_\phi^\prime}{2}\|\phi_{k+1}-\phi_k\|^2,  \nonumber
\end{align*}
which, in conjunction with the updates in~\cref{alg:sgdpf}, yields 
\begin{align}\label{eq:wellplay}
L^{meta}&(w_{k+1},\phi_{k+1}) \nonumber
\\\leq&  L^{meta}(w_k,\phi_k) - \left\langle \frac{\partial L^{meta}(w_k,\phi_k)}{\partial w_k}, \frac{\beta_w}{B}\sum_{i\in\mathcal{B}_k} g_{w}^i (k) \right\rangle + \frac{L_w+L_\phi }{2} \Big\| \frac{\beta_w}{B}\sum_{i\in\mathcal{B}_k} g_{w}^i (k)\Big\|^2 \nonumber
\\ &-\left\langle \frac{\partial L^{meta}(w_{k},\phi_k)}{\partial \phi_k},\frac{\beta_\phi}{B}\sum_{i\in\mathcal{B}_k}g_{\phi}^i(k)  \right\rangle + \frac{L_\phi+L_\phi^\prime}{2} \Big\|\frac{\beta_\phi}{B}\sum_{i\in\mathcal{B}_k}g_{\phi}^i(k)\Big\|^2.
\end{align}
%Adding the above two inequalities, and using an approach similar to~\cref{eq:noexp}, we have 
%\begin{align}\label{eq:wellplay}
%L^{meta}&(w_{k+1},\phi_{k+1}) \nonumber
%\\\leq&  L^{meta}(w_k,\phi_k) - \left\langle \frac{\partial L^{meta}(w_k,\phi_k)}{\partial w_k}, \frac{\beta_w}{B}\sum_{i\in\mathcal{B}_k} g_{w}^i (k) \right\rangle +\frac{L_w+L_\phi}{2} \Big\| \frac{\beta_w}{B}\sum_{i\in\mathcal{B}_k} g_{w}^i (k)\Big\|^2 \nonumber
%\\ -&\left\langle \frac{\partial L^{meta}(w_{k},\phi_k)}{\partial \phi_k},\frac{\beta_\phi}{B}\sum_{i\in\mathcal{B}_k}g_{\phi}^i(k)  \right\rangle + \frac{L_\phi+L_\phi^\prime}{2} \Big\|\frac{\beta_\phi}{B}\sum_{i\in\mathcal{B}_k}g_{\phi}^i(k)\Big\|^2.
%\end{align}
Let $\mathbb{E}_k=\mathbb{E}(\cdot | w_k,\phi_k)$. Then, conditioning on $w_k,\phi_k$, and taking expectation over~\cref{eq:wellplay}, we have 
%\begin{align}\label{eq:diedie}
%\mathbb{E}_k L^{meta}(w_{k+1},\phi_{k+1}) \leq&   L^{meta}(w_k,\phi_k) - \beta_w\left\| \frac{\partial L^{meta}(w_k,\phi_k)}{\partial w_k}\right\|^2 + \frac{(L_w+L_\phi)\beta_w^2}{2B}\mathbb{E}_k\big\|g_{w}^i(k) \big\|^2 \nonumber
%\\ &+ \frac{L_\phi+L_w}{2}\beta_w^2 \left\| \frac{\partial L^{meta}(w_k,\phi_k)}{\partial w_k}\right\|^2-\beta_\phi\left\| \frac{\partial L^{meta}(w_{k},\phi_k)}{\partial \phi_k}  \right\|^2 \nonumber
%\\&+ \frac{L_\phi+L_\phi^\prime}{2} \left(\frac{\beta_\phi^2}{B}\mathbb{E}_k\big\|g_{\phi}^i(k) \big\|^2 + \beta_\phi^2 \left\| \frac{\partial L^{meta}(w_k,\phi_k)}{\partial \phi_k}\right\|^2\right).
%\end{align}
\begin{align}\label{eq:diedie}
\mathbb{E}_k L^{meta}(w_{k+1},\phi_{k+1}) \overset{(i)}\leq&  L^{meta}(w_k,\phi_k) - \beta_w\left\| \frac{\partial L^{meta}(w_k,\phi_k)}{\partial w_k}\right\|^2 + \frac{L_w+L_\phi}{2} \mathbb{E}_k\Big\| \frac{\beta_w}{B}\sum_{i\in\mathcal{B}_k} g_{w}^i(k) \Big\|^2 \nonumber
\\ &-\beta_\phi\left\| \frac{\partial L^{meta}(w_{k},\phi_k)}{\partial \phi_k}  \right\| + \frac{L_\phi+L_\phi^\prime}{2} \mathbb{E}_k\Big\|\frac{\beta_\phi}{B}\sum_{i\in\mathcal{B}_k}g_{\phi}^i(k)\Big\|^2 \nonumber
\\\leq & L^{meta}(w_k,\phi_k) - \beta_w\left\| \frac{\partial L^{meta}(w_k,\phi_k)}{\partial w_k}\right\|^2 + \frac{(L_w+L_\phi)\beta_w^2}{2B}\mathbb{E}_k\big\|g_{w}^i(k) \big\|^2 \nonumber
\\ &+ \frac{L_\phi+L_w}{2}\beta_w^2 \left\| \frac{\partial L^{meta}(w_k,\phi_k)}{\partial w_k}\right\|^2-\beta_\phi\left\| \frac{\partial L^{meta}(w_{k},\phi_k)}{\partial \phi_k}  \right\|^2 \nonumber
\\&+ \frac{L_\phi+L_\phi^\prime}{2} \left(\frac{\beta_\phi^2}{B}\mathbb{E}_k\big\|g_{\phi}^i(k) \big\|^2 + \beta_\phi^2 \left\| \frac{\partial L^{meta}(w_k,\phi_k)}{\partial \phi_k}\right\|^2\right),
\end{align}
where $(i)$ follows from the fact that $\mathbb{E}_k g_w^i(k)= \frac{\partial L^{meta}(w_k,\phi_k)}{\partial w_k}$ and $\mathbb{E}_k g_\phi^i(k)= \frac{\partial L^{meta}(w_k,\phi_k)}{\partial \phi_k}$.

 Our next
step is to upper-bound $\mathbb{E}_k\big\|g_{w}^i(k) \big\|^2$ and $\mathbb{E}_k\big\|g_{\phi}^i(k) \big\|^2$ in~\cref{eq:diedie}. Based on the definitions of $g_{w}^i(k)$ in~\cref{eq:ddfinetes} and using the explicit forms of the meta gradients in~\Cref{le:gd_form}, we have
\begin{align}\label{eq:omgs}
\mathbb{E}_k\big\|g_{w}^i(k) \big\|^2 \leq& \mathbb{E}_k\Big\|\prod_{m=0}^{N-1}(I - \alpha \nabla_w^2L_{\mathcal{S}_i}(w_{k,m}^i,\phi_k)) \nabla_{w} L_{\mathcal{D}_i} (w_{k,N}^i,\phi_k)\Big\|^2 \nonumber\\\leq &(1-\alpha\mu)^{2N} M^2.
\end{align}
Using an approach similar to~\cref{eq:omgs}, we have 
\begin{align}\label{eq:anothergg}
\mathbb{E}_k\big\|g_{\phi}^i(k) \big\|^2  \leq& 2\mathbb{E}_k\bigg\|\alpha \sum_{m=0}^{N-1}\nabla_\phi\nabla_w L_{\mathcal{S}_i}(w_{k,m}^i,\phi_k) \prod_{j=m+1}^{N-1}(I-\alpha\nabla_w^2L_{\mathcal{S}_i}(w_{k,j}^i,\phi_k))\nabla_w L_{\mathcal{D}_i}(w_{k,N}^i,\phi_k)\bigg\|^2 \nonumber
\\&+2\|\nabla_\phi L_{\mathcal{D}_i}(w_{k,N}^i,\phi_k)\|^2 \nonumber
\\ \leq& 2\alpha^2 L^2 M^2 \mathbb{E}_k\Big(\sum_{m=0}^{N-1} (1-\alpha \mu)^{N-1-m}\Big)^2 +2 M^2 \nonumber
\\< & \frac{2L^2M^2}{\mu^2}+2M^2 <  2M^2 \left( \frac{L^2}{\mu^2}+1 \right).
\end{align}
Substituting~\cref{eq:omgs} and~\cref{eq:anothergg} into~\cref{eq:diedie} yields
%\begin{align}\label{eq:jingyi}
%\mathbb{E}_k L^{meta}(w_{k+1},\phi_{k+1}) \leq&   L^{meta}(w_k,\phi_k) - \beta_w\left\| \frac{\partial L^{meta}(w_k,\phi_k)}{\partial w_k}\right\|^2 + \frac{(L_w+L_\phi)\beta_w^2}{2B}\mathbb{E}_k\big\|g_{w}^i(k) \big\|^2 \nonumber
%\\ &+ \frac{L_\phi+L_w}{2}\beta_w^2 \left\| \frac{\partial L^{meta}(w_k,\phi_k)}{\partial w_k}\right\|^2-\beta_\phi\left\| \frac{\partial L^{meta}(w_{k},\phi_k)}{\partial \phi_k}  \right\|^2 \nonumber
%\\&+ \frac{L_\phi+L_\phi^\prime}{2} \left(\frac{\beta_\phi^2}{B}\mathbb{E}_k\big\|g_{\phi}^i(k) \big\|^2 + \beta_\phi^2 \left\| \frac{\partial L^{meta}(w_k,\phi_k)}{\partial \phi_k}\right\|^2\right).
%\end{align}
\begin{align}\label{eq:jingyi}
\mathbb{E}_k L^{meta}(w_{k+1},&\phi_{k+1}) \leq L^{meta}(w_k,\phi_k) - \left(\beta_w-\frac{L_w+L_\phi}{2}\beta_w^2\right)\left\| \frac{\partial L^{meta}(w_k,\phi_k)}{\partial w_k}\right\|^2 \nonumber
\\&+ \frac{(L_w+L_\phi)\beta_w^2}{2B}(1-\alpha\mu)^{2N} M^2-\left(\beta_\phi -\frac{L_\phi+L_\phi^\prime}{2}\beta_\phi^2 \right)\left\| \frac{\partial L^{meta}(w_{k},\phi_k)}{\partial \phi_k}  \right\|^2\nonumber
\\ & + \frac{ (L_\phi+L_\phi^\prime) \beta_\phi^2}{B} M^2 \left( \frac{L^2}{\mu^2}+1 \right).
\end{align}
Let $\beta_w=\frac{1}{L_w+L_\phi}$ and $\beta_\phi = \frac{1}{L_\phi+L_\phi^\prime}$. Then, 
unconditioning on $w_k$ and $\phi_k$ and telescoping~\cref{eq:jingyi} over $k$ from $0$ to $K-1$ yield 
\begin{align}
\frac{\beta_w}{2}&\frac{1}{K}\sum_{k=0}^{K-1}\mathbb{E}\left\| \frac{\partial L^{meta}(w_k,\phi_k)}{\partial w_k}\right\|^2 + \frac{\beta_\phi}{2}\frac{1}{K}\sum_{k=0}^{K-1}\mathbb{E}\left\| \frac{\partial L^{meta}(w_{k},\phi_k)}{\partial \phi_k}  \right\|^2 \nonumber
\\& \leq  \frac{L^{meta}(w_0,\phi_0)-\min_{w,\phi}L^{meta}(w,\phi) }{K}+ \frac{\beta_w}{2B}(1-\alpha\mu)^{2N} M^2+\frac{  \beta_\phi}{B} M^2 \left( \frac{L^2}{\mu^2}+1 \right).
\end{align}
Let $\Delta = L^{meta}(w_0,\phi_0)-\min_{w,\phi}L^{meta}(w,\phi)$ and let $\xi$ be chosen from $\{0,...,K-1\}$ uniformly at random. Then, we have 
\begin{align*}
\mathbb{E}\left\| \frac{\partial L^{meta}(w_\xi,\phi_\xi)}{\partial w_\xi}\right\|^2  \leq& \frac{2\Delta (L_w+L_\phi)}{K} + \frac{(1-\alpha\mu)^{2N} M^2}{B}+\frac{L_w+L_\phi}{L_\phi+L_\phi^\prime}\frac{2 }{B} M^2 \left( \frac{L^2}{\mu^2}+1 \right), \nonumber
\\\mathbb{E}\left\| \frac{\partial L^{meta}(w_\xi,\phi_\xi)}{\partial \phi_\xi}\right\|^2  \leq &\frac{2\Delta (L_\phi+L_\phi^\prime)}{K} +\frac{L_\phi+L_\phi^\prime}{L_w+L_\phi}\frac{1}{B}(1-\alpha\mu)^{2N} M^2+\frac{ 2}{B} M^2 \left( \frac{L^2}{\mu^2}+1 \right),
\end{align*}
which, in conjunction with the definitions of $L_\phi,L_\phi^\prime$ and $L_w$ in~\cref{eq:ddfinetes} and $\alpha=\frac{\mu}{L^2}$, yields 
\begin{align*}
\mathbb{E}\left\| \frac{\partial L^{meta}(w_\xi,\phi_\xi)}{\partial w_\xi}\right\|^2  \leq& \mathcal{O}\Bigg( \frac{  \frac{1}{\mu^2}\left(1-\frac{\mu^2}{L^2}\right)^{\frac{N}{2}}}{K}      +\frac{\frac{1}{\mu} \left(1-\frac{\mu^2}{L^2}\right)^{\frac{N}{2}}}{B}     \Bigg), \nonumber
\\\mathbb{E}\left\| \frac{\partial L^{meta}(w_\xi,\phi_\xi)}{\partial \phi_\xi}\right\|^2  \leq & \mathcal{O}\Bigg(\frac{ \frac{1}{\mu^2}\left(1-\frac{\mu^2}{L^2}\right)^{\frac{N}{2}}+\frac{1}{\mu^3}}{K} +\frac{\frac{1}{\mu}\left(1-\frac{\mu^2}{L^2}\right)^{\frac{3N}{2}}+\frac{1}{\mu^2}}{B}\Bigg).
\end{align*}
To achieve an $\epsilon$-stationary point, i.e., {\small $\mathbb{E}\left\| \frac{\partial L^{meta}(w,\phi)}{\partial w}\right\|^2 <\epsilon,\mathbb{E}\left\| \frac{\partial L^{meta}(w,\phi)}{\partial w}\right\|^2 <\epsilon$}, ANIL requires at most 
\begin{align*}
KBN=&\mathcal{O}\left(\frac{L^2}{\mu^2}\left(1-\frac{\mu^2}{L^2}\right)^{\frac{N}{2}}+\frac{L^3}{\mu^3}\right)\left(\frac{L}{\mu}\left(1-\frac{\mu^2}{L^2}\right)^{\frac{3N}{2}}+\frac{L^2}{\mu^2}\right)N\epsilon^{-2}
\\\leq&\mathcal{O}\left(\frac{N}{\mu^4}\left(1-\frac{\mu^2}{L^2}\right)^{\frac{N}{2}}+\frac{N}{\mu^5}\right)\epsilon^{-2}
\end{align*}
 gradient evaluations in $w$,  $KB=\mathcal{O}\Big(\mu^{-4}\left(1-\frac{\mu^2}{L^2}\right)^{N/2}+\mu^{-5}\Big)\epsilon^{-2}$ gradient evaluations in $\phi$,  and $KBN=\mathcal{O}\Big(\frac{N}{\mu^{4}}\left(1-\frac{\mu^2}{L^2}\right)^{N/2}+\frac{N}{\mu^{5}}\Big)\epsilon^{-2}$ evaluations of second-order derivatives.

\section{Proof in~\Cref{sec:nonconvex}: Nonconvex Inner Loop}
\subsection{Proof of~\Cref{le:smooth_nonconvex}}\label{appen:smooth_nonconvex}
%Using an approach similar to~\Cref{le:gd_form}, we have
%\begin{align}\label{eq:ainiyo}
%\frac{\partial L_{\mathcal{D}_i}( w^i_{N}, \phi)}{\partial w} =& \prod_{m=0}^{N-1}(I - \alpha \nabla_w^2L_{\mathcal{S}_i}(w_{m}^i,\phi)) \nabla_{w} L_{\mathcal{D}_i} (w_{N}^i,\phi). 
%\end{align}
%Let $w_m^i(w,\phi)$ denote the $m^{th}$ inner-loop iteration starting from $(w,\phi)$. Then, we have 
Based on the explicit forms of the meta gradient in~\cref{eq:ainiyo} and using an approach similar to~\cref{eq:initss}, we have 
\begin{align}\label{eq:def}
 &\Big\| \frac{\partial L_{\mathcal{D}_i}( w^i_N,\phi)}{\partial w} \Big |_{(w_1,\phi_1)} -  \frac{\partial L_{\mathcal{D}_i}( w^i_N,\phi)}{\partial w} \Big |_{(w_2,\phi_2)} \Big\| \nonumber
\\ &= \Big\|\prod_{m=0}^{N-1}(I - \alpha \nabla_w^2L_{\mathcal{S}_i}(w_{m}^i(w_1,\phi_1),\phi_1)) \nabla_{w} L_{\mathcal{D}_i} (w_{N}^i(w_1,\phi_1),\phi_1)\nonumber
\\&\hspace{2cm}-\prod_{m=0}^{N-1}(I - \alpha \nabla_w^2L_{\mathcal{S}_i}(w_{m}^i(w_2,\phi_2),\phi_2)) \nabla_{w} L_{\mathcal{D}_i} (w_{N}^i(w_2,\phi_2),\phi_2)\Big\|, 
\end{align}
where $w_m^i(w,\phi)$ is obtained through the gradient descent steps in~\cref{eq:updates}.
%\begin{align}\label{eq:updates}
%w_{t+1}^i(w,\phi) = w_{t}^i(w,\phi) - \alpha \nabla_{w} L_{\mathcal{S}_i} (w_{t}^i(w,\phi),\phi),\, t=0,...,m-1\;\text{and} \;w_0^i(w,\phi)=w.
%\end{align}

Using the triangle inequality in~\cref{eq:def} yields
\begin{align}\label{eq:diff} 
\Big\| &\frac{\partial L_{\mathcal{D}_i}( w^i_N,\phi)}{\partial w} \Big |_{(w_1,\phi_1)} -  \frac{\partial L_{\mathcal{D}_i}( w^i_N,\phi)}{\partial w} \Big |_{(w_2,\phi_2)} \Big\| \nonumber
\\ \leq & \Big\|\prod_{m=0}^{N-1}(I - \alpha \nabla_w^2L_{\mathcal{S}_i}(w_{m}^i(w_2,\phi_2),\phi_2)) \Big\|\Big\|\nabla_{w} L_{\mathcal{D}_i} (w_{N}^i(w_1,\phi_1),\phi_1)-\nabla_{w} L_{\mathcal{D}_i} (w_{N}^i(w_2,\phi_2),\phi_2)\Big\| \nonumber
\\ &+\Big\|\prod_{m=0}^{N-1}(I - \alpha \nabla_w^2L_{\mathcal{S}_i}(w_{m}^i(w_1,\phi_1),\phi_1)) \nabla_{w} L_{\mathcal{D}_i} (w_{N}^i(w_1,\phi_1),\phi_1)\nonumber
\\&\hspace{2cm}-\prod_{m=0}^{N-1}(I - \alpha \nabla_w^2L_{\mathcal{S}_i}(w_{m}^i(w_2,\phi_2),\phi_2)) \nabla_{w} L_{\mathcal{D}_i} (w_{N}^i(w_1,\phi_1),\phi_1)\Big\|. 
%\\&\hspace{2cm}-\prod_{m=0}^{N-1}(I - \alpha \nabla_w^2L_{\mathcal{S}_i}(w_{m}^i(w_2,\phi_2),\phi_2)) \nabla_{w} L_{\mathcal{D}_i} (w_{N}^i(w_2,\phi_2),\phi_2)\Big\|, 
\end{align}
Our next two steps are to upper-bound the two terms at the right hand side of~\cref{eq:diff}, respectively.

{Step 1: Upper-bound the first term at the right hand side of~\cref{eq:diff}.}
%For the first term in the right side of~\cref{eq:diff}, we have
%Note that 
\begin{align}\label{eq:first}
&\Big\|\prod_{m=0}^{N-1}(I - \alpha \nabla_w^2L_{\mathcal{S}_i}(w_{m}^i(w_2,\phi_2),\phi_2)) \Big\|\Big\|\nabla_{w} L_{\mathcal{D}_i} (w_{N}^i(w_1,\phi_1),\phi_1)-\nabla_{w} L_{\mathcal{D}_i} (w_{N}^i(w_2,\phi_2),\phi_2)\Big\| \nonumber
\\&\overset{(i)}\leq (1+\alpha L)^N \Big\|\nabla_{w} L_{\mathcal{D}_i} (w_{N}^i(w_1,\phi_1),\phi_1)-\nabla_{w} L_{\mathcal{D}_i} (w_{N}^i(w_2,\phi_2),\phi_2)\Big\|\nonumber
\\&\overset{(ii)}\leq (1+\alpha L)^N L(\|w_{N}^i(w_1,\phi_1)-w_{N}^i(w_2,\phi_2)\|+\|\phi_1-\phi_2\|),
\end{align}
  where $(i)$ follows from the fact that $\| \nabla_w^2L_{\mathcal{S}_i}(w_{m}^i(w_2,\phi_2),\phi_2)\|\leq L$, and $(ii)$ follows from~\Cref{assm:smooth}. Based on the gradient descent steps in~\cref{eq:updates}, we have, for any $0\leq m\leq N-1$,  
  \begin{align}
 & w_{m+1}^i(w_1,\phi_1) - w_{m+1}^i(w_2,\phi_2)  \nonumber
 \\ &\quad= w_{m}^i(w_1,\phi_1) -w_{m}^i(w_2,\phi_2) - \alpha\big(\nabla_{w} L_{\mathcal{S}_i} (w_{m}^i(w_1,\phi_1),\phi_1) -\nabla_{w} L_{\mathcal{S}_i} (w_{m}^i(w_2,\phi_2),\phi_2)\big).\nonumber
  \end{align}
Based on the above equality, we further obtain 
\begin{align}
\|w_{m+1}^i(w_1,\phi_1) - w_{m+1}^i(w_2,\phi_2)\|\leq& \|w_{m}^i(w_1,\phi_1) -w_{m}^i(w_2,\phi_2) \| \nonumber
\\&+ \alpha \|\nabla_{w} L_{\mathcal{S}_i} (w_{m}^i(w_1,\phi_1),\phi_1) -\nabla_{w} L_{\mathcal{S}_i} (w_{m}^i(w_2,\phi_2),\phi_2)\| \nonumber 
\\\leq&(1+\alpha L)\|w_{m}^i(w_1,\phi_1) -w_{m}^i(w_2,\phi_2) \|  + \alpha L \|\phi_1-\phi_2\|,\nonumber
\end{align}
where the last inequality follows from~\Cref{assm:smooth}.
Telescoping the above inequality over $m$ from $0$ to $N-1$ yields
\begin{align}\label{eq:wn}
\|w_{N}^i(w_1,\phi_1) - w_{N}^i(w_2,\phi_2)\| \leq (1+\alpha L)^N \|w_1-w_2\| + ((1+\alpha L)^N-1)\|\phi_1-\phi_2\|.
\end{align}
Combining~\cref{eq:first} and~\cref{eq:wn} yields
\begin{align}\label{upb1}
&\Big\|\prod_{m=0}^{N-1}(I - \alpha \nabla_w^2L_{\mathcal{S}_i}(w_{m}^i(w_2,\phi_2),\phi_2)) \Big\|\Big\|\nabla_{w} L_{\mathcal{D}_i} (w_{N}^i(w_1,\phi_1),\phi_1)-\nabla_{w} L_{\mathcal{D}_i} (w_{N}^i(w_2,\phi_2),\phi_2)\Big\| \nonumber
\\&\leq (1+\alpha L)^{2N} L(\|w_1-w_2\|+\|\phi_1-\phi_2\|).
\end{align}
%  To upper-bound $\|w_{N}^i(w_1,\phi_1)-w_{N}^i(w_2,\phi_2)$ in the above inequality, we have 

{ Step 2: Upper-bound the second term at the right hand side of~\cref{eq:diff}.}

Based on item 2 in~\Cref{assm:smooth}, we have that $\|\nabla_{w} L_{\mathcal{D}_i}(\cdot,\cdot)\|\leq M$. Then, the second term at the right hand side of~\cref{eq:diff} is further upper-bounded by 
\begin{align}\label{m:quantity}
M\underbrace{\bigg\|\prod_{m=0}^{N-1}(I - \alpha \nabla_w^2L_{\mathcal{S}_i}(w_{m}^i(w_1,\phi_1),\phi_1)) -\prod_{m=0}^{N-1}(I - \alpha \nabla_w^2L_{\mathcal{S}_i}(w_{m}^i(w_2,\phi_2),\phi_2))\bigg\|}_{P_{N-1}}. 
\end{align}
In order to upper-bound $P_{N-1}$ in~\cref{m:quantity}, we define a more general quantity $P_t$ by replacing $N-1$ with $t$ in~\cref{m:quantity}. Based on the triangle inequality, we have 
\begin{align}
P_t \leq& \alpha \bigg\|\prod_{m=0}^{t-1}(I - \alpha \nabla_w^2L_{\mathcal{S}_i}(w_{m}^i,\phi_1))\bigg\|\Big\|  \nabla_w^2L_{\mathcal{S}_i}(w_{t}^i(w_1,\phi_1),\phi_1) - \nabla_w^2L_{\mathcal{S}_i}(w_{t}^i(w_2,\phi_2),\phi_2)\Big\| \nonumber
\\ &+ P_{t-1} \Big\|I - \alpha \nabla_w^2L_{\mathcal{S}_i}(w_{t}^i(w_2,\phi_2),\phi_2)\Big\| \nonumber
%-\prod_{m=0}^{N-1}(I - \alpha \nabla_w^2L_{\mathcal{S}_i}(w_{m}^i(w_2,\phi_2),\phi_2))\bigg\|
\\\leq & \alpha(1+\alpha L)^t \rho (\|w_{t}^i(w_1,\phi_1) -w_{t}^i(w_2,\phi_2)\| + \|\phi_1-\phi_2\|) + (1+\alpha L)P_{t-1}\nonumber
\\\overset{(i)}\leq &\alpha \rho(1+\alpha L)^{2t}  (\|w_1-w_2\|+\|\phi_1-\phi_2\|) + (1+\alpha L) P_{t-1}, \nonumber
\end{align} 
where $(i)$ follows from~\cref{eq:wn}.  Rearranging the above inequality, we have  
\begin{align}\label{eq:idc}
P_t - &\frac{\rho}{L}(1+\alpha L)^{2t+1}  (\|w_1-w_2\|+\|\phi_1-\phi_2\|)  \nonumber
\\&\leq (1+\alpha L)(P_{t-1}-\frac{\rho}{L}(1+\alpha L)^{2t-1}(\|w_1-w_2\|+\|\phi_1-\phi_2\|)).
\end{align}
Telescoping \cref{eq:idc} over $t$ from $1$ to $N-1$ yields
\begin{align}
P_{N-1} - \frac{\rho}{L}(1+\alpha L)^{2N-1}  &(\|w_1-w_2\|+\|\phi_1-\phi_2\|)   \nonumber
\\&\leq (1+\alpha L)^{N}\Big(P_{0}-\frac{\rho}{L}(1+\alpha L)(\|w_1-w_2\|+\|\phi_1-\phi_2\|) \Big),  \nonumber
%\\&\leq  (1+\alpha L)^{N}
\end{align}
which, in conjunction with $P_{0}=\alpha\|\nabla_w^2L_{\mathcal{S}_i}(w_1,\phi_1)-\nabla_w^2L_{\mathcal{S}_i}(w_2,\phi_2)\|\leq \alpha\rho(\|w_1-w_2\|+\|\phi_1-\phi_2\|)$, yields
\begin{align}\label{eq:pn1}
P_{N-1} - \frac{\rho}{L}(1+\alpha L)^{2N-1}  &(\|w_1-w_2\|+\|\phi_1-\phi_2\|)   \nonumber
\\&\leq (1+\alpha L)^{N}\Big(\frac{\rho}{L}(\|w_1-w_2\|+\|\phi_1-\phi_2\|) \Big)\nonumber
\\&\leq \frac{\rho}{L}(1+\alpha L)^{2N-1}  (\|w_1-w_2\|+\|\phi_1-\phi_2\|), 
 \end{align}
 where the last inequality follows because $N\geq 1$. 
 Combining \cref{m:quantity}, and~\cref{eq:pn1}, we have that the second term at the right hand side of~\cref{eq:diff} is upper-bounded by 
 \begin{align}\label{upb2}
\frac{ 2M\rho}{L}(1+\alpha L)^{2N-1}  &(\|w_1-w_2\|+\|\phi_1-\phi_2\|).
%\\&\leq \text{poly}(M,\rho,\alpha,L) N(\|w_1-w_2\|+\|\phi_1-\phi_2\|). 
 \end{align}
 {Step 3: Combine two bounds in Steps 1 and 2.}
 
 Combining~\cref{upb1},~\cref{upb2}, and using $\alpha <\mathcal{O}(\frac{1}{N})$, we have 
 \begin{align}\label{eq:j1}
 \Big\| \frac{\partial L_{\mathcal{D}_i}( w^i_N,\phi)}{\partial w} &\Big |_{(w_1,\phi_1)} -  \frac{\partial L_{\mathcal{D}_i}( w^i_N,\phi)}{\partial w} \Big |_{(w_2,\phi_2)} \Big\| \nonumber
\\ &\leq \Big(1+\alpha L+\frac{2M\rho}{L}\Big) (1+\alpha L)^{2N-1} L(\|w_1-w_2\|+\|\phi_1-\phi_2\|) \nonumber
\\&\leq \text{poly}(M,\rho,\alpha,L) N(\|w_1-w_2\|+\|\phi_1-\phi_2\|),
%+\frac{ 2M\rho}{L}(1+\alpha L)^{2N-1} (\|w_1-w_2\|+\|\phi_1-\phi_2\|)
\end{align}
which, using an approach similar to~\cref{eq:jensen}, completes the proof of the first item in~\Cref{le:smooth_nonconvex}.
We next prove the Lipschitz property of the partial gradient $\frac{\partial L_{\mathcal{D}_i}( w^i_N, \phi)}{\partial \phi}$. Using an approach similar to~\cref{eq:sect} and~\cref{R1R2R3}, we have 
%Combining~\cref{eq:sect} and~\cref{R1R2R3}, we have  
\begin{align}\label{eq:qopqop}
\Big\|&\frac{\partial L_{\mathcal{D}_i}( w^i_N, \phi)}{\partial \phi}\Big |_{(w_1,\phi_1)}  - \frac{\partial L_{\mathcal{D}_i}( w^i_N, \phi)}{\partial \phi} \Big |_{(w_2,\phi_2)}\Big\| \nonumber
\\&\leq \alpha\sum_{m=0}^{N-1}(R_1+R_2+R_3) +\|\nabla_\phi L_{\mathcal{D}_i}(w_{N}^i(w_1,\phi_1),\phi_1)-\nabla_\phi L_{\mathcal{D}_i}(w_{N}^i(w_2,\phi_2),\phi_2)\|,
\end{align}
where $R_1,R_2$ and $R_3$ are defined in~\cref{R1R2R3}.

To upper-bound $R_1$ in the above inequality, we have
\begin{align}\label{R1}
R_1\overset{(i)}\leq &\tau (\|w_m^i(w_1,\phi_1)-w_m^i(w_2,\phi_2)\| + \|\phi_1-\phi_2\|) (1+\alpha L)^{N-m-1} M \nonumber
\\\overset{(ii)}\leq& \tau M (1+\alpha L)^{N-1} (\|w_1-w_2\| + \|\phi_1-\phi_2\|),
\end{align}
where $(i)$ follows from Assumptions~\ref{assm:smooth} and~\ref{assm:second} and $(ii)$ follows from~\cref{eq:wn}.

 For $R_2$, using the triangle inequality, we have
% note that 
\begin{align}\label{eq:impor}
\|U_m&(w_1,\phi_1)-U_m(w_2,\phi_2)\|  \nonumber
\\\leq& \alpha \|\nabla_w^2L_{\mathcal{S}_i}(w_{m+1}^i(w_1,\phi_1),\phi_1)-\nabla_w^2L_{\mathcal{S}_i}(w_{m+1}^i(w_2,\phi_2),\phi_2)\| \|U_{m+1}(w_1,\phi_1)\| \nonumber
%\|\prod_{j=m+2}^{N-1}(I-\alpha\nabla_w^2L_{\mathcal{S}_i}(w_{j}^i(w_1,\phi_1),\phi_1))\|\nonumber
\\&+ \|I-\alpha\nabla_w^2L_{\mathcal{S}_i}(w_{m+1}^i(w_1,\phi_1),\phi_1)\| \|U_{m+1}(w_1,\phi_1)-U_{m+1}(w_2,\phi_2)\| \nonumber
\\\leq&\alpha \rho (1+\alpha L)^{N-m-2}(\|w_{m+1}^i(w_1,\phi_1)-w_{m+1}^i(w_2,\phi_2)\| + \|\phi_1-\phi_2\|) \nonumber
\\&+ (1+\alpha L)\|U_{m+1}(w_1,\phi_1)-U_{m+1}(w_2,\phi_2)\| \nonumber
\\\leq&\alpha \rho (1+\alpha L)^{N-1}(\|w_1-w_2\| + \|\phi_1-\phi_2\|) \nonumber
\\&+ (1+\alpha L)\|U_{m+1}(w_1,\phi_1)-U_{m+1}(w_2,\phi_2)\|.
\end{align}
Telescoping the above inequality over $m$ yields
\begin{align}
&\|U_m(w_1,\phi_1)-U_m(w_2,\phi_2)\| +\frac{\rho}{L} (1+\alpha L)^{N-1}(\|w_1-w_2\| + \|\phi_1-\phi_2\|)\nonumber
 \\&\leq (1+\alpha L)^{N-m-2}\Big(\|U_{N-2}(w_1,\phi_1)-U_{N-2}(w_2,\phi_2)\|+\frac{\rho}{L} (1+\alpha L)^{N-1}(\|w_1-w_2\| + \|\phi_1-\phi_2\|)\Big), \nonumber
\end{align}
which, in conjunction with 
\begin{align*}
\|U_{N-2}(w_1,\phi_1)-U_{N-2}(w_2,\phi_2)\|=& \alpha\|\nabla_w^2L_{\mathcal{S}_i}(w_{N-1}^i(w_1,\phi_1),\phi_1)-\nabla_w^2L_{\mathcal{S}_i}(w_{N-1}^i(w_2,\phi_2),\phi_2)\| \nonumber
\\\leq& \alpha\rho (1+\alpha L)^{N-1}(\|w_1-w_2\|+\|\phi_1-\phi_2\|),
\end{align*}
yields that
\begin{align}\label{eq:um1}
\|U_m(w_1,\phi_1)-U_m(w_2,\phi_2)\| \leq& \big(\alpha \rho+\frac{\rho}{L}\big)(1+\alpha L)^{2N-m-3}(\|w_1-w_2\|+\|\phi_1-\phi_2\|) \nonumber
\\& -\frac{\rho}{L}(1+\alpha L)^{N-1} (\|w_1-w_2\|+\|\phi_1-\phi_2\|).
\end{align}
Based on~\Cref{assm:smooth}, we have $\|Q_m(w_2,\phi_2)\|\leq L$ and $\|V_m(w_1,\phi_1)\|\leq M$, which, combined with \cref{eq:um1} and the definition of $R_2$  in~\cref{R1R2R3}, yields
%\alpha\nabla_w^2L_{\mathcal{S}_i}(w_{j}^i(w,\phi),\phi)
\begin{align}\label{R2}
R_2\leq& ML\Big(\alpha \rho+\frac{\rho}{L}\Big)(1+\alpha L)^{2N-m-3}(\|w_1-w_2\|+\|\phi_1-\phi_2\|)\nonumber
\\&-M\rho (1+\alpha L)^{N-1} (\|w_1-w_2\|+\|\phi_1-\phi_2\|).
\end{align}

For  $R_3$, using~\Cref{assm:smooth}, we have 
\begin{align}\label{R3}
R_3\leq& L (1+\alpha L)^{N-m-1} \| \nabla_w L_{\mathcal{D}_i}(w_N^i(w_1,\phi_1),\phi_1)- \nabla_w L_{\mathcal{D}_i}(w_N^i(w_2,\phi_2),\phi_2)\| \nonumber
\\\leq & L^2 (1+\alpha L)^{2N-m-1} (\|w_1-w_2\|+\|\phi_1-\phi_2\|),  
\end{align}
where the last inequality follows from \cref{eq:wn}. Combining~\cref{R1},~\cref{R2} and~\cref{R3} yields
\begin{align}\label{eq:quv}
% \|Q_m(w_1,\phi_1)&U_m(w_1,\phi_1)V_m(w_1,\phi_1)-Q_m(w_2,\phi_2)U_m(w_2,\phi_2)V_m(w_2,\phi_2)\|  \nonumber
 R_1+R_2+R_3\leq & M(\tau-\rho) (1+\alpha L)^{N-1} (\|w_1-w_2\| + \|\phi_1-\phi_2\|)  \nonumber
 \\&+M\rho(1+\alpha L)^{2N-m-2}(\|w_1-w_2\|+\|\phi_1-\phi_2\|)\nonumber
 \\&+L^2 (1+\alpha L)^{2N-m-1} (\|w_1-w_2\|+\|\phi_1-\phi_2\|).  
\end{align}
%Taking summation over~\cref{eq:quv} yields
%\begin{align}
%\end{align} 
 Combining~\cref{eq:qopqop},~\cref{eq:quv}, and using~\cref{eq:wn} and $\alpha <\mathcal{O}(\frac{1}{N})$, we have 
 \begin{align}\label{eq:winnsa}
 \Big\|&\frac{\partial L_{\mathcal{D}_i}( w^i_N, \phi)}{\partial \phi}\Big |_{(w_1,\phi_1)}  - \frac{\partial L_{\mathcal{D}_i}( w^i_N, \phi)}{\partial \phi} \Big |_{(w_2,\phi_2)}\Big\| \nonumber
 \\&\leq \Big(\alpha M(\tau-\rho)  N(1+\alpha L)^{N-1} + \Big(L+\frac{\rho M}{L}\Big)(1+\alpha L)^{2N}\Big)(\|w_1-w_2\|+\|\phi_1-\phi_2\|)\nonumber
\\&\leq\text{\normalfont poly}(M,\rho,\tau,\alpha,L) N(\|w_1-w_2\| + \|\phi_1-\phi_2\|),
% \\&+ L(1+\alpha L)^N(\|w_1-w_2\|+\|\phi_1-\phi_2\|),
 \end{align}
 which, using an approach similar to~\cref{eq:jensen}, finishes the proof of the second item in~\Cref{le:smooth_nonconvex}.
\subsection{Proof of \Cref{th:nonconvex}}
For notational convenience, we define 
\begin{align}\label{eq:definitions}
g_{w}^i(k) &=  \frac{\partial L_{\mathcal{D}_i}( w^i_{k,N},\phi_k)}{\partial {w_k}},\quad g_{\phi}^i(k)  = \frac{\partial L_{\mathcal{D}_i}( w^i_{k,N},\phi_k)}{\partial {\phi_k}}, \nonumber
\\ L_w &=\big(L+\alpha L^2+2M\rho\big) (1+\alpha L)^{2N-1}, \nonumber
\\L_\phi & = \alpha M(\tau-\rho)  N(1+\alpha L)^{N-1} + \left(L+\frac{\rho M}{L}\right)(1+\alpha L)^{2N}.
\end{align}
%Then, the updates of~\Cref{alg:anil} are given by 
%\begin{align}\label{alg:sgdpf}
%w_{k+1}=  w_{k} - \frac{\beta_w}{B}\sum_{i\in\mathcal{B}_k} g_{w}^i(k)\,\text{ and }\, \phi_{k+1}= \phi_{k} - \frac{\beta_\phi}{B}\sum_{i\in\mathcal{B}_k}g_{\phi}^i(k).
%\end{align}
Based on the smoothness properties established in~\cref{eq:j1} and~\cref{eq:winnsa} in the proof of~\Cref{le:smooth_nonconvex}, we have 
\begin{align*}
L^{meta}(w_{k+1},\phi_k) \leq & L^{meta}(w_k,\phi_k) + \left\langle \frac{\partial L^{meta}(w_k,\phi_k)}{\partial w_k}, w_{k+1}-w_k  \right\rangle + \frac{L_w}{2} \|w_{k+1}-w_k\|^2, \nonumber
\\ L^{meta}(w_{k+1},\phi_{k+1}) \leq & L^{meta}(w_{k+1},\phi_k) + \left\langle \frac{\partial L^{meta}(w_{k+1},\phi_k)}{\partial \phi_k}, \phi_{k+1}-\phi_k  \right\rangle + \frac{L_\phi}{2} \|\phi_{k+1}-\phi_k\|^2.
%\\ \leq &L^{meta}(w_k,\phi_k) 
\end{align*}
%\begin{align}\label{eq:gg1}
%L^{meta}(w_{k+1},\phi_{k+1}) \leq&  L^{meta}(w_k,\phi_k) + \left\langle \frac{\partial L^{meta}(w_k,\phi_k)}{\partial w_k}, w_{k+1}-w_k  \right\rangle + \frac{L_w}{2} \|w_{k+1}-w_k\|^2 \nonumber
%\\ &+\left\langle \frac{\partial L^{meta}(w_{k},\phi_k)}{\partial \phi_k}, \phi_{k+1}-\phi_k  \right\rangle + \frac{L_\phi}{2} \|\phi_{k+1}-\phi_k\|^2  \nonumber
%\\&+\left\langle \frac{\partial L^{meta}(w_{k+1},\phi_k)}{\partial \phi_k}-\frac{\partial L^{meta}(w_{k},\phi_k)}{\partial \phi_k}, \phi_{k+1}-\phi_k  \right\rangle. 
%\end{align}
%Based on the Cauchy-Schwarz inequality, we have 
%\begin{align}\label{eq:gg2}
%\Big\langle \frac{\partial L^{meta}(w_{k+1},\phi_k)}{\partial \phi_k}-&\frac{\partial L^{meta}(w_{k},\phi_k)}{\partial \phi_k}, \phi_{k+1}-\phi_k  \Big\rangle \leq  L_\phi\|w_{k+1}-w_k\|\|\phi_{k+1}-\phi_k\|\nonumber
%\\&\leq  \frac{L_\phi}{2}\|w_{k+1}-w_k\|^2 + \frac{L_\phi}{2}\|\phi_{k+1}-\phi_k\|^2.
%\end{align}
%Combining~\cref{eq:gg1} and~\cref{eq:gg2}, we have   
%\begin{align*}
%L^{meta}(w_{k+1},\phi_{k+1}) \leq&  L^{meta}(w_k,\phi_k) + \left\langle \frac{\partial L^{meta}(w_k,\phi_k)}{\partial w_k}, w_{k+1}-w_k  \right\rangle + \frac{L_w+L_\phi }{2}\|w_{k+1}-w_k\|^2 \nonumber
%\\ &+\left\langle \frac{\partial L^{meta}(w_{k},\phi_k)}{\partial \phi_k}, \phi_{k+1}-\phi_k  \right\rangle + L_\phi\|\phi_{k+1}-\phi_k\|^2,  \nonumber
%\end{align*}
%which, in conjunction with the updates in~\cref{alg:sgdpf}, yields 
Adding the above two inequalities, and using an approach similar to~\cref{eq:wellplay}, we have 
\begin{align}\label{eq:noexp}
L^{meta}&(w_{k+1},\phi_{k+1}) \nonumber
\\\leq&  L^{meta}(w_k,\phi_k) - \left\langle \frac{\partial L^{meta}(w_k,\phi_k)}{\partial w_k}, \frac{\beta_w}{B}\sum_{i\in\mathcal{B}_k} g_{w}^i (k) \right\rangle + \frac{L_w+L_\phi }{2} \Big\| \frac{\beta_w}{B}\sum_{i\in\mathcal{B}_k} g_{w}^i (k)\Big\|^2 \nonumber
\\ &-\left\langle \frac{\partial L^{meta}(w_{k},\phi_k)}{\partial \phi_k},\frac{\beta_\phi}{B}\sum_{i\in\mathcal{B}_k}g_{\phi}^i(k)  \right\rangle + L_\phi \Big\|\frac{\beta_\phi}{B}\sum_{i\in\mathcal{B}_k}g_{\phi}^i(k)\Big\|^2.
\end{align}
Let $\mathbb{E}_k=\mathbb{E}(\cdot | w_k,\phi_k)$. Then, conditioning on $w_k,\phi_k$, taking expectation over~\cref{eq:noexp} and using an approach similar to~\cref{eq:diedie}, we have 
\begin{align}\label{eq:finals}
\mathbb{E}_k L^{meta}(w_{k+1},\phi_{k+1})\leq&  L^{meta}(w_k,\phi_k) - \beta_w\left\| \frac{\partial L^{meta}(w_k,\phi_k)}{\partial w_k}\right\|^2 + \frac{(L_w+L_\phi)\beta_w^2}{2B}\mathbb{E}_k\big\|g_{w}^i(k) \big\|^2 \nonumber
\\ &+ \frac{L_\phi+L_w}{2}\beta_w^2 \left\| \frac{\partial L^{meta}(w_k,\phi_k)}{\partial w_k}\right\|^2-\beta_\phi\left\| \frac{\partial L^{meta}(w_{k},\phi_k)}{\partial \phi_k}  \right\|^2 \nonumber
\\&+L_\phi\left(\frac{\beta_\phi^2}{B}\mathbb{E}_k\big\|g_{\phi}^i(k) \big\|^2 + \beta_\phi^2 \left\| \frac{\partial L^{meta}(w_k,\phi_k)}{\partial \phi_k}\right\|^2\right).
\end{align}
%where $(i)$ follows from the fact that $\mathbb{E}_k g_w^i(k)= \frac{\partial L^{meta}(w_k,\phi_k)}{\partial w_k}$ and $\mathbb{E}_k g_\phi^i(k)= \frac{\partial L^{meta}(w_k,\phi_k)}{\partial \phi_k}$.
 
 Our next
step is to upper-bound $\mathbb{E}_k\big\|g_{w}^i(k) \big\|^2$ and $\mathbb{E}_k\big\|g_{\phi}^i(k) \big\|^2$ in~\cref{eq:finals}. Based on the definitions of $g_{w}^i(k)$ in~\cref{eq:definitions} and~\Cref{le:gd_form}, we have
\begin{align}\label{eq:gwi}
\mathbb{E}_k\big\|g_{w}^i(k) \big\|^2 \leq& \mathbb{E}_k\left\| \frac{\partial L_{\mathcal{D}_i}( w^i_{k,N}, \phi_k)}{\partial w_k} \right\|^2= \mathbb{E}_k\left\|\prod_{m=0}^{N-1}(I - \alpha \nabla_w^2L_{\mathcal{S}_i}(w_{k,m}^i,\phi_k)) \nabla_{w} L_{\mathcal{D}_i} (w_{k,N}^i,\phi_k)\right\|^2 \nonumber
\\\leq &  \mathbb{E}_k(1+\alpha L)^{2N} M^2 = (1+\alpha L)^{2N} M^2.
\end{align}
Using an approach similar to~\cref{eq:gwi}, we have
\begin{align}\label{eq:gphi}
\mathbb{E}_k\big\|g_{\phi}^i(k) \big\|^2  \leq& 2\mathbb{E}_k\bigg\|\alpha \sum_{m=0}^{N-1}\nabla_\phi\nabla_w L_{\mathcal{S}_i}(w_{k,m}^i,\phi_k) \prod_{j=m+1}^{N-1}(I-\alpha\nabla_w^2L_{\mathcal{S}_i}(w_{k,j}^i,\phi_k))\nabla_w L_{\mathcal{D}_i}(w_{k,N}^i,\phi_k)\bigg\|^2 \nonumber
\\&+2\|\nabla_\phi L_{\mathcal{D}_i}(w_{k,N}^i,\phi_k)\|^2 \nonumber
\\ \leq& 2\alpha^2 L^2 M^2 \mathbb{E}_k\Big(\sum_{m=0}^{N-1} (1+\alpha L)^{N-1-m}\Big)^2 +2 M^2 \nonumber
\\< & 2M^2 (1+\alpha L)^{N}-1)^2 +2M^2 <  2M^2 (1+\alpha L)^{2N}.
\end{align}
Substituting~\cref{eq:gwi} and~\cref{eq:gphi} into~\cref{eq:finals}, we have
\begin{align}\label{eq:mid}
\mathbb{E}_k L^{meta}(w_{k+1},\phi_{k+1}) \leq& L^{meta}(w_k,\phi_k) - \left(\beta_w-\frac{L_w+L_\phi}{2}\beta_w^2\right)\left\| \frac{\partial L^{meta}(w_k,\phi_k)}{\partial w_k}\right\|^2 \nonumber
\\&+ \frac{(L_w+L_\phi)\beta_w^2}{2B}(1+\alpha L)^{2N}M^2-\big(\beta_\phi -L_\phi\beta_\phi^2 \big)\left\| \frac{\partial L^{meta}(w_{k},\phi_k)}{\partial \phi_k}  \right\|^2\nonumber
\\ & + \frac{2 L_\phi \beta_\phi^2}{B}  (1+\alpha L)^{2N}M^2.
\end{align}
Set  $\beta_w =\frac{1}{L_w+L_\phi}$ and $\beta_\phi =  \frac{1}{2L_\phi}$. Then, unconditioning on $w_k,\phi_k$ in~\cref{eq:mid}, we have
\begin{align*}
\mathbb{E} L^{meta}(w_{k+1},\phi_{k+1}) \leq& \mathbb{E}L^{meta}(w_k,\phi_k) - \frac{\beta_w}{2}\mathbb{E}\left\| \frac{\partial L^{meta}(w_k,\phi_k)}{\partial w_k}\right\|^2 + \frac{\beta_w}{2B}(1+\alpha L)^{2N}M^2\nonumber
\\ &-\frac{\beta_\phi}{2}\mathbb{E}\left\| \frac{\partial L^{meta}(w_{k},\phi_k)}{\partial \phi_k}  \right\|^2 + \frac{\beta_\phi}{B}  (1+\alpha L)^{2N}M^2.
\end{align*}
Telescoping the above equality over $k$ from $0$ to $K-1$ yields
\begin{align}\label{eq:conv}
\frac{\beta_w}{2}\frac{1}{K}&\sum_{k=0}^{K-1}\mathbb{E}\left\| \frac{\partial L^{meta}(w_k,\phi_k)}{\partial w_k}\right\|^2 + \frac{\beta_\phi}{2}\frac{1}{K}\sum_{k=0}^{K-1}\mathbb{E}\left\| \frac{\partial L^{meta}(w_{k},\phi_k)}{\partial \phi_k}  \right\|^2 \nonumber
\\& \leq  \frac{L^{meta}(w_0,\phi_0)-\min_{w,\phi}L^{meta}(w,\phi) }{K}+ \frac{\beta_w+2\beta_\phi}{2B}(1+\alpha L)^{2N}M^2.
\end{align}
Let $\Delta = L^{meta}(w_0,\phi_0)-\min_{w,\phi}L^{meta}(w,\phi)>0$ and let $\xi$ be chosen from $\{0,...,K-1\}$ uniformly at random. Then,~\cref{eq:conv} further yields
\begin{align*}
\mathbb{E}\left\| \frac{\partial L^{meta}(w_\xi,\phi_\xi)}{\partial w_\xi}\right\|^2  \leq& \frac{2\Delta (L_w+L_\phi)}{K} + \frac{1+\frac{L_w+L_\phi}{L_\phi}}{B}(1+\alpha L)^{2N}M^2 \nonumber
\\\mathbb{E}\left\| \frac{\partial L^{meta}(w_\xi,\phi_\xi)}{\partial \phi_\xi}\right\|^2  \leq &\frac{4\Delta L_\phi}{K} + \frac{2+\frac{2L_\phi}{L_w+L_\phi}}{B}(1+\alpha L)^{2N}M^2,
\end{align*}
which, in conjunction with the definitions of $L_w$ and $L_\phi$ in~\cref{eq:definitions} and using $\alpha<\mathcal{O}(\frac{1}{N})$, yields %~\Cref{le:smooth_nonconvex}, yields 
\begin{align}
\mathbb{E}\left\| \frac{\partial L^{meta}(w_\xi,\phi_\xi)}{\partial w_\xi}\right\|^2  \leq& \mathcal{O}\bigg(  \frac{N}{K} + \frac{N}{B}  \bigg), \nonumber
\\\mathbb{E}\left\| \frac{\partial L^{meta}(w_\xi,\phi_\xi)}{\partial \phi_\xi}\right\|^2  \leq &\mathcal{O}\bigg(  \frac{N}{K} + \frac{N}{B}  \bigg).
\end{align}
To achieve an $\epsilon$-stationary point, i.e., {\small $\mathbb{E}\left\| \frac{\partial L^{meta}(w,\phi)}{\partial w}\right\|^2 <\epsilon,\mathbb{E}\left\| \frac{\partial L^{meta}(w,\phi)}{\partial w}\right\|^2 <\epsilon$}, $K$ and $B$ need to be at most $\mathcal{O}(N\epsilon^{-2})$, which, in conjunction with the gradient forms in~\Cref{le:gd_form}, completes  the complexity results.

%\begin{align}
%\\L_w&=(1-\alpha\mu)^{\frac{3N}{2}} L+ \frac{2\rho M}{\mu}(1-\alpha\mu)^{N-1},L_w^\prime= \Big(L+\alpha\rho MN\Big)(1-\alpha\mu)^{N-1}\left(\frac{2L}{\mu}+1\right),\nonumber
%\\  L_\phi&= \left(  L+\frac{2\tau M}{\mu}+ \frac{2LM}{\mu} \left( \frac{\alpha\rho}{1-\alpha\mu}+\frac{2\rho}{\mu}\right) +\frac{L^2}{\mu} \right)(1-\alpha\mu)^{\frac{N-1}{2}}, \nonumber
%\\L_\phi^\prime &=\left(L+\frac{\tau M}{\mu} + \frac{ LM\rho}{\mu^2}+\frac{L^2}{\mu} \right)\left(\frac{2L}{\mu}+1\right).
%\end{align}

\end{document}